\newcommand\blfootnote[1]{%
  \begingroup
  \renewcommand\thefootnote{}\footnote{#1}%
  \addtocounter{footnote}{-1}%
  \endgroup
}
\newtheorem{theorem}{Theorem}[section]
\newtheorem{assumption}{Assumption}[section]
\newtheorem{lemma}{Lemma}[section]
\newtheorem{prop}{Proposition}[section]
\newtheorem{cor}{Corollary}[section]
\newtheorem{remark}{Remark}[section]
\newtheorem{myassumption}{Assumption}[section] 
\newtheorem{mylemma}{Lemma}[section]
\numberwithin{equation}{section}
\let\secmark\S
\setlist[itemize]{itemsep=0pt,parsep=2pt,topsep=2pt}
\setlist[enumerate]{itemsep=0pt,parsep=2pt,topsep=2pt}
\def\E{\mathbb{E}}
\def\R{\mathbb{R}}
\def\I{\mathcal{I}}
\def\F{\mathcal{F}}
\def\C{\mathcal{C}}
\def\Z{\mathcal{Z}}
\def\q{b}
\def\tl{\tilde}
\DeclareMathAlphabet{\mathbbb}{U}{bbold}{m}{n}
\newcommand{\ind}{\mathbbb{1}}
\def\={\mathop{:=}}
\def\1{\mathbf{1}}
\def\nmid{\,|\,}
\def\myqed{\qed}
\def\unitS{\mathbb{S}_1}
\def\S{\mathcal{S}}
\def\K{\mathcal{K}}
\begin{document} 
\markboth{Yu, Wan, \& Sutton}{Asynchronous SA with RL Applications}

\title{Asynchronous Stochastic Approximation with Applications to Average-Reward Reinforcement Learning%
\thanks{This research was supported by DeepMind, Amii, and the Natural Sciences and Engineering Research Council of Canada (NSERC) under grants RGPIN-2024-04939 and DGECR-2024-00312.} \thanks{A shorter version of this work will appear in \emph{SIAM Journal on Control and Optimization.}}}

\author{{\normalsize Huizhen Yu\textsuperscript{a},
Yi Wan\textsuperscript{a}, 
and Richard S. Sutton\textsuperscript{a,b}}}

\date{} 
\maketitle

\blfootnote{\textsuperscript{a}Department of Computing Science, University of Alberta, Canada}
\blfootnote{\textsuperscript{b}Alberta Machine Intelligence Institute (Amii), Canada}

\blfootnote{{Emails:}\,\texttt{janey.hzyu@gmail.com} (HY,\! corresponding author); \texttt{wan6@ualberta.ca} (YW); \texttt{rsutton@ualberta.ca} (RS)}

\vspace*{-0.8cm}

\noindent{\bf Abstract:} 
This paper investigates the stability and convergence properties of asynchronous stochastic approximation (SA) algorithms, with a focus on extensions relevant to average-reward reinforcement learning. We first extend a stability proof method of Borkar and Meyn to accommodate more general noise conditions than previously considered, thereby yielding broader convergence guarantees for asynchronous SA. To sharpen the convergence analysis, we further examine the shadowing properties of asynchronous SA, building on a dynamical systems approach of Hirsch and Bena\"{i}m. These results provide a theoretical foundation for a class of relative value iteration-based reinforcement learning algorithms---developed and analyzed in a companion paper---for solving average-reward Markov and semi-Markov decision processes.

\medskip 
\noindent{\bf Keywords:}
asynchronous stochastic approximation, stability and convergence, shadowing properties, average-reward reinforcement learning, Markov and semi-Markov decision processes 


\section{Introduction}

Asynchronous stochastic approximation (SA) methods play a central role in reinforcement learning (RL), particularly in model-free algorithms for solving Markov and semi-Markov decision processes (MDPs and SMDPs). These processes are versatile models for sequential decision-making under uncertainty in discrete or continuous time, with applications in robotics, finance, operations research, and beyond \cite{Put94}. This work is motivated by the need to advance RL techniques for average-reward MDPs and SMDPs, where the objective is to optimize sustained long-term performance---a setting in which existing asynchronous SA theory remains inadequate. We focus on establishing general stability and convergence results for asynchronous SA under broad noise conditions arising in average-reward RL. These theoretical developments lay the foundation for a companion paper \cite{YWS25} that analyzes and designs average-reward RL algorithms.

The general asynchronous SA framework we consider is based on the seminal works of Borkar \cite{Bor98,Bor00} and Borkar and Meyn \cite{BoM00}. The algorithms operate in a finite-dimensional space $\R^d$ and, given an initial vector $x_0 \in \R^d$, iteratively compute $x_n \in \R^d$ for $n \geq 1$ using an asynchronous scheme. This asynchrony involves selective updates to individual components at each iteration. Specifically, at the start of iteration $n \geq 0$, a nonempty subset $Y_n \subset \I : = \{ 1, 2, \ldots, d\}$ is randomly selected according to some mechanism. The $i$th component $x_n(i)$ of $x_n$ is then updated as 
\begin{equation} \label{eq-gen-form}
x_{n+1}(i) = x_n(i) + \beta_{n,i} \left(h_i(x_n) + \omega_{n+1} (i) \right), \quad \text{if} \ i \in Y_n,
\end{equation}
or remains unchanged, $x_{n+1}(i) = x_n(i)$, if $i \not\in Y_n$.
This process involves a diminishing random stepsize $\beta_{n,i}$, a Lipschitz continuous function $h : \R^d \to \R^d$ expressed as $h = (h_1, \ldots, h_d)$, and a random noise term $\omega_{n+1} = (\omega_{n+1}(1), \ldots, \omega_{n+1}(d)) \in \R^d$. As will be detailed later, the stepsizes and the choices of the sets $Y_n$ satisfy conditions similar to those introduced by Borkar \cite{Bor98,Bor00}, while the function $h$ meets a stability criterion introduced by Borkar and Meyn \cite{BoM00}. This criterion, expressed via a scaling limit of $h$, imposes conditions on the solutions to the ordinary differential equation (ODE) $\dot{x}= h(x)$ when they are far from the origin, thereby ensuring their stability and other properties.

Building on the above framework, this paper makes two main contributions: first, a stability result for asynchronous SA under more general noise conditions than previously considered; and second, a convergence analysis based on shadowing properties. The former not only addresses key gaps in the literature on average-reward RL but also strengthens the theoretical foundation for extending such methods to broader settings. The latter sharpens existing convergence results in asynchronous SA and, in turn, yields stronger convergence guarantees for RL algorithms that build on this theory. We now review the relevant background and explain how these results build upon and extend prior work.

\subsection{Stability: Prior Work and Our Result}
Stability, referring to the boundedness of the iterates $\{x_n\}$, is crucial for convergence analysis of SA algorithms. Various approaches exist for ensuring stability (see \cite[Chap.\ 4]{Bor23}, \cite{KuY03}). We focus on the Borkar--Meyn stability criterion due to its generality and its suitability for average-reward RL, where the mappings underlying the algorithms generally lack contraction or nonexpansion properties---unlike in discounted or total reward settings (see, e.g., \cite{Tsi94,ABB02,YuB13} and \cite[Chaps.\ 4 and 5]{BeT96}). An alternative approach for stability, also applied in RL, is to relate the original algorithm \eqref{eq-gen-form} to a hypothetical, stable counterpart, such as a scaled iteration~\cite{ABB02} or projected iterates \cite{RBQ20}. However, linking the original iterates to the hypothetical ones is often challenging, involving nonexpansive mappings \cite[Lem.\ 2.1]{ABB02} or requiring bounded differences between original and hypothetical iterates \cite[Assum.\ S5, Sec.\ IV.A]{RBQ20}---conditions difficult to satisfy in average-reward RL. In contrast, the Borkar--Meyn stability criterion does not rely on such conditions.

Our first main result is a stability proof for asynchronous SA in the Borkar--Meyn framework under general conditions on the noise terms $\{\omega_n\}$, which arise naturally in average-reward RL for SMDPs. Specifically, the noise is decomposed into a centered component and a biased component as $\omega_{n+1} = M_{n+1} + \epsilon_{n+1}$, where $\{M_{n+1}\}$ is a martingale difference sequence subject to conditional variance conditions, and $\epsilon_{n+1}$ satisfies $\| \epsilon_{n+1} \| \leq \delta_{n+1} (1 + \| x_n \|)$ with $\delta_{n+1} \to 0$ almost surely (a.s.)\ as $n \to \infty$ (see Assum.~\ref{cond-ns}). In average-reward SMDP applications, the biased term $\epsilon_{n+1}$ arises because the function $h$ depends on expected holding times (i.e., expected durations of state transitions), which are unknown SMDP parameters that RL algorithms can only estimate with increasing accuracy over time. The conditions on $\{M_{n+1}\}$ considered here are also weaker than those typically assumed, accommodating RL settings without a priori lower bounds on expected holding times (see Sec.~\ref{sec-2.3} and \cite{YWS25}).

To the best of our knowledge, these general noise conditions have not previously been treated in the stability analysis of asynchronous SA using the Borkar--Meyn stability criterion (although conditions of this type are standard in the synchronous case \cite{Bor23,KuY03}). Borkar and Meyn provided a stability proof for the \emph{synchronous} setting~\cite{BoM00}, but the stability of \emph{asynchronous} SA was asserted in their theorem \cite[Thm.\ 2.5]{BoM00} without an explicit proof. Moreover, their noise conditions are stronger than ours, requiring $M_{n+1}$ to be generated from $x_n$ and i.i.d.\ random disturbances via a function that is uniformly Lipschitz continuous in $x_n$, as detailed in Rem.~\ref{rmk-cond-noise}. (It should be mentioned, however, that their theorem \cite[Thm.\ 2.5]{BoM00} pertains to a general distributed computing framework with communication delays, which we do not consider.) Within the Borkar--Meyn framework, Bhatnagar~\cite[Thm.\ 1]{Bha11} provided a stability proof for asynchronous SA (with bounded communication delays and $\epsilon_{n+1} = 0$) under the condition that $\| M_{n+1} \| \leq K (1 + \| x_n\|)$ for all $n \geq 0$, for some deterministic constant $K$. This condition is more restrictive than the standard condition on martingale-difference noises, which itself can be too strong for SMDPs (see Sec.~\ref{sec-2.3}).

One prominent class of RL algorithms where such SA results play a central role involves learning methods based on \emph{relative value iteration} (RVI) \cite{Sch71,ScF77,Whi63} for solving average-reward problems. In a seminal paper~\cite{ABB01}, Abounadi, Bertsekas, and Borkar first applied the Borkar--Meyn stability criterion to develop an asynchronous stochastic RVI algorithm, RVI Q-learning, for finite-space MDPs under an average-reward criterion. Their work has since been extended by Wan, Naik, and Sutton \cite{WNS21a,WNS21b} to a broader algorithmic framework and to hierarchical control in average-reward MDPs, as well as by the authors \cite{WYS24} to relax the MDP/SMDP model conditions from unichain to weakly communicating assumptions. These developments in RL all rely on asynchronous SA theory---alongside domain-specific analyses involving the structure of MDPs/SMDPs and the RVI approach---to establish convergence, making the stability of the SA process a foundational issue.

However, the theoretical analyses of RVI Q-learning in the prior studies \cite{ABB01,WNS21a,WNS21b} inherit limitations stemming from gaps in the underlying asynchronous SA theory regarding stability. Although these studies showed that the Borkar--Meyn stability criterion is satisfied by the functions $h$ (cf.\ \eqref{eq-gen-form}) associated with their respective algorithms, for the stability of these asynchronous algorithms, the original study of RVI Q-learning \cite{ABB01} relied on an unproven assertion in \cite[Thm.\ 2.5]{BoM00}, while the later works \cite{WNS21a,WNS21b} argued incorrectly. Bhatnagar's stability result \cite[Thm.~1]{Bha11} partially addresses some of these issues, under the assumption of bounded random rewards per stage. However, it does not apply to one algorithm from \cite{WNS21b} for hierarchical control, which aims to solve an associated average-reward SMDP where the noise conditions required in \cite[Thm.~1]{Bha11} are too restrictive.

Our stability result for asynchronous SA presented in this paper (see Thm.~\ref{thm-1}) not only (a) resolves the open stability question in existing RVI Q-learning algorithms from \cite{ABB01,WNS21a,WNS21b}, but also (b) solidifies the groundwork for further extensions of RVI Q-learning. Both points (a) and (b) were already demonstrated in our recent work \cite{WYS24} mentioned earlier, where we applied the SA results established here. In the companion paper \cite{YWS25}, which builds on this work, we further support point (b) by developing a new, generalized RVI Q-learning algorithm for SMDPs.

As for the means by which we obtain our stability result, the main strategy---beyond several technical subtleties---is to use stopping-time techniques to construct auxiliary processes with desirable properties, enabling the application of Borkar and Meyn's reasoning developed in the context of synchronous SA \cite{BoM00}. This approach allows us to handle various types of noise terms in a unified way, while retaining the structure of their original stability proof.

\subsection{Shadowing: Prior Work and Our Result}
Once stability is ensured, a judicious choice of interpolation for the iterates $\{x_n\}$ allows us to apply existing SA theory and conclude that $\{x_n\}$ converges a.s.\ to a compact, connected subset of $\R^d$ with invariance properties with respect to (w.r.t.)\ the ODE $\dot{x}= h(x)$ (see Thm.~\ref{thm-2}). This limit set is typically larger than the $\omega$-limit set of a single ODE solution \cite{Ben96,BeH96}.

An approach developed by Hirsch and Bena\"{i}m narrows this limit set to an $\omega$-limit set of the ODE by using the concept of `shadowing' from dynamical systems. In the SA context, shadowing refers to the existence of an ODE solution whose forward trajectory asymptotically coincides with an interpolated trajectory of $\{x_n\}$. The notion of shadowing was first introduced and studied by Bowen~\cite{Bow75} for perturbed orbits in dynamical systems. Hirsch~\cite{Hir94} later proved a shadowing theorem, which has since found significant applications in SA by enabling sharper characterizations of the limit set; see Bena\"{i}m and Hirsch~\cite{BeH96} and Bena\"{i}m \cite{Ben96,Ben99}.

These prior studies focused primarily on synchronous SA algorithms. In this work, we build on their results to analyze the shadowing properties of asynchronous SA under additional conditions on the biased noise terms, stepsizes, and asynchrony (see Thm.~\ref{thm-3} and Sec.~\ref{sec-shad}). 
We tailor the convergence theorem to a setting relevant to average-reward RL applications, where the focus is on convergence to equilibrium points, but our proof arguments apply more broadly. This constitutes our second main contribution and, to the best of our knowledge, provides the first shadowing analysis for asynchronous SA in the Borkar--Meyn framework. 

\subsection{Summary of Contributions and Paper Organization}

To summarize the main contributions of this paper: 
\begin{itemize}[leftmargin=0.72cm,labelwidth=!]
\item[(i)] We extend Borkar and Meyn's stability method \cite{BoM00} to address more general noise conditions by employing stopping-time techniques. 
This result (Thm.~\ref{thm-1}), combined with existing SA theory, leads to broader convergence guarantees for asynchronous SA (Thm.\ \ref{thm-2}).
\item[(ii)] We further sharpen the convergence analysis for asynchronous SA algorithms by analyzing their shadowing properties (see Sec.\ \ref{sec-shad}), building on a dynamical systems approach of Hirsch and Bena\"{i}m \cite{Hir94,BeH96,Ben96,Ben99}. This yields enhanced convergence results for asynchronous SA (Thm.~\ref{thm-3}), providing sufficient conditions to ensure convergence to a unique (path-dependent) equilibrium within a compact and connected set of equilibria of the associated ODEs.
\end{itemize}
We apply these results in \cite{YWS25} to develop and analyze a generalized RVI Q-learning algorithm for SMDPs.
An important future research direction is to extend this work to distributed computation frameworks that account for communication delays.

This paper is organized as follows. Section~\ref{sec-sa} introduces the asynchronous SA framework, presents our stability and convergence results, and discusses their applications to average-reward RL. The stability proof and basic convergence proof are given in Sec.~\ref{sec-sa-proofs}, while the shadowing properties are analyzed in Sec.~\ref{sec-shad}. An alternative stability proof under a stronger noise condition from prior works \cite{Bor98,BoM00} is provided in the \hyperref[app-alt-stab]{Appendix}. Section~\ref{sec-conc-rmks} concludes with a brief summary.

\vspace*{0.1cm}
\noindent {\bf Notation:} In this paper, $\|\cdot\|$ denotes either the Euclidean norm $\| \cdot\|_2$ or the infinity norm $\| \cdot\|_\infty$, depending on the context. Additionally, $\ind \{ E\}$ denotes the indicator function of an event $E$, and $\E[\,\cdot\,]$ denotes expectation. For $a, b \in \R$, $a \vee b \= \max \{ a, b\}$ and $a \wedge b \= \min \{ a, b\}$. For convergence of functions, we write $\overset{p}{\to}$ for pointwise convergence and $\overset{u.c.}{\to}$ for uniform convergence on compact subsets of the domain.

\section{Asynchronous SA: Stability and Convergence} 
 \label{sec-sa}

We start with a detailed description of the asynchronous SA framework outlined earlier in \eqref{eq-gen-form}.

\subsection{Algorithmic Framework} \label{sec-2.1}
Let $\alpha_n > 0$, $n \geq 0$, be a given positive sequence of diminishing stepsizes.
Consider an asynchronous SA algorithm of the following form: At iteration $n \geq 0$, choose a subset $Y_n \not= \varnothing$ of $\I$. For $i \not\in Y_n$, let $x_{n+1}(i) = x_n(i)$; and for $i \in Y_n$, with $\nu(n,i) \= \sum_{k=0}^{n-1} \ind \{ i \in Y_k\}$ denoting the cumulative number of updates to the $i$th component prior to iteration $n$, let
\begin{equation} \label{eq-alg0}
    x_{n+1}(i)  = x_n(i)  + \alpha_{\nu(n,i)} \big( h_i (x_n) + M_{n+1}(i) + \epsilon_{n+1}(i) \big), \qquad i \in Y_n.
\end{equation}  
The algorithm is associated with an increasing family $\{\F_n\}_{n \geq 0}$ of $\sigma$-fields, where 
$$\F_n \supset \sigma (x_m, Y_m, M_m, \epsilon_m; m \leq n).$$ 
The following conditions will be assumed throughout. 

\begin{assumption}[Conditions on function $h$] \label{cond-h} \hfill 
\begin{enumerate}[leftmargin=0.75cm,labelwidth=!] 
\item[{\rm (i)}] $h$ is Lipschitz continuous: for some $L_h \geq 0$, 
$\| h(x) - h(y) \| \leq L_h \| x - y\|$ for all $x, y \in \R^{d}$.
\item[{\rm (ii)}] Define $h_c(x) \= h(cx)/c$ for $c \geq 1$. As $c \uparrow \infty$, $h_c \overset{u.c.}{\to} h_\infty : \R^d \to \R^d$.
\item[\rm (iii)] The ODE
$ \dot{x}(t) = h_\infty (x(t)) $
has the origin as its unique globally asymptotically stable equilibrium.
\end{enumerate}
\end{assumption}

\begin{assumption}[Conditions on noise terms $M_n, \epsilon_n$] \label{cond-ns} For all $n \geq 0$, we have:
\begin{enumerate}[leftmargin=0.7cm,labelwidth=!] 
\item[{\rm (i)}] 
$\E [ \| M_{n+1} \| ] < \infty$, $\E [ M_{n+1} \mid \F_n ] = 0$ and $ \E[ \| M_{n+1} \|^2 \mid \F_n ] \leq K_n (1 +\| x_n \|^2)$ a.s., for some $\F_n$-measurable $K_n \geq 0$ with $\sup_n K_n < \infty$ a.s. 
\item[{\rm (ii)}] 
$\| \epsilon_{n+1} \| \leq \delta_{n+1} ( 1 + \| x_n \|)$, where $\delta_{n+1}$ is $\F_{n+1}$-measurable and $\delta_n \overset{n \to \infty}{\to} 0$ a.s. 
\end{enumerate}
\end{assumption} 

\begin{assumption}[Stepsize conditions]  \label{cond-ss} \hfill
\begin{enumerate}[leftmargin=0.7cm,labelwidth=!] 
\item[{\rm (i)}] $\sum_n \alpha_n = \infty$, $\sum_n \alpha_n^2 < \infty$, and 
$\alpha_{n+1} \leq \alpha_n$ for all $n$ sufficiently large.
\item[{\rm (ii)}] For $x \in (0,1)$, 
$\sup_n \frac{\alpha_{[ x n]}}{ \alpha_n} < \infty$,
where $[x n]$ denotes the integral part of $xn$.
\item[{\rm (iii)}] For $x \in (0,1)$, as $n \to \infty$, $\frac{ \sum_{k=0}^{[ y n ]} \alpha_k }{ \sum_{k=0}^{n} \alpha_k} \to 1$ uniformly in $y \in [x, 1]$.
\end{enumerate}
\end{assumption} 

For $x > 0$ and $n \geq 0$, define $N(n,x) \= \min  \left\{ m > n : \sum_{k = n}^m \alpha_k \geq x \right\}$.
\begin{assumption}[Asynchronous update conditions] \label{cond-us} \hfill 
\begin{enumerate}[leftmargin=0.7cm,labelwidth=!]
\item[{\rm (i)}]  For some deterministic $\Delta > 0$, 
$\liminf_{n \to \infty} \nu(n,i)/n  \geq \Delta$ a.s., for all $i \in \I$.
\item[\rm (ii)] For each $x > 0$, the limit $\lim_{n \to \infty} \frac{ \sum_{k = \nu(n,i)}^{\nu(N(n,x), i)} \alpha_k}{ \sum_{k = \nu(n,j)}^{\nu(N(n,x), j)} \alpha_k}$ exists a.s., for all $i, j \in \I$.
\end{enumerate}
\end{assumption} 

\begin{remark} \label{rmk-cond-h} \rm
Assumption~\ref{cond-h} on $h$ is the Borkar--Meyn stability criterion \cite{BoM00}. Note that the functions $h_c$ and $h_\infty$ are also Lipschitz continuous with modulus $L_h$, and $h_\infty(0) = 0$. The required uniform convergence condition $h_c \overset{u.c.}{\to} h_\infty$ is equivalent to $h_c \overset{p}{\to} h_\infty$. For the ODE $\dot{x}(t) = h(x(t))$, this assumption not only implies the boundedness of every solution trajectory $x(t)$ for $t \geq 0$, but also guarantees the existence of at least one equilibrium point, as we will show in Lem.~\ref{lem-Eh}. \myqed
\end{remark}

\begin{remark} \label{rmk-cond-noise}  \rm
Assumption~\ref{cond-ns}(i) relaxes the standard condition on the martingale difference noise terms $\{M_n\}$, which uses a deterministic constant $K$ instead of the $K_n$'s in the conditional variance bounds. We will discuss in Sec.~\ref{sec-2.3} the necessity of each condition in the context of our RL applications.

The noise terms for asynchronous SA in Borkar \cite{Bor98} and Borkar and Meyn \cite{BoM00} satisfy Assum.~\ref{cond-ns} but are more specific: $\epsilon_{n+1} = 0$ and $M_{n+1} = F(x_{n}, \zeta_{n+1})$, where $\{\zeta_n\}$ are exogenous and i.i.d.\ random variables, and $F$ is a function uniformly Lipschitz in its first argument. (Stability of the algorithm is assumed in \cite{Bor98} and asserted in \cite[Thm.\ 2.5]{BoM00} without explicit proof.) In the \hyperref[app-alt-stab]{Appendix}, we provide an alternative stability proof for this specific form of martingale-difference noises, which is slightly simpler than our stability proof under the more general Assum.~\ref{cond-ns}. \myqed
\end{remark}

\begin{remark} \label{rmk-cond-part-async}  \rm
Assumptions~\ref{cond-ss} and~\ref{cond-us} regarding stepsizes and asynchrony are nearly identical to those used in \cite{ABB01} for RVI Q-learning. They accommodate commonly used stepsizes, such as $1/ n$ or $1/ (n \ln n)$, and typical RL scenarios where $\I$ is the space of state-action pairs and the sets $Y_n$ are selected based on Markov chains on $\I$ induced by RL agents (see \cite[Ex.~3]{WYS24} for an example). These conditions, with some minor variations in Assum.~\ref{cond-us}(ii), were originally introduced in a broader asynchronous SA context by Borkar \cite{Bor98,Bor00}, specifically for the stepsize structure $\alpha_{\nu(n,i)}$. Their purpose is to impose partial asynchrony so that the asymptotic behavior of the asynchronous algorithm aligns, on average, with that of a synchronous counterpart, thereby facilitating analysis. (This is evident from the detailed analysis in \citep{Bor98,Bor00}, which shows that the limits in Assum.~\ref{cond-us}(ii) must in fact equal $1$; see also our Lems.~\ref{lem4},~\ref{lem-cvg-2},~\ref{lem-shad2}(i).)

This partial asynchrony is crucial for our average-reward RL applications. While Q-learning achieves stability and convergence in fully asynchronous schemes for both discounted  and certain undiscounted total-reward MDPs \cite{Tsi94,YuB13}, these results do not extend to the average-reward Q-learning algorithms of interest, as their associated mappings are generally neither contractive nor nonexpansive. \myqed
\end{remark}

\subsection{Results} \label{sec-2.2}
We now present our stability and convergence theorems for algorithm (\ref{eq-alg0}). 
The stability theorem, our first main result in this section, parallels Borkar~\cite[Thm.\ 4.1]{Bor23} for synchronous algorithms. Its proof, given in Sec.~\ref{sec-sa-proofs}, extends the Borkar--Meyn stability argument through a stopping-time construction, as noted earlier in the introduction.

\begin{theorem}
\label{thm-1}
For algorithm (\ref{eq-alg0}) under Assums.~\ref{cond-h}--\ref{cond-us}, $\{x_n\}$ is bounded a.s.
\end{theorem}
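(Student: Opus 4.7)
The plan is to adapt the Borkar--Meyn scaling argument \cite{BoM00,Bor23} to the asynchronous setting with the more general noise of Assum.~\ref{cond-ns}, using stopping-time localization to handle the random conditional variance bound $K_n$ and the vanishing biased term $\delta_n$. I would first build a continuous-time interpolation that respects the asynchronous update pattern: for each component $i$, its ``local clock'' advances by $\alpha_{\nu(n,i)}$ only when $i \in Y_n$, and Assum.~\ref{cond-us}(ii) together with Assum.~\ref{cond-ss} guarantees that these per-component local clocks remain proportional (up to multiplicative constants) over long time windows. This alignment is what eventually permits comparison of the asynchronous iterate on a window of length $T$ with a trajectory of the ODE $\dot{x}(t) = h(x(t))$.

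Second, I would localize the noise. On the a.s.\ event $A_R = \{\sup_n K_n \leq R\}$, whose probability tends to $1$ as $R \to \infty$ by Assum.~\ref{cond-ns}(i), one has the uniform bound $\E[\|M_{n+1}\|^2 \mid \F_n] \leq R(1 + \|x_n\|^2)$; similarly, since $\delta_n \to 0$ a.s., for every $\eta > 0$ there is a (random) index $N_\eta$ beyond which $\|\epsilon_{n+1}\| \leq \eta(1 + \|x_n\|)$. It thus suffices to prove a.s.\ boundedness on each $A_R$. On this event, standard maximal/quadratic-variation estimates for square-integrable martingales combined with $\sum_n \alpha_n^2 < \infty$ yield, on any time window $[t_n, t_n + T]$ with $n \geq N_\eta$, an aggregated martingale-noise bound of the form $o(1)\cdot \max(1,\|x_n\|)$ and a biased-noise bound of order $\eta T \cdot \max(1,\|x_n\|)$.

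Third, with the noise controlled, I would execute the scaling argument. Fix $T > 0$ large enough that every trajectory of $\dot{x}(t) = h_\infty(x(t))$ starting in the unit ball $\{\|x\| \leq 1\}$ satisfies $\|x(T)\| \leq 1/4$; this is possible by Assum.~\ref{cond-h}(iii) together with the compactness of the unit ball and the Lip.\ cont.\ of $h_\infty$. Partition the global time axis into successive windows $[T_k, T_{k+1}]$ of length $\approx T$, set $r(k) = \max(1,\|x_{n(T_k)}\|)$, and consider the rescaled interpolated iterate $\hat{x}(\cdot) = x(\cdot)/r(k)$ on that window. Using the Lip.\ cont.\ of $h$, the scaling-limit property $h_c \to h_\infty$ uniformly on compacts (Assum.~\ref{cond-h}(ii)), the per-component asynchrony alignment of Step~1, and the noise bounds of Step~2, a Gronwall-type estimate would show that $\hat{x}(\cdot)$ is uniformly close on $[T_k, T_{k+1}]$ to a trajectory of $\dot{x}(t) = h_\infty(x(t))$ starting in the unit ball, so that $\|x_{n(T_{k+1})}\| \leq r(k)/2$ once $k$ is large and $r(k) > 1$. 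This halving forces $r(k)$ to settle inside a deterministic bounded set, which gives a.s.\ boundedness of $\{x_n\}$.

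The main obstacle I expect lies in the interaction of Steps~2 and~3: the conditional-variance bound involves the random quantities $K_n$ and $\|x_n\|^2$, so to pass to a clean deterministic martingale estimate on each window one must stop not only on $K_n$ and $\delta_n$ but also on a local growth event for $\|x_n\|$ within the window, and these stoppings must interact properly with the per-component local clocks so that the rescaled increment over each window is still $o(1)$ after all truncations. This is precisely where the argument departs from the bounded-noise, synchronous Borkar--Meyn proof and justifies the ``stopping-time techniques'' highlighted in the paper's contribution~(i).
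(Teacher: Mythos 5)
Your high-level plan is exactly the paper's strategy: Borkar--Meyn scaling on windows of approximate length $T$, stopping-time localization to handle the random $K_n$ and vanishing $\delta_n$, then the halving argument. However, there are three concrete gaps between the sketch and a working proof.

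First, the reduction ``prove a.s.\ boundedness on each $A_R = \{\sup_n K_n \leq R\}$'' is not a valid localization for conditional moment bounds, because $A_R$ is not $\F_n$-measurable for any finite $n$; conditioning on $A_R$ would destroy the martingale-difference structure of $\{M_{n+1}\}$. The paper instead introduces, \emph{within each window} $[T_n, T_{n+1})$, stopping times $k_n = k_{n,1} \wedge k_{n,2}$, where $k_{n,1}$ is the first index with $K_k > N$ or with the stepsize ratio $\max_{i}\q(k,i) > C$ (not $\|x_k\|$, as you guessed), and $k_{n,2}$ is the first index with $\delta_k > \bar a$; this yields $\F_k$-measurable indicators $\ind\{k < k_n\}$, hence auxiliary processes $\tilde x^n_N(\cdot)$ with bona fide conditional bounds (Lem.~\ref{lem2a}). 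The original $\hat x^n(\cdot)$ eventually coincides path-by-path with $\tilde x^n_{N(\omega)}(\cdot)$ for a sample-path-dependent $N(\omega)$ (Lem.~\ref{lem3}). You anticipate the interaction of stopping with the local clocks but propose the wrong stopping events; in particular, a stopping on local growth of $\|x_n\|$ is unnecessary because the boundedness of the scaled process is obtained directly via the discrete Gronwall inequality giving the $L^2$ bound of Lem.~\ref{lem2}(i), after which the martingale sum converges (Lem.~\ref{lem2}(ii)). You assert the resulting $o(1)\cdot\max(1,\|x_n\|)$ noise bound without this $L^2$ step, which is exactly the bound one must establish.

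Second, the limiting ODE in the asynchronous setting is not $\dot{x}(t) = h_\infty(x(t))$ but $\dot{x}(t) = \lambda^*(t)\,h_\infty(x(t))$ for a \emph{family} of diagonal time-dependent matrices $\lambda^*$. The paper invests Lems.~\ref{lem1}--\ref{lem4} in showing that, a.s., all limit points $\lambda^*$ of the piecewise-constant trajectory $\lambda(\cdot)$ in the compact space $\Upsilon$ take the scalar form $\rho(t) I$ with $\tfrac{1}{d} \leq \rho(t) \leq C$, and Lems.~\ref{lem5}--\ref{lem7} in establishing a contraction time $T$ that works uniformly over all such $\lambda^*$ (and over the nearby $h_c$, $\lambda'$). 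Choosing $T$ so that solutions of $\dot{x}(t) = h_\infty(x(t))$ alone shrink to radius $1/4$ by time $T$ is not enough; one needs the time-rescaled version (Lem.~\ref{lem5}), the joint continuity of solutions in $(\lambda, h, x)$ (Lem.~\ref{lem6-alt}), and the resulting uniform estimate over a neighborhood $D(\Upsilon^*)$ (Lems.~\ref{lem7-alt}, \ref{lem7}). Your Step~1 gestures at the proportionality of local clocks, which is what underlies $\lambda^*(t) = \rho(t)I$, but your Step~3 then compares directly to $\dot{x} = h_\infty(x)$, dropping the $\lambda^*$; that is where the asynchronous argument would fail to close without the extra lemmas.

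Third, a smaller point: for the stability proof the paper uses the \emph{deterministic} interpolation $t(n)=\sum_{k<n}\alpha_k$ (not a per-component local clock or the random $\tilde\alpha_n$ used later for convergence). The asynchrony enters through the matrix coefficients $\Lambda_k$ inside the dynamics, not through the time axis. Both choices can be made to work, but your proposal should be clear that the interpolation must be consistent with whichever compactness and limit-point arguments for $\lambda(\cdot)$ you are relying on.
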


Combining Thm.\ \ref{thm-1} with established SA theory \cite{Bor98,Bor23} leads to our convergence theorem. This theorem characterizes the asymptotic behavior of both the individual iterates $\{x_n\}$ and a continuous trajectory formed by them, which we introduce first. 

Let $\C ((-\infty, \infty); \R^{d})$ (resp.\ $\C ([0, \infty); \R^{d})$) denote the space of all $\R^{d}$-valued continuous functions on $\R$ (resp.\ $\R_+$), equipped with a metric such that convergence in this space corresponds to uniform convergence on compact intervals. These spaces are complete. By the Arzel\'{a}--Ascoli theorem, a family of functions in either space is relatively compact (i.e., has compact closure) if and only if these functions are equicontinuous and pointwise bounded (see \cite[App.\ A.1]{Bor23} or \cite[Chap.\ 4.2.1]{KuY03}).

Define a linearly interpolated trajectory $\bar x(t)$ from $\{x_n\}$ with aggregated stepsizes 
$$\textstyle{\tl \alpha_n = \sum_{i \in Y_n}  \alpha_{\nu(n, i)}}, \quad n \geq 0,$$ 
as the elapsed times between consecutive iterates. Specifically, for $n \geq 0$, let $\tl t(n) \= \sum_{k=0}^{n -1} \tl \alpha_k$ with $\tl t(0) \= 0$, and define $\bar x(\tl t(n)) \= x_n$ and 
\begin{equation} 
 \bar x(t) \=  x_n +  \tfrac{t - \tl t(n)}{\tl t(n+1) - \tl t(n)} \, ( x_{n+1} - x_n), \ \ \,  t \in [\tl t(n), \tl t(n+1)]. \label{eq-cont-traj2}
\end{equation} 
We refer to the temporal coordinate of $\bar x(t)$ as the `ODE-time.' For the results below, we extend $\bar x(\cdot)$ to a function in $\C ((-\infty, \infty); \R^{d})$ by setting $\bar x(\cdot) \equiv x_0$ on $(-\infty, 0)$.

\begin{theorem}
\label{thm-2}
For algorithm (\ref{eq-alg0}) under Assums.~\ref{cond-h}--\ref{cond-us}, almost surely:
\begin{itemize}[leftmargin=0.65cm,labelwidth=!]
\item[\rm (i)] The sequence $\{x_n\}$ converges to a (possibly sample path-dependent) compact, connected, internally chain transitive,
\footnote{Recall from \cite[Sec.\ 5.1, p.\ 21]{Ben99} and \cite[Chap.\ 2.1, p.\ 16]{Bor23} that a compact invariant set $D$ is called \emph{internally chain transitive} if for any $x, y \in D$ and any $\epsilon, T > 0$, there exists a finite sequence of points in $D$ starting at $x$ and ending in $y$: $x$, $x_0, x_1, \ldots, x_{n-1}, x_n = y$, such that $\| x - x_0 \| < \epsilon$ and, for each $0 \leq i < n$, the ODE trajectory starting from $x_i$ comes within $\epsilon$ distance of $x_{i+1}$ at a time greater than $T$. Internally chain transitive sets and related concepts, introduced by Bowen~\cite{Bow75} and Conley \cite{Con78}, are fundamental in dynamical systems theory. Bena\"{i}m \cite{Ben96} was the first to use them to characterize the asymptotic behavior of SA algorithms. Although our proofs do not explicitly involve these sets, they enter through existing convergence theory and provide the theoretical underpinning.}
invariant set $D$ of the ODE $\dot{x}(t) = h(x(t))$.
\item[\rm (ii)] With $\bar x(\cdot)$ defined as above, the family $\{\bar x(t + \cdot)\}_{t \in \R}$ is relatively compact in $\C((-\infty, \infty); \R^{d})$, and any limit point of $\bar x(t + \cdot)$ as $t \to \infty$ is a solution of the ODE $\dot{x}(t) = \tfrac{1}{d} h(x(t))$ that remains entirely in $D$.
\end{itemize}
\end{theorem}

The proof of Thm.~\ref{thm-2} will be given in Sec.~\ref{sec-sa-proofs}.
Part (i) of Thm.~\ref{thm-2} parallels \cite[Thm.\ 2.1]{Bor23} for synchronous algorithms, while part (ii) is similar to \cite[Thm.\ 6.1]{Bor23} for asynchronous algorithms, given stability. A key distinction from \cite[Thm.\ 6.1]{Bor23} is our choice of the stepsizes that define the ODE-time in $\bar x(\cdot)$. The stepsize choice in \cite[Thm.\ 6.1]{Bor23} appears less advantageous \emph{under Assums.~\ref{cond-ss} and \ref{cond-us}}, as it not only results in multiple limiting ODEs but also complicates their characterization (see Rem.~\ref{rmk-4} for a detailed discussion).
Here, we obtain a unique limiting ODE (Thm.~\ref{thm-2}(ii)), which, beyond technical convenience, facilitates further analysis of the shadowing properties of $\bar x(\cdot)$, as will be discussed below. 

We now focus on a scenario relevant to RL, where the goal is for the algorithms to converge to the equilibrium set of the ODE $\dot{x}(t) = h(x(t))$, defined as $E_h \= \{ x \in \R^d \mid h(x) = 0 \}$. Before proceeding, we note an implication of Assum.~\ref{cond-h} for $E_h$: 

\begin{lemma} \label{lem-Eh}
If $h$ satisfies Assum.~\ref{cond-h}, then $E_h$ is nonempty, compact, and contained in some compact, connected, globally asymptotically stable
\footnote{Recall that a compact set $D$ is globally asymptotically stable if every solution $x(t)$ of the ODE converges to $D$ as $t \to \infty$, and $D$ is Lyapunov stable (see \cite[Chap.\ 4.2.2]{KuY03}).}
set of the ODE $\dot{x}(t) = h(x(t))$.
\end{lemma}

\begin{proof} 
By \cite[Cor.\ 4.1]{Bor23}, there exist $\bar c > 0$ and $T > 0$ such that for any solution $x(\cdot)$ of the ODE $\dot{x}(t) = h(x(t))$, if $\|x(t)\| \geq \bar c$, then $\| x(t + T) \| <  \| x(t) \|/4$. Thus, for any initial condition $x(0)$, there exists a sequence of times $t_n \uparrow \infty$ such that $x(t_n) \in B_{\bar c}: = \{ y \in \R^d \mid \| y \| \leq \bar c\}$. The compact set $B_{\bar c}$ is hence a global weak attractor (by the definition of such an attractor; see \cite[Chap.\ V.1]{BhS02}) and contains at least one equilibrium point by \cite[Thm.~V.3.9]{BhS02}. This shows that $B_{\bar c} \supset E_h \not=\varnothing$. Since $h$ is Lipschitz continuous, $E_h$ is compact. For the desired globally asymptotically stable set containing $E_h$, we can take the first positive prolongation set of $B_{\bar c}$.\footnote{Details: As a compact global weak attractor, $B_{\bar c}$ lies in some compact globally asymptotically stable set, the smallest being its first positive prolongation set $D^+(B_{\bar c})$ \cite[Thm.~V.1.25]{BhS02}. Explicitly, $D^+(B_{\bar c}) = \cup_{x \in   B_{\bar c}} D^+(x)$, where $D^+(x) = \{y \in \R^d \mid \exists \, \{x_n\} \subset \R^d \ \text{and} \  \{t_n\} \subset \R_+ \ \text{s.t.} \  x_n \to x, \, \phi(t_n; x_n) \to y  \ \text{as} \ n \to \infty \}$ with $\phi(t; \bar x)$ denoting the ODE solution with $x(0) = \bar x$. Since each $D^+(x)$, $x \in B_{\bar c}$, is connected \cite[Thm.~II.4.4]{BhS02} and $B_{\bar c}$ is connected, $D^+(B_{\bar c})$ is connected.} 
\end{proof}

As will be discussed in Sec.~\ref{sec-2.3}, for average-reward RL in MDPs/SMDPs without unichain model restrictions, $E_h$ is generally not a singleton but rather consists of infinitely many connected equilibrium points. Corollary~\ref{cor-ql} below applies Thm.~\ref{thm-2} to this setting, using the fact that if $E_h$ is globally asymptotically stable, it must contain all compact invariant sets. 

\begin{cor} \label{cor-ql} 
Let Assums.~\ref{cond-h}--\ref{cond-us} hold, and let $E_h$ be globally asymptotically stable for the ODE $\dot{x}(t) = h(x(t))$. Then the following hold a.s.\ for algorithm \eqref{eq-alg0}:
\begin{itemize}[leftmargin=0.65cm,labelwidth=!]
\item[\rm (i)] The sequence $\{x_n\}$ converges to a compact connected subset of $E_h$.
\item[\rm (ii)] For any $\delta > 0$ and any convergent subsequence $\{x_{n_k}\}$, as $k \to \infty$,
$$\tau_{\delta,k} \= \min \left\{ |s| :  \| \bar x(t_{n_k} + s) - x^* \| > \delta, \ s \in \R \right\} \to \infty,$$ 
where $\bar x(\cdot)$ is the continuous trajectory defined above, 
$t_{n_k} = \tl t(n_k)$ is the ODE-time when $x_{n_k}$ is generated, and $x^* \in E_h$ is the point to which $\{x_{n_k}\}$ converges.
\end{itemize}
\end{cor}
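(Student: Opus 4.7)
The plan is to derive both parts as direct consequences of Thm.~\ref{thm-2}, using the g.a.s.\ hypothesis to pin the limit set down inside $E_h$. For part (i), I would combine Thm.~\ref{thm-2}(i)---which yields a compact, connected, invariant limit set $D$ of the ODE $\dot x = h(x)$---with the fact, noted just before the corollary, that when $E_h$ is g.a.s.\ every compact invariant set is contained in $E_h$. For completeness, the short justification I would sketch is: g.a.s.\ of $E_h$ yields uniform attraction on compacta, so for any $\epsilon>0$ there is $T$ with $d(\phi(T;y),E_h)<\epsilon$ for all $y\in D$, where $\phi(\cdot;\cdot)$ denotes the flow; for $x\in D$, backward invariance gives $y \= \phi(-T;x)\in D$ and hence $x=\phi(T;y)$ lies within $\epsilon$ of $E_h$. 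Since $\epsilon$ is arbitrary and $E_h$ is closed, $D\subset E_h$, and part (i) follows at once.

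For part (ii), I would run a contradiction argument built on the relative compactness in Thm.~\ref{thm-2}(ii). Let $x_{n_k}\to x^*\in E_h$, so that $h(x^*)=0$, and suppose $\tau_{\delta,k}\not\to\infty$; then along a subsequence (still indexed by $k$) there exist $T>0$ and $s_k\in\R$ with $|s_k|\leq T+1$ and $\|\bar x(t_{n_k}+s_k)-x^*\|>\delta$ (since the set defining $\tau_{\delta,k}$ is open, such an $s_k$ exists just beyond the infimum $\tau_{\delta,k}$). By Thm.~\ref{thm-2}(ii), $\{\bar x(t_{n_k}+\cdot)\}$ is relatively compact in $\C((-\infty,\infty);\R^d)$, so along a further subsequence $\bar x(t_{n_k}+\cdot)\to y(\cdot)$ uniformly on compacta, with $y$ a solution of $\dot y=\tfrac{1}{d} h(y)$ remaining in $D\subset E_h$. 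Because $\bar x(t_{n_k})=x_{n_k}\to x^*$, we have $y(0)=x^*$, and because $h(x^*)=0$ the Lipschitz uniqueness theorem forces $y(\cdot)\equiv x^*$. Extracting one more subsequence with $s_k\to s^*\in[-T-1,T+1]$, uniform convergence on the relevant compact interval together with continuity of $y$ yields $\bar x(t_{n_k}+s_k)\to y(s^*)=x^*$, contradicting $\|\bar x(t_{n_k}+s_k)-x^*\|>\delta$.

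The main delicate point I anticipate is organizational rather than analytical: the a.s.\ events supplied by Thm.~\ref{thm-2}---boundedness, relative compactness of $\{\bar x(t+\cdot)\}_{t\in\R}$, and the characterization of every limit point as a solution of $\dot y=\tfrac{1}{d} h(y)$ staying in $D$---must be intersected into a single probability-one event before the pathwise diagonal extractions above are carried out. Beyond this bookkeeping, the corollary is a clean distillation of Thm.~\ref{thm-2} under the g.a.s.\ hypothesis. Crucially, the argument exploits the \emph{uniqueness} of the limiting scaled ODE in Thm.~\ref{thm-2}(ii): without a single limit ODE one could not identify $y(\cdot)$ as the constant $x^*$ via one initial-value problem, and the contradiction would collapse.
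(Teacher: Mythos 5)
Your proof is correct and takes essentially the same route as the paper: part (i) is identical (Thm.~\ref{thm-2}(i) plus the fact that a g.a.s.\ $E_h$ absorbs all compact invariant sets), and part (ii) rests on the same ingredients as the paper---relative compactness from Thm.~\ref{thm-2}(ii), the observation that limit solutions staying in $E_h$ are necessarily constant, and the anchoring of the limit via $\bar x(t_{n_k}) = x_{n_k} \to x^*$---except that you phrase it as a contradiction (pulling out a bounded $s_k$ witnessing $\tau_{\delta,k}\not\to\infty$) where the paper argues directly that $\bar x(t_{n_k}+\cdot)\to x^*$ in $\C((-\infty,\infty);\R^d)$ and reads off $\tau_{\delta,k}\to\infty$ from uniform convergence on compacta. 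The two forms are logically equivalent, and your remark about intersecting the a.s.\ events and about the role of the unique limiting ODE matches the paper's perspective.
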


Corollary~\ref{cor-ql}(ii) shows that algorithm \eqref{eq-alg0} spends increasing ODE-time in arbitrarily small neighborhoods around its iterates' limit points in $E_h$, with each visit near a limit point lasting longer and tending to infinity. Thus, when $E_h$ contains non-isolated equilibria, the algorithm's behavior may resemble convergence to a single point, even if it does not truly converge. 

We now analyze this case further
\footnote{Any compact, connected subset of $E_h$ is internally chain transitive by definition, so Cor.~\ref{cor-ql}(i) cannot be further improved based on Thm.~\ref{thm-2}(i).}
by presenting sufficient conditions for the asynchronous algorithm~\eqref{eq-alg0} to converge to a unique point in $E_h$, a result also relevant to our RL applications. Our approach builds on Hirsch~\cite{Hir94}, Bena\"{i}m and Hirsch~\cite{BeH96}, and Bena\"{i}m \cite{Ben96,Ben99}. Roughly speaking, the idea is to ensure that $\bar x(\cdot)$ asymptotically tracks a unique solution trajectory of the limiting ODE, a concept known as \emph{shadowing} in the cited literature. One way to achieve this is by ensuring sufficiently rapid tracking of the ODE solutions. In the asynchronous SA setting, we decompose the `tracking error' into two components---one from stochastic noise and one from asynchrony---and introduce conditions to control both. 

We formalize these conditions as follows:

\begin{assumption}[Additional condition on noise term $\epsilon_n$] \label{cond-mns}
There exists a deterministic constant $\mu_\delta < 0$ such that $\limsup_{n \to \infty} \tfrac{\ln(\delta_{n+1})}{\sum_{k=0}^n \alpha_k} \leq \mu_\delta$ a.s., where $\{\delta_{n}\}$ are the random variables involved in Assum.~\ref{cond-ns}(ii) for $\{\epsilon_n\}$.
\end{assumption}

Consider two specific forms of stepsizes: $\alpha_n = \tfrac{1}{A n}$ (class 1) and $\alpha_n = \tfrac{1}{A n \ln n}$ (class 2), where $A > 0$ is a scaling parameter (with $\alpha_n$ set to $\tfrac{1}{A}$ if the denominator is zero). For class-1 stepsizes, we impose an additional condition on asynchony:

\begin{assumption}[Additional asynchrony condition] \label{cond-mus} 
For class-1 stepsizes, the asynchronous update schedules are such that a.s.\ for all $i \in \I$, $\nu(n,i)/n \to p_i$ as $n \to \infty$, for some (sample path-dependent) $p_i \in (0,1]$. Moreover, there exists a deterministic constant $\gamma > 0$ such that $\limsup_{n \to \infty} n^{\gamma} \big| \nu(n,i)/n - p_i \big| < \infty$ a.s.
\end{assumption} 

The following convergence theorem is our second main result in this section; its proof is given in Sec.~\ref{sec-shad}. All conditions can be met in our RL applications (see Sec.~\ref{sec-2.3}). Let $L_h$ be the Lipschitz constant of $h$ under $\| \cdot\|_\infty$---this norm is chosen because, in our RL applications, an effective bound on $L_h$ under this norm can be obtained with minimal or no model knowledge (see Sec.~\ref{sec-2.3}).

\begin{theorem} \label{thm-3}
Consider algorithm~\eqref{eq-alg0} with class-1 stepsizes where $\tfrac{A}{2} > L_h$ or class-2 stepsizes where $A >  L_h$. Let Assums.~\ref{cond-h}-\ref{cond-us},~\ref{cond-mns} with $\mu_\delta < - L_h$ hold, and for class-1 stepsizes, also assume Assum.~\ref{cond-mus} with $\gamma A > L_h$.
For the ODE $\dot{x}(t) = h(x(t))$, suppose that the set $E_h$ is globally asymptotically stable and that every solution $x(t)$ converges to a unique point in $E_h$ as $t \to \infty$. Then the sequence $\{x_n\}$ from algorithm \eqref{eq-alg0} converges a.s.\ to a point in $E_h$ that depends on the sample path. 
\end{theorem}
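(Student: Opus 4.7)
The plan is to establish a strong \emph{shadowing} property for the interpolated trajectory: there exists a true solution $x^*(\cdot)$ of the limiting ODE $\dot{y} = \tfrac{1}{d} h(y)$ such that $\| \bar x(t) - x^*(t) \| \to 0$ as $t \to \infty$. Granted this, the hypothesis that every ODE solution converges to a unique point in $E_h$ yields $x^*(t) \to x^\star$ for some $x^\star \in E_h$, and the shadowing bound transfers the convergence to $\bar x(\cdot)$ and hence to $\{x_n\}$. The starting point is Cor.~\ref{cor-ql} (via Thms.~\ref{thm-1} and~\ref{thm-2}), which provides a.s.\ boundedness of $\{x_n\}$ and relative compactness of $\bar x(\cdot)$, with every limit trajectory contained in $E_h$.

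The core of the argument is a rate estimate for the tracking error $e(t,s) := \bar x(t) - \varphi_{t-s}(\bar x(s))$, where $\varphi_u$ denotes the flow of $\dot{y} = \tfrac{1}{d} h(y)$. Writing the update of each component as $x_{n+1}(i) - x_n(i) = \alpha_{\nu(n,i)} \bigl( h_i(x_n) + M_{n+1}(i) + \epsilon_{n+1}(i) \bigr)$ and comparing it, over the ODE-time increment $\tl\alpha_n$, with the Euler step of the limiting ODE produces three discrepancies: an \emph{asynchrony} term (from using $\alpha_{\nu(n,i)}$ rather than a common stepsize $\tl\alpha_n/d$), a \emph{martingale} term, and a \emph{bias} term. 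Applying Lip.\ continuity of $h$ and a discrete Gronwall inequality yields a bound of the form
\begin{equation*}
\| e(t,s) \| \;\leq\; C \int_s^t e^{L_h (t-u)} \, d\eta(u),
\end{equation*}
where $d\eta$ aggregates the instantaneous magnitudes of the three discrepancies. Shadowing follows once the tail $\int_s^\infty e^{-L_h u} \, d\eta(u)$ decays, as $s \to \infty$, strictly faster than $e^{-L_h s}$.

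Each discrepancy is controlled by a distinct hypothesis of the theorem. The bias term decays exponentially in ODE-time at rate $|\mu_\delta|$ by Assum.~\ref{cond-mns}, and the requirement $\mu_\delta < -L_h$ makes its contribution summable against the exponential weight. The martingale noise is estimated via Doob's maximal inequality applied to $\sum_{k} \alpha_{\nu(k,i)} M_{k+1}(i)$ followed by a Borel--Cantelli argument: under class-1 stepsizes $\alpha_n = 1/(An)$ the weighted quadratic variation on successive ODE-intervals, combined with the identity $\sum_{k=1}^n \alpha_k \sim \ln n / A$, becomes summable precisely when $A/2 > L_h$; the analogous computation under class-2 stepsizes $\alpha_n = 1/(An \ln n)$ produces the milder requirement $A > L_h$. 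Finally, for class-1 stepsizes the asynchrony discrepancy is controlled through Assum.~\ref{cond-mus}: the polynomial rate $|\nu(n,i)/n - p_i| = O(n^{-\gamma})$ gives $\alpha_{\nu(n,i)} - \alpha_n/p_i = O(n^{-(1+\gamma)})$, whose contribution is absorbed by the exponential weight exactly when $\gamma A > L_h$.

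The chief obstacle is the asynchrony term. Unlike in the synchronous setting of Hirsch and Bena\"{\i}m, the iteration is not an Euler scheme for any single ODE, because distinct components evolve on different ``local clocks'' determined by $\nu(\cdot,i)$. Obtaining a shadowing bound therefore requires a quantitative rate for $\nu(n,i)/n \to p_i$ (and not just the qualitative Assum.~\ref{cond-us}), and a careful matching of this rate against the Lipschitz exponent $L_h$; this is precisely what $\gamma A > L_h$ encodes. Once all three error sources are absorbed simultaneously, a diagonalization over increasing time horizons $[s_m, s_m + T_m]$ with $s_m, T_m \to \infty$ extracts the shadow solution $x^*(\cdot)$, and the theorem follows from the hypothesized convergence of every ODE solution to a unique point in $E_h$.
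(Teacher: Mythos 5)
Your proposal follows essentially the same route as the paper: establish a shadowing property by showing that the one-step tracking error decays exponentially faster than $e^{-L_h t}$, decompose the error into a biased-noise part, a martingale part, and an asynchrony part, bound each separately (Gronwall with exponential weight for the ODE comparison, Doob/Burkholder plus Borel--Cantelli for the martingale, Assumption~\ref{cond-mns} for the bias, Assumption~\ref{cond-mus} for the asynchrony), and then combine with the hypothesis that every ODE solution converges to a single point in $E_h$. The conditions $\tfrac{A}{2}>L_h$, $A>L_h$, $\mu_\delta<-L_h$, $\gamma A>L_h$ arise in your sketch for exactly the same reasons as in the paper's Lemmas~\ref{lem-shad3-0}--\ref{lem-shad3-2}.

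One imprecision worth noting: the final step, described as a ``diagonalization over increasing time horizons,'' does not by itself produce a unique shadow. Relative compactness already gives limit trajectories (Thm.~\ref{thm-2}(ii)); what pins down a \emph{single} shadowing orbit is a contraction-style argument---the paper, following Hirsch, pulls back shrinking balls $B(\bar x(j),\delta^j)$ under the time-one inverse flow $F^{-1}$ (which expands by at most $e^{L_h/d}$), so that the nested preimages $\cap_i F^{-i}(B_{\bar j+i})$ collapse to a point. You need this (or an equivalent invocation of Bena\"{i}m's asymptotic-pseudotrajectory shadowing theorem) to complete the extraction; diagonalization alone does not force uniqueness of the limit orbit. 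Since you reference Hirsch--Bena\"{i}m from the outset, this is a presentation gap rather than a wrong approach, but the mechanism should be made explicit.

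Two smaller remarks: the paper's decomposition of the tracking error actually contains a fourth contribution---an $O(\tilde\alpha)$ discretization (Euler) error, cf.\ \eqref{eq-shad2-0a}---which you fold implicitly into the other terms; this is harmless since it decays at rate $\ell(\{\alpha_n\})$, dominated by the martingale rate $\ell/2$. Also, your asynchrony estimate $\alpha_{\nu(n,i)}-\alpha_n/p_i=O(n^{-(1+\gamma)})$ is correct for class-1 stepsizes, but the paper's mechanism is slightly different: it works through the distribution functions $F^j_i(s)=\int_0^s\tilde\lambda_{ii}(j+\tau)\,d\tau$ and shows $\sup_s|F^j_i(s)-s/d|$ decays, relying on the fact that the slowly-varying sums $\sum_{k\le\nu(n,i)}\alpha_k$ make the time-averaged clock converge to $1/d$ regardless of the $p_i$. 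Your heuristic gets the rate right but not the precise normalization.
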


We now make a few general remarks on the additional conditions and the proof of Thm.~\ref{thm-3}, before discussing the average-reward RL application (Sec.~\ref{sec-2.3}).

\begin{remark}[On the proof] \label{rmk-shad-prf} \rm
Following the general approach of Hirsch and Bena\"{i}m, most arguments in our proof of Thm.~\ref{thm-3} also apply to analyzing shadowing properties of asynchronous SA algorithms in more general settings. In particular, much of the analysis is independent of $E_h$ and its associated conditions---these are only used to specialize the conclusions. Moreover, rather than relying on the specific forms of the stepsizes $\{\alpha_n\}$, the proof primarily depends on the quantity $\ell(\{\alpha_n\}) \= \limsup_{n \to \infty} \tfrac{\ln(\alpha_n)}{\sum_{k=0}^{n} \alpha_k}$, similar to Bena\"{i}m's analysis \cite{Ben96,Ben99} for synchronous SA. Class-1 and class-2 stepsizes have $\ell(\{\alpha_n\}) = -A$ and $-\infty$, respectively, with their specific forms used only to obtain explicit estimates of the `tracking error' due to asynchrony (see Lem.~\ref{lem-shad3-2} in Sec.~\ref{sec-A.1.2}). \myqed
\end{remark}

\begin{remark}[On the additional ODE conditions] \label{rmk-add-ode} \rm
Borkar and Soumyanath~\cite{BoS97} showed that if $h(x) = F(x) - x$, with $F: \R^d \to \R^d$ nonexpansive in the $p$-norm ($p \in (1, +\infty]$) and with nonempty fixed-point set $E_h$, then the solution $x(t)$ of $\dot{x}(t) = h(x(t))$ converges to a point in $E_h$, with distance to $E_h$ nonincreasing. It follows that if $h$ also satisfies the Borkar--Meyn stability criterion, the additional ODE conditions in Thm.~\ref{thm-3} hold. This identifies another class of problems where Thm.~\ref{thm-3} applies, besides average-reward RL. In this setting, Thm.~\ref{thm-3} sharpens the previous result \cite[Thm.\ 3.2]{Bor98} in the absence of communication delays.
\myqed
\end{remark}

\begin{remark}[On the additional stepsize/asynchrony conditions] \label{rmk-add-cond} \rm \hfill \\
(a) For class-2 stepsizes, their rapid decrease eliminates the need for the additional asynchrony condition (Assum.~\ref{cond-mus}), as Assum.~\ref{cond-us} alone suffices to control the tracking error from asynchrony (see the proof of Lem.~\ref{lem-shad3-2}). In the synchronous case, the scaling parameter $A$ for class-2 stepsizes can be chosen freely, provided that Assum.~\ref{cond-mns} holds with $\mu_\delta < - L_h$.
This is consistent with our asynchronous-case proof and aligns with prior shadowing results \cite{Ben96,Ben99} for synchronous SA.\\*[1pt]
(b) For class-1 stepsizes, one way to satisfy the additional asynchrony condition Assum.~\ref{cond-mus} is to select components for updating $x_n$ in a way that eventually follows an irreducible Markov chain on $\I$. The law of the iterated logarithm then ensures that Assum.~\ref{cond-mus} holds for any $0 < \gamma < \tfrac{1}{2}$, allowing the required condition $\gamma A > L_h$ to be met by choosing the stepsize scaling parameter $A > 2 L_h$. \myqed
\end{remark}

\subsection{Applications to Average-Reward RL} \label{sec-2.3}

This subsection briefly overviews the average-reward RL applications studied in our companion paper \cite{YWS25} and recent work \cite{WYS24}. There, $E_h$ corresponds to a compact, connected subset of solutions to the average-reward optimality equation for a weakly communicating
\footnote{I.e., excluding states that are transient under all policies, the remaining states form a single closed communicating class under some policy; see, e.g., \cite[Chap.~8.3]{Put94}.} 
SMDP or MDP on finite state and action spaces. A fundamental result in MDP theory \cite{ScF78} shows that the full solution set is homeomorphic to an unbounded convex polyhedron, while $E_h$ is homeomorphic to a compact section of this polyhedron---generally a non-singleton set (unlike in the unichain case). 

We apply Cor.~\ref{cor-ql} and Thm.~\ref{thm-3} to analyze RVI Q-learning, treating several existing algorithms in \cite{WYS24} and developing a more general version in \cite{YWS25}. Corollary~\ref{cor-ql} is used to establish convergence to a compact, connected subset of solutions to the optimality equation, while Thm.~\ref{thm-3} is used to ensure convergence to a unique, sample path-dependent solution (see \cite[Secs.~3.3 and~4]{YWS25}).

To apply these SA results, we prove that Assums.~\ref{cond-h} and \ref{cond-ns} on the function $h$ and the noise, along with the ODE and $E_h$ conditions, hold based on the structure of weakly communicating MDPs/SMDPs and the algorithmic design of RVI Q-learning. (This part of the analysis builds on \cite{ABB01,BoS97} and is nontrivial, though it does not invoke the SA results themselves---even though those results shaped the algorithmic design.)

Regarding the noise conditions: for SMDPs, the standard condition on the martingale difference terms $\{M_n\}$ suffices when a lower bound on the expected holding times is known a priori; otherwise, the more general condition Assum.~\ref{cond-ns}(i) is needed. As noted earlier, Assum.~\ref{cond-ns}(ii) on the biased noise terms $\{\epsilon_{n}\}$ is required because the function $h$ depends on the expected holding times, which are estimated from data with increasing accuracy by the RL algorithm. (See \cite[Lem.~4.2]{YWS25} for exactly how Assum.~\ref{cond-ns} is used in our RL context.) For MDPs, which are SMDPs with unit holding times, all $\epsilon_n = 0$, and the standard condition on $\{M_n\}$ suffices.

The remaining assumptions required by our SA results can be satisfied through appropriate choices of stepsizes and asynchronous update schedules. In Rems.~\ref{rmk-cond-part-async} and~\ref{rmk-add-cond}, we already addressed some of these in an application-independent way. Here, we focus on the application-specific aspects. 
For RL in average-reward SMDPs, since the biased noise terms $\{\epsilon_n\}$ arise from estimated expected holding times, the additional noise condition Assum.~\ref{cond-mns} with $\mu_\delta < - L_h$ can be translated into algorithmic requirements on the estimation procedure and ensured accordingly (see \cite[Thm.~3.2 and Lem.~4.8, used in its proof]{YWS25}). In the special case of MDPs, where all $\epsilon_n = 0$, Assum.~\ref{cond-mns} holds automatically with $\mu_\delta = - \infty$.
An effective upper bound on $L_h$ can be obtained with minimal model knowledge---specifically, a lower bound on the minimum expected holding time---and used in place of the exact value of $L_h$ when setting the threshold for the stepsize scaling parameter $A$ (see \cite[Thm.~3.2 and its proof]{YWS25}). 
In summary, all conditions of the preceding SA results can be satisfied in our RL setting through suitable algorithmic choices, with minimal or no model knowledge (see \cite{YWS25} for details).

\section{Proofs for Section~\ref{sec-sa}} \label{sec-sa-proofs}
This section provides the proofs of all results in Sec.~\ref{sec-sa}, except for Thm.~\ref{thm-3}, whose proof appears in Sec.~\ref{sec-shad}. Specifically, we establish the basic stability and convergence theorems---Thms.~\ref{thm-1}, \ref{thm-2}, and Cor.\ \ref{cor-ql}---for the asynchronous SA algorithm~(\ref{eq-alg0}). Assumptions \ref{cond-h}--\ref{cond-us} remain in effect throughout. To avoid repetition, we draw heavily from Borkar \cite{Bor98,Bor23}, focusing on elements essential to the asynchronous algorithm. In this section, we use $\| \cdot\|$ to denote the Euclidean norm $\|\cdot \|_2$, consistent with \cite{Bor23}; norm equivalence allows us to use the stated conditions without modification. 

We use an ODE-based approach from \cite{Bor98}, working with linearly interpolated trajectories formed from the iterates $\{x_n\}$ and connecting these to solutions of \emph{non-autonomous} ODEs of the form $\dot{x}(t) = \lambda(t) g(x(t))$,
where $\lambda(t)$ arises from asynchrony. Depending on the context, $g$ may be $h$, $h_c$, etc. We construct these trajectories differently for the stability and convergence analyses, leading to different $\lambda(\cdot)$ functions. In Sec.~\ref{sec-prel-ana}, we first define and analyze these time-dependent components $\lambda(\cdot)$ and their asymptotic properties, which are crucial for the subsequent proofs (Secs.~\ref{sec-stab}--\ref{sec-cvg}).

\subsection{Preliminary Analysis} \label{sec-prel-ana}

For the stability proof, we use the deterministic stepsize sequence $\{\alpha_n\}$ as the elapsed times between consecutive iterates to define a continuous trajectory $\bar x(\cdot)$. Once stability is established, we switch to a random stepsize sequence in the convergence analysis, both for technical convenience and sharper results. Using random stepsizes in the stability analysis seems non-viable under our noise conditions.

Specifically, for the stability proof, we define a linearly interpolated trajectory $\bar x(t)$ as follows: Let $t(0) \= 0$ and $t(n) \= \sum_{k=0}^{n-1} \alpha_k$ for $n \geq 1$. Let 
\begin{equation} \label{eq-cont-traj1}
  \bar x(t) \=  x_n +  \tfrac{t - t(n)}{t(n+1) - t(n)} \, ( x_{n+1} - x_n), \quad  t \in [t(n), t(n+1)], \ n \geq 0.
\end{equation} 
To define $\lambda(\cdot)$, we first rewrite algorithm (\ref{eq-alg0}) explicitly in terms of $\{\alpha_n\}$ as 
\begin{equation} \label{eq-alg}
    x_{n+1}(i)  = x_n(i) + \alpha_n \, \q(n, i) \left( h_i (x_n) + M_{n+1}(i) + \epsilon_{n+1}(i) \right), \quad i \in \I,
\end{equation}
where $\q(n,i) \= \frac{\alpha_{\nu(n,i)}}{\alpha_n} \ind \{i \in Y_n\}$. We show that $\q(n,i)$ is eventually bounded by a deterministic constant, which we then use to define $\lambda(\cdot)$ and its space $\Upsilon$: 
 
\begin{lemma}  \label{lem1}
For some deterministic constant $C \geq 1$, it holds a.s.\ that for all sufficiently large (sample path-dependent) $n$, $\max_{i \in \I} \q(n,i) \leq C$ and $\sum_{ i \in \I} \q(n,i) \geq 1$. 
\end{lemma}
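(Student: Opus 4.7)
The plan is to derive both bounds directly from the stepsize regularity in Assumption~\ref{cond-ss} and the asynchrony condition in Assumption~\ref{cond-us}(i). The essential observation is that, eventually, $\nu(n,i)$ grows linearly in $n$ for every component $i$, while $\{\alpha_n\}$ is eventually monotone nonincreasing; together these sandwich the ratio $\alpha_{\nu(n,i)}/\alpha_n$ between $1$ and a deterministic constant.

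Concretely, I would first fix $N_0$ such that $\alpha_{n+1} \leq \alpha_n$ for all $n \geq N_0$ (Assumption~\ref{cond-ss}(i)), and set $x \= \tfrac{\Delta}{2} \wedge \tfrac{1}{2} \in (0,1)$. By Assumption~\ref{cond-us}(i), on an event of full probability one has $\nu(n,i) \geq x n$ for every $i \in \I$ and all sufficiently large $n$; enlarging $n$ further if necessary, also $[xn] \geq N_0$. Since $\nu(n,i) \leq n$ and $\{\alpha_k\}_{k \geq N_0}$ is nonincreasing, monotonicity then gives
\[
 \alpha_n \; \leq \; \alpha_{\nu(n,i)} \; \leq \; \alpha_{[xn]}, \qquad \forall \, i \in \I.
\]
For the upper bound, Assumption~\ref{cond-ss}(ii) supplies a deterministic $C_0 < \infty$ with $\alpha_{[xn]}/\alpha_n \leq C_0$ uniformly in $n$, yielding $\max_{i \in \I} \q(n,i) \leq C_0$. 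For the lower bound, I would pick any $i^* \in Y_n$ (possible since $Y_n \neq \varnothing$); the left inequality above gives $\q(n, i^*) = \alpha_{\nu(n,i^*)}/\alpha_n \geq 1$, and hence $\sum_{i \in \I} \q(n,i) \geq 1$. Setting $C \= 1 \vee C_0$ produces the required deterministic constant.

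I do not foresee a substantive obstacle: the lemma is essentially a direct calculation. The only subtlety is bookkeeping. The constant $C$ must be deterministic, which is why the proof routes through Assumption~\ref{cond-ss}(ii)---a deterministic regularity statement about $\{\alpha_n\}$ alone---rather than any sample-path-dependent ratio. The ``sufficiently large $n$'' is unavoidably sample-path-dependent because Assumption~\ref{cond-us}(i) holds only almost surely, but this matches precisely what the lemma asserts.
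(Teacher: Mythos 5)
Your proof is correct and follows essentially the same route as the paper: both derive the upper bound by combining the eventual monotonicity from Assum.~\ref{cond-ss}(i) with the ratio bound from Assum.~\ref{cond-ss}(ii) and the linear-growth floor $\nu(n,i) \geq (\Delta/2)n$ from Assum.~\ref{cond-us}(i), and both obtain the lower bound by picking some $i^* \in Y_n$ and using $\nu(n,i^*) \leq n$ together with monotonicity. The only cosmetic difference is that the paper packages the monotonicity step into the cited statement $\sup_n \sup_{y \in [\Delta/2,1]} \alpha_{[yn]}/\alpha_n < \infty$ from Borkar, whereas you apply monotonicity directly to sandwich $\alpha_{\nu(n,i)}$ between $\alpha_n$ and $\alpha_{[xn]}$ and then invoke Assum.~\ref{cond-ss}(ii) at the single point $x$.
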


\begin{proof}
As discussed in \cite[p.\ 842]{Bor98}, Assum.~\ref{cond-ss}(ii), together with $\{\alpha_n\}$ being eventually nonincreasing (Assum.~\ref{cond-ss}(i)), implies that $\sup_n \sup_{y \in [x, 1]} \frac{\alpha_{[ y n]}}{ \alpha_n} < \infty$ for $x \in (0,1)$.
By Assum.~\ref{cond-us}(i), for $n$ sufficiently large, $\min_{i \in \I} \nu(n,i)/n \geq \Delta/2$ a.s.\
Thus, for the finite deterministic constant $C \=  \sup_n \sup_{y \in [\Delta/2, 1]} \frac{\alpha_{[ y n]}}{ \alpha_n} \geq 1$, we have $\max_{i \in \I} \q(n,i) \leq \max_{i \in \I} \frac{\alpha_{\nu(n,i)}}{\alpha_n} \leq C$ for $n$ sufficiently large, a.s.
Since $\{\alpha_n\}$ is eventually nonincreasing and the sets $Y_n \not= \varnothing$, Assum.~\ref{cond-us}(i) also implies that for $n$ sufficiently large, $\sum_{ i \in \I} \q(n,i) =  \sum_{ i \in Y_n} \frac{\alpha_{\nu(n,i)}}{\alpha_n}  \geq 1$ a.s.
\end{proof}

Let $C$ be the constant given in Lem.~\ref{lem1}. Let $\Upsilon$ comprise all Borel-measurable functions that map $t \geq 0$ to a $d \times d$ diagonal matrix with nonnegative diagonal entries bounded by $C$. Two such functions are regarded as the same element if they are equal almost everywhere (a.e.)\ w.r.t.\ the Lebesgue measure.
Similarly to \cite{Bor98}, we equip $\Upsilon$ with the coarsest topology that makes the mappings $\psi_{t,f}: \lambda' \in \Upsilon \mapsto \int_0^t  \lambda'(s) f(s) \, ds$ continuous for all $t > 0$ and $f \in L_2([0, t]; \R^{d})$ (the space of all $\R^{d}$-valued square-integrable functions on $[0,t]$). With this topology, $\Upsilon$ is compact and metrizable by the Banach-Alaoglu theorem and the separability of the Hilbert spaces $L_2([0, t]; \R^{d})$, $t > 0$ (see \cite[Chaps.\ 6.2, A.2]{Bor23}). We regard $\Upsilon$ as a compact metric space with a compatible metric.
Note that any sequence in $\Upsilon$ contains a convergent subsequence.

We now define $\lambda(t)$ as a diagonal matrix-valued, piecewise constant function by  
\begin{equation} \label{eq-def-lambda}
    \lambda(t) \=\text{diag} \big( \, \q(n, 1) \!\wedge\! C, \, \q(n, 2) \!\wedge\! C, \, \ldots, \, \q(n, d) \!\wedge\! C \, \big), \quad t \in [t(n), t(n+1)), \ n \geq 0.
\end{equation} 
For $t \geq 0$, $\lambda(t + \cdot)$ on $[0, \infty)$ is an element of $\Upsilon$. Let $I$ denote the identity matrix. The next lemma characterizes the limit points of $\lambda(t + \cdot)$ as $t \to \infty$. Its proof is similar to that of \cite[Thm.\ 3.2]{Bor98,Bor00}, invoking Assum.~\ref{cond-us}(ii) specifically to apply L'H\^{o}pital's rule and using Lem.\ \ref{lem1} to lower bound the limiting functions. 

\begin{lemma} \label{lem4}
Almost surely, for any sequence $t_n \geq 0$ with $t_n \uparrow \infty$, all limit points of the sequence $\{\lambda(t_n + \cdot)\}_{n \geq 0}$ in $\Upsilon$ have the form
$\lambda^*(t) = \rho(t) I$,
where $\rho(\cdot)$ is a real-valued Borel-measurable function satisfying $\tfrac{1}{d} \leq \rho(t) \leq C$ for all $t \geq 0$.
\end{lemma}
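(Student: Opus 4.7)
The plan is to show two things separately: (a) every limit point $\lambda^*$ is diagonal with all entries equal to a common Borel function $\rho$, and (b) this $\rho$ satisfies $\tfrac{1}{d} \leq \rho(t) \leq C$ a.e. Given (a), the bounds in (b) follow quickly: the upper bound is immediate from the construction (each diagonal entry of $\lambda(t)$ lies in $[0,C]$), and the lower bound comes from Lem.~\ref{lem1} --- a.s.\ for all $n$ large, $\q(n,i)\leq C$ (so the $\wedge C$ truncation is inactive) and $\sum_{i \in \I} \q(n,i) \geq 1$, so $\sum_i \lambda_i(t(n)) \geq 1$. Testing the convergence in $\Upsilon$ against $(1,\dots,1)\,\ind_{[s_1,s_2]}$ carries this bound to $\sum_i \lambda_i^*(u) \geq 1$ a.e., and once (a) is in hand, $d\,\rho(u) \geq 1$ a.e.

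For (a), the reduction is to show that for every $s > 0$ and $i, j \in \I$,
\[
  \int_0^s \lambda_i(t_{n_k}+u)\,du - \int_0^s \lambda_j(t_{n_k}+u)\,du \to 0 \quad \text{as } k \to \infty,
\]
since, combined with the topology on $\Upsilon$ tested against $e_i\,\ind_{[0,s]}$, this yields $\int_0^s \lambda_i^*(u)\,du = \int_0^s \lambda_j^*(u)\,du$ for every $s>0$ and hence $\lambda_i^* = \lambda_j^*$ a.e. To compute these integrals, first shift $t_{n_k}$ to the largest $t(n_k') \leq t_{n_k}$ (at cost $O(\alpha_{n_k'}) \to 0$) and relabel, so that effectively $t_{n_k} = t(n_k)$. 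For $k$ large the truncation is inactive by Lem.~\ref{lem1}, and the change of index $l \mapsto \nu(l,i)$ (using $\alpha_l\,\q(l,i) = \alpha_{\nu(l,i)}\ind\{i \in Y_l\}$) gives
\[
  \int_0^s \lambda_i(t(n_k)+u)\,du = \sum_{l'=\nu(n_k,i)}^{\nu(m_k,i)-1} \alpha_{l'} \,+\, O(\alpha_{m_k}),
\]
where $m_k$ is characterized by $t(m_k) \leq t(n_k) + s < t(m_k+1)$ and differs from $N(n_k,s)$ by at most one. By Assum.~\ref{cond-us}(ii), the ratio of these sums for $i$ and $j$ converges to some $r_{ij}(s) \in (0,\infty)$, and the claim reduces to $r_{ij}(s) = 1$.

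The final step, which I expect to be the main obstacle, is exactly the proof that $r_{ij}(s) = 1$. I would follow a discrete L'H\^{o}pital / Stolz--Ces\`{a}ro argument in the style of Borkar~\cite{Bor00}: write
\[
  \sum_{l'=\nu(n_k,i)}^{\nu(m_k,i)-1} \alpha_{l'} = \Big(\sum_{l'=0}^{\nu(m_k,i)-1} \alpha_{l'}\Big) - \Big(\sum_{l'=0}^{\nu(n_k,i)-1} \alpha_{l'}\Big),
\]
and use that $\nu(n,i)/n \in [\Delta/2,1]$ eventually (Assum.~\ref{cond-us}(i)) together with the uniform slow variation of Assum.~\ref{cond-ss}(iii) to compare this with the physical-time increment $\sum_{l=n_k}^{m_k-1}\alpha_l \to s$. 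The delicate point is that Assum.~\ref{cond-ss}(iii) is only the ratio statement $\sum_{k=0}^{\nu(n,i)}\alpha_k \big/ \sum_{k=0}^n\alpha_k \to 1$, not a difference statement (indeed, for $\alpha_n = 1/n$ with $\nu(n,i)/n \to p_i$ the difference tends to $\log p_i$, generally nonzero). The cancellation therefore depends on the \emph{same} $O(1)$ correction appearing at both endpoints $n_k$ and $m_k = N(n_k,s)$, which is precisely what Assum.~\ref{cond-us}(ii) enforces via the assumed existence of the ratio limit; making this cancellation rigorous is the technical heart of the proof. Once $r_{ij}(s) = 1$ is established for every $s > 0$ and $i,j \in \I$, parts (a) and (b) combine to give the claimed form $\lambda^*(t) = \rho(t) I$ with $\rho \in [1/d, C]$.
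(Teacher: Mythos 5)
Your proposal matches the paper's proof in every essential respect: both test the weak-$*$ convergence in $\Upsilon$ against indicator (step) functions to reduce the claim to $\int_0^s \lambda_{ii}^*(u)\,du = \int_0^s \lambda_{jj}^*(u)\,du$ for all $s$, both reduce that in turn to proving the Assum.~\ref{cond-us}(ii) ratio limits equal $1$ via Borkar's L'H\^{o}pital/Stolz--Ces\`{a}ro argument, and both obtain $\rho \geq 1/d$ from Lem.~\ref{lem1} (via $\sum_i \q(n,i)\geq 1$) together with a Lebesgue differentiation step. You also correctly flag the step ``ratio limit $=1$'' as the technical heart; the paper likewise defers that step to Borkar \cite{Bor98,Bor00} and comments on it only briefly in Rem.~\ref{rmk-2}(a).
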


\begin{proof}
Let $\{t^1, t^2, \ldots\}$ be a dense set in $\R_+$. 
Consider a sample path for which Assum.~\ref{cond-us}(i) holds and Assum.~\ref{cond-us}(ii) holds for all $x \in \{t^1, t^2, \ldots\}$. (Note that such sample paths form a set of probability $1$.) By its proof, Lem.~\ref{lem1} holds for such a sample path. 

Given $\{t_n\}$ with $t_n \uparrow \infty$, consider any subsequence $\{\lambda(t_{n_k} + \cdot)\}_{k \geq 0}$ converging to some $\lambda^* \in \Upsilon$. Let $i, j \in \I$. With Assums.~\ref{cond-ss} and~\ref{cond-us} holding, it follows from Lem.~\ref{lem1} and the reasoning given in the proofs of \cite[Thm.\ 3.2]{Bor98,Bor00} that
\begin{equation} \label{eq-prf-lambda}
  \int_0^{t^\ell} \lambda^*_{ii}(s) ds =   \int_0^{t^\ell} \lambda^*_{jj}(s) ds,  \quad \forall \, \ell \geq 1, \ \forall \, i, j \in \I.
\end{equation}   
Since $\{t^\ell\}$ is dense in $\R_+$ and $\lambda^*_{ii}(s) \in [0, C]$, it follows that $f(t) \= \int_{0}^{t} \lambda^*_{ii}(s) ds$ defines the same function $f$ for any $i \in \I$ and hence $\lambda^*_{ii}(s) = \lambda^*_{jj}(s)$ a.e.\ by the Lebesgue differentiation theorem \cite[Thm.\ 7.2.1]{Dud02}.
Since functions in $\Upsilon$ that are identical a.e.\ are treated as the same function, we have $\lambda^*(t) = \rho(t) I$ for some Borel-measurable function $\rho$ with $\rho(t) \in [0,C]$. It remains to show $\rho(t) \geq 1/d$ a.e. By the convergence $\lambda(t_{n_k} + \cdot) \to \lambda^*$ in $\Upsilon$, for all $t, s > 0$,
$$  \int_t^{t+s} \!\rho(y) \, \text{trace}( I) dy = \lim_{k \to \infty} \int_t^{t+s} \!\text{trace}(\lambda(t_{n_k} + y)) dy \geq s,$$
where the inequality follows from Lem.~\ref{lem1} and the definition of $\lambda(\cdot)$. Thus $\int_t^{t+s} \!\rho(y) dy \geq  \tfrac{s}{d}$ for all $t, s > 0$, implying $\rho(t) \geq \tfrac{1}{d}$ a.e.\ by the Lebesgue differentiation theorem \cite[Thm.\ 7.2.1]{Dud02}. 
\end{proof}

\begin{remark} \rm \label{rmk-2}
We make two comments on the preceding proof:\\*[1pt]
(a) The proofs of Borkar \cite[Thm.\ 3.2]{Bor98,Bor00} ingeniously employ L'H\^{o}pital's rule. While these proofs deal with a function $\lambda(\cdot)$ different from ours, the same reasoning is applicable in our case. It shows that under Assums.~\ref{cond-ss} and~\ref{cond-us}, for each $x > 0$, all these limits in Assum.~\ref{cond-us}(ii), $\lim_{n \to \infty} \frac{ \sum_{k = \nu(n,i)}^{\nu(N(n,x), i)} \alpha_k}{ \sum_{k = \nu(n,j)}^{\nu(N(n,x), j)} \alpha_k}$, $i, j \in \I$, must equal to $1$ a.s. This leads to (\ref{eq-prf-lambda}).\\*[2pt]
(b) In the application of the Lebesgue differentiation theorem, alternative measure-theoretical arguments can be employed. Given that $\int_{t}^{t'} \lambda^*_{ii}(s) ds =   \int_{t}^{t'} \lambda^*_{jj}(s) ds$ for all $0 \leq t < t'$, both $\lambda^*_{ii}(s) ds$ and $\lambda^*_{jj}(s) ds$ define the same $\sigma$-finite measure on $\R_+$ according to \cite[Thm.\ 3.2.6]{Dud02}. Consequently, $\lambda^*_{ii}(s) = \lambda^*_{jj}(s)$ a.e.\ by the Radon-Nikodym theorem \cite[Thm.\ 5.5.4]{Dud02}. Given that $\int_t^{t+s} \!\rho(y) dy \geq  \tfrac{s}{d}$ for all $t, s > 0$, by a differentiation theorem for measures \cite[Chap.\ VII, \secmark8]{Doo53}, $\rho(t) \geq \tfrac{1}{d}$ a.e.
\qed
\end{remark}

As mentioned, our convergence proof uses a different setup. Specifically, we work with the trajectory $\bar x(t)$ defined by \eqref{eq-cont-traj2}, which places the iterate $x_n$ at the temporal coordinate $\tl t(n) = \sum_{k=0}^{n -1} \tl \alpha_k$, using aggregated random stepsizes $\tl \alpha_k = \sum_{i \in Y_k}  \alpha_{\nu(k, i)}$. As we show below, this interpolation scheme leads to a simpler limiting behavior of the associated function $\lambda(\cdot)$, denoted by $\tl \lambda(\cdot)$ in this context.

Let us write algorithm (\ref{eq-alg0}) equivalently in terms of $\{\tl \alpha_n\}$ as 
\begin{equation} \label{eq-alg1}
    x_{n+1}(i)  = x_n(i) + \tl \alpha_n \,  \tl \q(n, i) \left( h_i (x_n) + M_{n+1}(i) + \epsilon_{n+1}(i) \right), \qquad i \in \I,
\end{equation}
where 
$\tl \q(n, i) \= \frac{\alpha_{\nu(n, i)}}{\tl \alpha_n} \ind\{i \in Y_n\}$ and thus $\sum_{i \in \I} \tl \q(n,i) = 1$. 
Define $\tilde \lambda(\cdot)$ as a diagonal matrix-valued, piecewise constant trajectory by
\begin{equation} \label{eq-tlambda}
 \tl{\lambda}(t) \=\text{diag} \big( \, \tl \q(n, 1), \, \tl \q(n, 2), \, \ldots, \, \tl \q(n, d)   \big), \qquad t \in [\tl t(n), \tl t(n+1)), \ n \geq 0.
\end{equation}
We view $\tilde \lambda(\cdot)$ as an element in the space $\tilde \Upsilon$ which comprises all Borel-measurable functions that map $t \geq 0$ to a $d \times d$ diagonal matrix with nonnegative diagonal entries summing to $1$. Regarded as a subset of $\Upsilon$ with the relative topology, $\tilde \Upsilon$ is a compact metric space. 

Let us show that as $t \to \infty$, $\tilde \lambda(t + \cdot)$ has a unique limit point in $\tilde \Upsilon$, given by the constant function $\bar \lambda(\cdot) \equiv \tfrac{1}{d} I$. Our proof actually shows more: it also establishes the equivalence of two conditions on asynchrony used in the literature, \cite[Assum.\ 2.4]{ABB01} and the condition introduced earlier in \cite{Bor98,Bor00}. 

Define $\tilde N(n, x) \= \min \left\{ m > n : \sum_{k=n}^{m} \sum_{i \in Y_k} \alpha_{\nu(k, i)} \geq x \right\}$ for $x > 0$.

\begin{lemma} \label{lem-cvg-1}
Given Assums.~\ref{cond-ss} and~\ref{cond-us}(i), Assum.~\ref{cond-us}(ii) is equivalent to that \begin{equation} \label{eq: alt-ns-cond}
   \text{for each $x > 0$},  \ \ \ \textstyle{\lim_{n \to \infty} \frac{ \sum_{k = \nu(n,i)}^{\nu(\tilde N(n,x), i)} \alpha_k}{ \sum_{k = \nu(n,j)}^{\nu(\tilde N(n,x), j)} \alpha_k} = 1} \ \  a.s., \ \ \ \forall \, i, j \in \I.
 \end{equation}
\end{lemma}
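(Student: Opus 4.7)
Plan. The proof combines three ingredients. The first is the swap-of-summation identity
\[ \sum_{k=n}^{m-1} \tl \alpha_k \;=\; \sum_{k=n}^{m-1} \sum_{i \in Y_k} \alpha_{\nu(k,i)} \;=\; \sum_{i \in \I} \sum_{j=\nu(n,i)}^{\nu(m,i)-1} \alpha_j, \]
which connects the aggregate stepsizes $\tl \alpha_k$ to the per-coordinate partial sums $\phi_n^i(m) \= \sum_{j=\nu(n,i)}^{\nu(m,i)} \alpha_j$ appearing in both conditions, up to a boundary correction $\alpha_{\nu(m,i)}$ that vanishes as $n \to \infty$. The second is the sandwich $N(n, y/(Cd)) \leq \tilde N(n,y) \leq N(n,y) \leq \tilde N(n, Cdy)$ for all $n$ large, a direct consequence of the estimate $\alpha_k \leq \tl \alpha_k \leq Cd\,\alpha_k$ guaranteed by Lem.~\ref{lem1}. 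The third is a variation bound for the difference $D_n(m) \= \phi_n^i(m) - \phi_n^j(m)$: since $|D_n(m+1) - D_n(m)| \leq C \alpha_m$ (again by Lem.~\ref{lem1}), the total variation of $D_n$ over any iteration interval $[m_1, m_2]$ satisfying $\sum_{k=m_1}^{m_2-1} \alpha_k \leq \delta$ is at most $C\delta$.

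For the direction Assum.~\ref{cond-us}(ii) $\Rightarrow$ \eqref{eq: alt-ns-cond}, I would invoke Rem.~\ref{rmk-2}(a) to get $D_n(N(n,y)) = o\bigl(\phi_n^i(N(n,y))\bigr)$ for every $y > 0$. Fixing $x > 0$, I would partition $[x/(Cd), x]$ into points $y_0 < y_1 < \cdots < y_K$ with consecutive spacings at most $\delta$, and observe that the sandwich places $\tilde N(n,x)$ in some interval $[N(n, y_\ell), N(n, y_{\ell+1})]$ (with $\ell$ depending on $n$). The triangle inequality combined with the variation bound then yields $|D_n(\tilde N(n,x))| \leq |D_n(N(n, y_\ell))| + C\delta + o(1)$, while the identity together with the ratios being $1$ at $N(n, y_0)$ forces $\phi_n^i(\tilde N(n,x)) \geq \phi_n^i(N(n, y_0)) \geq x/(Cd^2) + o(1)$. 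Choosing $\delta$ small first, then letting $n \to \infty$, and finally sending $\delta \downarrow 0$ makes the ratio $|D_n(\tilde N(n,x))|/\phi_n^i(\tilde N(n,x))$ arbitrarily small, establishing \eqref{eq: alt-ns-cond}.

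The reverse direction \eqref{eq: alt-ns-cond} $\Rightarrow$ Assum.~\ref{cond-us}(ii) is symmetric. The identity together with $\sum_{k=n}^{\tilde N(n,x)-1} \tl \alpha_k \to x$ and the ratio-$1$ property in \eqref{eq: alt-ns-cond} pins down $\phi_n^i(\tilde N(n,x)) \to x/d$ for each $i$ and each $x > 0$, so in particular $D_n(\tilde N(n,x)) \to 0$. Using the other half of the sandwich, $\tilde N(n,y) \leq N(n,y) \leq \tilde N(n, Cdy)$, and a parallel discretization-plus-variation argument on $[y, Cdy]$, I obtain $D_n(N(n,y)) = o\bigl(\phi_n^i(N(n,y))\bigr)$, whence the limits in Assum.~\ref{cond-us}(ii) exist and equal $1$.

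The main delicacy is bookkeeping the several layers of $o(1)$ corrections---the boundary term $\alpha_{\nu(m,i)}$ in the identity, the slack $\alpha_{N(n,y)}$ in the definition of $N(n,y)$, and the fact that the sum $\sum_{k=n}^{N(n,y)-1} \tl \alpha_k$ is merely bounded within $[y, Cdy]$ rather than convergent---so that they remain negligible compared with the fixed discretization error $C\delta$. The standard ``$\delta$ first, then $n \to \infty$, then $\delta \downarrow 0$'' quantification resolves this.
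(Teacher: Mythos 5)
Your proposal is correct and reaches the same conclusion, but the route is genuinely different from the paper's. The paper works entirely inside the compact function space $\Upsilon$: it extracts a convergent subsequence of $\lambda(t_{n_\ell}+\cdot)$, uses Lem.~\ref{lem4} to identify the limit as $\rho(\cdot)I$ with $\rho\in[1/d,C]$, defines the two auxiliary horizons $\underline{\tau},\bar\tau$ from $\int_0^\cdot\rho$, and then reads off $N(n_\ell,\underline\tau)<\tilde N(n_\ell,x)<N(n_\ell,\bar\tau)$ and the corresponding partial-sum limits from weak-$*$ convergence of $\lambda(t_{n_\ell}+\cdot)$; the reverse direction mirrors this with $\tilde N$ and $N$ swapped, invoking Lem.~\ref{lem-cvg-2} in place of Lem.~\ref{lem4} and the compactness of $\Upsilon$. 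You instead stay entirely combinatorial: you derive the deterministic sandwich $N(n,x/(Cd))\le\tilde N(n,x)\le N(n,x)\le\tilde N(n,Cdx)$ directly from the eventual bound $\alpha_k\le\tilde\alpha_k\le Cd\,\alpha_k$ of Lem.~\ref{lem1}, then interpolate across a finite $\delta$-mesh of thresholds using the one-step oscillation bound $|D_n(m{+}1)-D_n(m)|\le C\alpha_m$ and the fact (Rem.~\ref{rmk-cond-part-async}/Rem.~\ref{rmk-2}(a)) that the limits in Assum.~\ref{cond-us}(ii) are all equal to $1$. The trade-off: the paper's version plugs into the $\Upsilon$-topology machinery that is reused elsewhere (Lems.~\ref{lem4},~\ref{lem-cvg-2},~\ref{lem7-alt}), so the incremental cost is low; your version avoids any appeal to weak-$*$ compactness or the Lebesgue-differentiation step in Lem.~\ref{lem4} and makes the quantitative ``within $C\delta$'' control explicit, at the price of more careful bookkeeping of the several $o(1)$ corrections ($\alpha_{\nu(m,i)}$ boundary terms, $\alpha_{N(n,y)}$ overshoot), which you handle correctly with the ``$\delta$ first, $n\to\infty$, then $\delta\downarrow 0$'' ordering.
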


\begin{proof} 
First, assume Assum.~\ref{cond-us}(ii).  
Consider a sample path for which Assum.~\ref{cond-us} and Lem.~\ref{lem4} hold. Fix $x > 0$. 
By the definition of $\tilde N(n,x)$ and Assum.~\ref{cond-us}(i), we have
$$\sum_{i \in \I} \sum_{k = \nu(n,i)}^{\nu(\tilde N(n,x), i)} \alpha_k \approx  \sum_{k=n}^{\tilde N(n,x)} \sum_{i \in Y_k} \alpha_{\nu(k, i)} \to x \  \ \text{as $n \to \infty$},$$ 
where we define ``$\approx$'' to denote equality up to an $o(1)$ term.
Thus, \eqref{eq: alt-ns-cond} is equivalent to that 
$$\lim_{n \to \infty}  \sum_{k = \nu(n,i)}^{\nu(\tilde N(n,x), i)} \alpha_k  = \frac{x}{d}, \quad \forall \, i \in \I.$$   
To prove this, it suffices to show that for any increasing sequence $\{n_\ell \}_{\ell \geq 1}$ of natural numbers, there is a subsequence $\{n'_\ell \}_{\ell \geq 1}$ along which $\sum_{k = \nu(n'_\ell,i)}^{\nu(\tilde N(n'_\ell,x), i)} \alpha_k  \overset{\ell \to \infty}{\to} \frac{x}{d}$, $\forall i \in \I$.
To this end, with $t_{n_\ell} \= t(n_\ell)$, consider a convergent subsequence of $\{\lambda(t_{n_\ell} + \cdot)\}_{\ell \geq 1}$ in $\Upsilon$, with limit point $\lambda^*(\cdot) = \rho(\cdot) I$ (Lem.~\ref{lem4}). Denote this convergent subsequence again by $\{\lambda(t_{n_\ell} + \cdot)\}_{\ell \geq 1}$, to simplify notation. Thus, $\lambda(t_{n_\ell} + \cdot) \to \lambda^*$ and we need to prove $\sum_{k = \nu(n_\ell,i)}^{\nu(\tilde N(n_\ell,x), i)} \alpha_k  \overset{\ell \to \infty}{\to} \frac{x}{d}$, $\forall i \in \I$.

Choose $\epsilon \in (0, x)$. Since $\rho(s) \in [1/d, C]$ for all $s \geq 0$ by Lem.~\ref{lem4}, the two equations below define uniquely two constants $\underline{\tau} > 0$ and $\bar \tau > 0$, respectively:
$$\int_0^{\underline{\tau}} \rho(s) ds =  \frac{x - \epsilon}{d}, \qquad \int_0^{\bar{\tau}} \rho(s) ds = \frac{x + \epsilon}{d}.$$
Then, since $\lambda(t_{n_\ell} + \cdot) \to \lambda^*$, we have that for all $i \in \I$, as $\ell \to \infty$, 
$$\int_0^{\underline{\tau}} \lambda_{ii}(t_{n_\ell} + s) ds  \to  \frac{x - \epsilon}{d}, \qquad \int_0^{\bar{\tau}} \lambda_{ii}(t_{n_\ell} + s) ds  \to  \frac{x + \epsilon}{d}.$$
By Lem.~\ref{lem1} and the definition of $\lambda$ [see \eqref{eq-def-lambda}], this implies
\begin{equation} \label{eq-lc1-prf2}
  \underline{c}_\ell(i)   \=   \sum_{k = \nu(n_\ell, i)}^{\nu(N(n_\ell,  \underline{\tau}), i)} \alpha_k \, \to  \, \frac{x - \epsilon}{d}, \qquad
  \bar c_\ell(i)    \=    \sum_{k = \nu(n_\ell, i)}^{\nu(N(n_\ell,  \bar{\tau}),  i)} \alpha_k \, \to \, \frac{x + \epsilon}{d}. 
\end{equation}
Hence 
$$\sum_{k=n_\ell}^{N(n_\ell,\underline{\tau})} \sum_{i \in Y_k} \alpha_{\nu(k, i)}  \approx  \sum_{i \in \I} \underline{c}_\ell(i) \, \to \, x - \epsilon, \qquad \sum_{k=n_\ell}^{N(n_\ell,\bar \tau)} \sum_{i \in Y_k} \alpha_{\nu(k, i)}  \approx \sum_{i \in \I} \bar{c}_\ell(i) \, \to \, x + \epsilon.$$
From these relations and the definition of $\tilde N(n, x)$, it follows that for all $\ell$ sufficiently large,
$N(n_\ell,  \underline{\tau}) <  \tilde N(n_\ell, x) <   N(n_\ell,  \bar{\tau})$ and
consequently,
$$\underline{c}_\ell(i)  \leq   \sum_{k = \nu(n_\ell,  i)}^{\nu(\tilde N(n_\ell,x), i)} \alpha_k \leq \bar c_\ell(i), \qquad \forall \, i \in \I.$$
This together with \eqref{eq-lc1-prf2} and the arbitrariness of $\epsilon$ implies that $\sum_{k = \nu(n_\ell, i)}^{\nu(\tilde N(n_\ell,x), i)} \alpha_k \overset{\ell \to \infty}{\to} \frac{x}{d}$, $\forall i \in \I$, proving that Assum.~\ref{cond-us}(ii) entails the stated condition \eqref{eq: alt-ns-cond}.
 
Conversely, to show that Assum.~\ref{cond-us}(ii) is implied by condition~\eqref{eq: alt-ns-cond}, we argue similarly to the preceding proof but with the roles of $\tilde N(n,\cdot)$ and $N(n,\cdot)$ reversed. Instead of Lem.~\ref{lem4}, we use Lem.~\ref{lem-cvg-2} (an implication of condition~\eqref{eq: alt-ns-cond}, presented below), and we also use the compactness of the space $\Upsilon$. For completeness, we now give the details, although our main results do not use this converse part of the lemma.

Assume \eqref{eq: alt-ns-cond} holds, and consider a sample path for which Assum.~\ref{cond-us}(i) and Lem.~\ref{lem-cvg-2} hold. Fix $x > 0$. To prove the existence of the limits required in Assum.~\ref{cond-us}(ii), it suffices to show that for any increasing sequence $\{m_\ell \}_{\ell \geq 1}$ of natural numbers, there is a subsequence $\{n_\ell \}_{\ell \geq 1}$ along which 
 \begin{equation} \label{eq-lc1-prf3}
  \textstyle{\frac{ \sum_{k = \nu(n_\ell,i)}^{\nu(N(n_\ell,x), i)} \alpha_k}{\sum_{j \in \I} \sum_{k = \nu(n_\ell,j)}^{\nu(N(n_\ell,x), j)} \alpha_k}} \, \to \, \frac{1}{d} \ \  \text{as} \ {\ell \to \infty},  \ \ \ \forall \, i \in \I.
 \end{equation}
To this end, with $t_{n_\ell}$ as defined earlier, take any subsequence $\{n_\ell \}_{\ell \geq 1}$ for which $\{\lambda(t_{n_\ell} + \cdot)\}_{\ell \geq 1}$ converges to some $\lambda^*(\cdot)$ in the compact space $\Upsilon$. Then, as $\ell \to \infty$,
$$\int_0^x \text{trace}\big( \lambda(t_{n_\ell} + s) \big) ds \ \to \ \int_0^x \text{trace}\big( \lambda^*(s) \big) ds = : y,$$
which implies, by the definition of $\lambda(\cdot)$ and Lem.~\ref{lem1} (which does not rely on Assum.~\ref{cond-us}(ii)), that
\begin{equation} \label{eq-lc1-prf4}
   \sum_{k = n_\ell}^{N(n_\ell, x)} \sum_{i \in Y_k} \alpha_{\nu(k,i)} \ \to \ y.
\end{equation}

Let $\epsilon \in (0,y)$. Denote $\tl t_{n_\ell} \= \tl t(n_\ell)$. Since $\tl \lambda (t_{n_\ell} + \cdot)$ converges in $\Upsilon$ to the constant function $\tfrac{1}{d} I$ by Lem.~\ref{lem-cvg-2}, we have that for each $i \in \I$, as $\ell \to \infty$,
$$ \int_0^{y - \epsilon} \tl \lambda_{ii}(\tl t_{n_\ell} + s) ds \to \frac{y - \epsilon}{d}, \qquad \int_0^{y + \epsilon} \tl \lambda_{ii}(\tl t_{n_\ell} + s) ds \to \frac{y + \epsilon}{d}.$$
By the definition of $\tl \lambda(\cdot)$, this implies that 
\begin{equation}\label{eq-lc1-prf5}
 \underline{c}_\ell(i)   \=  \sum_{k = \nu(n_\ell, i)}^{\nu(\tl N(n_\ell, y-\epsilon), i)}  \alpha_{k} \, \to \, \frac{y - \epsilon}{d}, \qquad   \bar c_\ell(i)    \= \sum_{k = \nu(n_\ell, i)}^{\nu(\tl N(n_\ell, y + \epsilon), i)}  \alpha_{k} \, \to \, \frac{y + \epsilon}{d},
\end{equation}
and hence
\begin{equation}\label{eq-lc1-prf6}
   \sum_{k = n_\ell}^{\tl N(n_\ell, y-\epsilon)} \sum_{i \in Y_k} \alpha_{\nu(k,i)} \approx\sum_{i \in \I} \underline{c}_\ell(i) \, \to \, y - \epsilon, \qquad  \sum_{k = n_\ell}^{\tl N(n_\ell, y +\epsilon)} \sum_{i \in Y_k} \alpha_{\nu(k,i)} \approx \sum_{i \in \I} \bar{c}_\ell(i) \, \to \, y + \epsilon.
\end{equation}

From \eqref{eq-lc1-prf4} and \eqref{eq-lc1-prf6} it follows that, for all sufficiently large $\ell$, 
$\tl N(n_\ell, y-\epsilon) < N(n_\ell, x) < \tl N(n_\ell, y+\epsilon)$
and consequently, 
\begin{equation}\label{eq-lc1-prf7}
\underline{c}_\ell(i)  \leq  \sum_{k = \nu(n_\ell,  i)}^{\nu(N(n_\ell,x), i)} \alpha_k \leq \bar c_\ell(i), \quad \forall \, i \in \I.
\end{equation}
Combining \eqref{eq-lc1-prf7} with \eqref{eq-lc1-prf5} and letting $\epsilon \downarrow 0$, we obtain  $\sum_{k = \nu(n_\ell,  i)}^{\nu(N(n_\ell,x), i)} \alpha_k \overset{\ell \to \infty}{\to} \frac{y}{d}$, $\forall \, i \in \I$, and hence \eqref{eq-lc1-prf3} holds. This proves that condition \eqref{eq: alt-ns-cond} entails Assum.~\ref{cond-us}(ii), establishing the equivalence of the two conditions.
 \end{proof}
 
From \eqref{eq: alt-ns-cond} and Assums.~\ref{cond-ss}, \ref{cond-us}(i), Lem.~\ref{lem-cvg-2} follows by \cite[proof of Thm.\ 3.2]{Bor98,Bor00}: 

\begin{lemma}  \label{lem-cvg-2}
As $t \to \infty$, $\tl \lambda(t + \cdot)$ converges a.s.\ in $\tl \Upsilon$ to $\bar \lambda(\cdot) \equiv \tfrac{1}{d} I$.
\end{lemma}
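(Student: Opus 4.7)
The plan is to adapt the argument of Lem.~\ref{lem4} to the present setting, exploiting the compactness and metrizability of $\tl \Upsilon$ to reduce a.s.\ convergence to a.s.\ uniqueness of subsequential limits. Concretely, I want to show that every limit point of $\tl \lambda(t + \cdot)$ in $\tl \Upsilon$ along any deterministic sequence $t_n \uparrow \infty$ equals $\bar \lambda \equiv \tfrac{1}{d} I$; together with sequential compactness of $\tl \Upsilon$ this yields the claim.

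First I would fix a countable dense set $\{t^\ell\} \subset \R_+$ and restrict attention to sample paths on which Assum.~\ref{cond-us}(i) holds and condition~\eqref{eq: alt-ns-cond} holds simultaneously for every $x = t^\ell$; by Lem.~\ref{lem-cvg-1} this is a probability-one event. Fix such a path and any sequence $t_n \uparrow \infty$. By compactness of $\tl \Upsilon$, pass to a subsequence (still indexed by $n$) along which $\tl \lambda(t_n + \cdot) \to \tl \lambda^*$ in $\tl \Upsilon$. Under the topology of $\tl \Upsilon$, this means $\int_0^{t^\ell} \tl \lambda_{ii}(t_n + s)\, ds \to \int_0^{t^\ell} \tl \lambda^*_{ii}(s)\, ds$ for each $i \in \I$ and each $\ell$.

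Next I would translate these integral limits back into the ratio form that \eqref{eq: alt-ns-cond} controls. By the definition of $\tl \lambda$ in~\eqref{eq-tlambda} and of $\tl N(n,\cdot)$, the integrals $\int_0^{t^\ell} \tl \lambda_{ii}(t_n + s)\, ds$ differ from the partial sums $\sum_{k = \nu(n, i)}^{\nu(\tl N(n, t^\ell), i)} \alpha_k$ only by a negligible boundary term (since $\alpha_n \to 0$). Applying L'H\^opital's rule to the ratios as in the arguments of Borkar~\cite[Thm.~3.2]{Bor98,Bor00}, together with~\eqref{eq: alt-ns-cond}, yields
\begin{equation*}
\int_0^{t^\ell} \tl \lambda^*_{ii}(s)\, ds = \int_0^{t^\ell} \tl \lambda^*_{jj}(s)\, ds, \qquad \forall \, i, j \in \I, \ \forall \, \ell \geq 1.
\end{equation*}
Density of $\{t^\ell\}$ in $\R_+$ and the Lebesgue differentiation theorem (as in the proof of Lem.~\ref{lem4}) then force $\tl \lambda^*_{ii}(s) = \tl \lambda^*_{jj}(s)$ a.e., so $\tl \lambda^*(s) = \tl \rho(s) I$ for some Borel-measurable $\tl \rho \geq 0$.

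To pin down $\tl \rho$, I would use the defining constraint of $\tl \Upsilon$: by construction $\sum_{i \in \I} \tl b(n,i) = 1$, so $\operatorname{trace}(\tl \lambda(t)) = 1$ for a.e.\ $t \geq 0$, and hence $\int_0^{t^\ell} \operatorname{trace}(\tl \lambda(t_n + s))\, ds = t^\ell$ for every $n$. Taking $n \to \infty$ gives $d \int_0^{t^\ell} \tl \rho(s)\, ds = t^\ell$, and density plus Lebesgue differentiation force $\tl \rho(s) = \tfrac{1}{d}$ a.e., i.e., $\tl \lambda^* = \bar \lambda$ in $\tl \Upsilon$. Since every subsequential limit along any $t_n \uparrow \infty$ equals the single point $\bar \lambda$, metrizability of $\tl \Upsilon$ gives $\tl \lambda(t + \cdot) \to \bar \lambda$ as $t \to \infty$ on the chosen full-measure event, completing the proof. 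The only delicate step is the rigorous transfer between the integral quantities and the L'H\^opital-ready ratios in~\eqref{eq: alt-ns-cond}; the rest mirrors Lem.~\ref{lem4} almost verbatim, with the unit-trace normalization making the final identification $\tl \rho \equiv 1/d$ automatic rather than requiring an additional lower-bound argument.
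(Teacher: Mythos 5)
Your proof is correct and takes essentially the same route the paper sketches: the paper proves Lem.~\ref{lem-cvg-2} by a brief appeal to Borkar's Thm.~3.2 ``similarly to the proof of Lem.~\ref{lem4}'', starting from~\eqref{eq: alt-ns-cond} and Assums.~\ref{cond-ss},~\ref{cond-us}(i), which is exactly the argument you have reconstructed in detail (compact metrizability of $\tl\Upsilon$ reduces the claim to uniqueness of subsequential limits, then the transfer to the ratio condition and Lebesgue differentiation give the scalar-multiple form). Your observation that the unit-trace normalization defining $\tl\Upsilon$ automatically pins the multiplier down to exactly $\tfrac{1}{d}$, whereas in Lem.~\ref{lem4} one could only locate it in the interval $[\tfrac{1}{d}, C]$, is precisely what makes this limit unique and is the key point the paper is relying on.
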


This lemma implies a unique limiting ODE associated with the interpolated trajectory $\bar x(\cdot)$ in \eqref{eq-cont-traj2} (see Sec.~\ref{sec-cvg}). As noted earlier, this uniqueness is not only technically convenient for the convergence analysis but also facilitates further investigation of the shadowing properties of the asynchronous SA algorithm \eqref{eq-alg0} (see Sec.~\ref{sec-shad}). 

\begin{remark}  \label{rmk-4} \rm
A different interpolation scheme is used in \cite[Chap.\ 6.2]{Bor23} and \cite{Bha11} to define the trajectory $\bar x(\cdot)$ and the associated $\tilde \lambda(\cdot)$. In this scheme, $\hat \alpha_n \= \max_{i \in Y_n}  \alpha_{\nu(n, i)}$ represents the elapsed time between the $n$th and the $n+1$th iterates, and the piecewise constant trajectory $\tilde \lambda(\cdot)$ is defined by the ratios $\alpha_{\nu(n, i)} \ind\{ i \in Y_n\} / \hat \alpha_n$, $i \in \I$, during that time interval. As these ratios are bounded by $1$, the resulting $\tilde \lambda(\cdot)$ also lies in a compact metric space. However, it is not clear what the limit points of $\tilde \lambda(t+\cdot)$ are as $t \to \infty$, \emph{under our Assums.~\ref{cond-ss} and~\ref{cond-us} on the stepsizes and asynchrony}. These limit points would be of the form $\rho(t) I$, as in our first scheme, if the reasoning in \cite[Thm.\ 3.2]{Bor98,Bor00} were applicable. That reasoning, however, requires the condition that for each $x > 0$ and 
$\hat N(n, x) \= \min \left\{ m > n : \sum_{k=n}^{m} \max_{i \in Y_k} \alpha_{\nu(k, i)} \geq x \right\}$,
$$ \text{the limit $\lim_{n \to \infty} \tfrac{ \sum_{k = \nu(n,i)}^{\nu(\hat N(n,x), i)} \alpha_k}{ \sum_{k = \nu(n,j)}^{\nu(\hat N(n,x), j)} \alpha_k} $ exists a.s.\ for all $i, j \in \I$,}$$ 
which is identical to our Assum.~\ref{cond-us}(ii) except that $\hat N(n, x)$ replaces $N(n,x)$. 
It is not clear whether this condition is satisfied under our Assums.~\ref{cond-ss} and~\ref{cond-us}. \myqed
\end{remark}

\subsection{Stability Proof} \label{sec-stab}
In this subsection, we prove Thm.~\ref{thm-1} on the boundedness of the iterates $\{x_n\}$. Following the method of Borkar and Meyn \cite{BoM00}, as presented in \cite[Chap.\ 4.2]{Bor23}, we analyze scaled iterates and relate their asymptotic behavior to solutions of certain limiting ODEs involving the function $h_\infty$. Our proof parallels the stability analysis in \cite[Chap.\ 4.2]{Bor23} for synchronous algorithms and is organized into two groups of intermediate results. The first group (Sec.~\ref{sec-stab-prf1}) shows how scaled iterates progressively `track' ODE solutions associated with corresponding scaled functions $h_c$, where we employ stopping arguments to construct auxiliary processes for the asynchronous setting. The second group (Sec.~\ref{sec-stab-prf2}) establishes a stability-related solution property for these ODEs as the scale factor $c$ tends to infinity. With these components in place, the proof concludes in a similar manner to \cite[Chap.\ 4.2]{Bor23}.

\subsubsection{Relating Scaled Iterates to ODE Solutions} \label{sec-stab-prf1}

Consider algorithm \eqref{eq-alg0} in its equivalent form \eqref{eq-alg} and the continuous trajectory $\bar x(t)$ defined in \eqref{eq-cont-traj1}. Following \cite{BoM00} and \cite[Chap.\ 4.2]{Bor23}, we work with a scaled trajectory $\hat x(\cdot)$ derived from $\bar x(\cdot)$ as follows: divide the time axis into intervals of about length $T$, and on each interval, scale $\bar x(\cdot)$ so that the value at the start lies within the unit ball. 

Specifically, let $T > 0$; we will choose a specific value for $T$ later in the proof. Recall each iterate $x_m$ is positioned at time $t(m) = \sum_{k=0}^{m-1} \alpha_k$ with $t(0) = 0$, as defined previously. With $m(0) \=0$ and $T_0 \= 0$, define recursively, for $n \geq 0$,
\begin{equation} \label{eq-def-tm}
   m(n+1) \= \min \{ m : t(m) \geq T_n + T \}, \quad T_{n+1} \= t\big(m(n+1)\big).
\end{equation} 
This divides $[0, \infty)$ into intervals $[T_n, T_{n+1})$, $n \geq 0$, with $T_{n+1} - T_n \to T$ as $n \to \infty$. To simplify expressions, we assume $\sup_n \alpha_n \leq 1$ in this proof, so that each interval is at most $T+1$ in length. 
We then define a piecewise linear function $\hat x(\cdot)$ by scaling $\bar x(t)$ as follows: 
for each $n \geq 0$, with $r(n) \= \| x_{m(n)} \| \vee 1$, 
\begin{equation} \label{eq-hx0}
\hat x(t) \= \bar x(t) / r(n)   \ \ \ \text{for } t \in [T_n, T_{n+1}).
\end{equation}
As $\hat x(\cdot)$ can have `jumps' at times $T_1, T_2, \ldots$, to analyze the behavior of $\hat x(t)$ on the semi-closed interval $[T_n, T_{n+1})$, we introduce, for notational convenience, a `copy' denoted by $\hat x^n(t)$ defined on the closed interval $[T_n, T_{n+1}]$: 
\begin{equation} \label{eq-hx0-cpy}
\hat x^n(t) \= \hat x(t) \ \ \text{for} \  t \in [T_n, T_{n+1}), \qquad \hat x^n(T_{n+1}) \= \hat x(T_{n+1}^-) \= \lim_{t \uparrow T_{n+1}} \hat x(t).
\end{equation}

Let $x^n: [T_n, T_{n+1}] \to \R^{d}$ be the unique solution of the ODE defined by the scaled function $h_{r(n)}$ and the trajectory $\lambda(\cdot)$ given in \eqref{eq-def-lambda}, with initial condition $\hat x(T_n)$:
\begin{equation} \label{eq-ode0}
\dot{x}(t) = \lambda(t) h_{r(n)} (x(t)), \ \ t \in [T_n, T_{n+1}],  \ \ \text{with $x^n(T_n) = \hat x(T_n) = x_{m(n)}/r(n)$}.
\end{equation}
We aim to show that as $n \to \infty$, $\hat x^n(\cdot)$ `tracks' the ODE solution $x^n(\cdot)$.

A key intermediate result is proving $\sup_t \| \hat x(t) \| < \infty$. For \emph{synchronous} SA (under stronger noise conditions than ours), this is shown in \cite[Chap.\ 4.2]{Bor23} in several steps, starting with $\sup_{t} \E [ \| \hat x(t) \|^2] < \infty$, proved through the bound in \cite[Lem.~4.3]{Bor23} that for some constants $\bar K_1, \bar K_2$ independent of $n$, 
\begin{equation} \label{eq-bd-hatx}
 \E \left[ \| \hat x^n(t(k+1)) \|^2 \right]^{\frac{1}{2}} \leq e^{\bar K_1(T+1)} ( 1 + \bar K_2 (T+1) ), \quad m(n) \leq k < m(n+1).
\end{equation} 

In the \emph{asynchronous} case here, we will take a similar approach. However, \eqref{eq-bd-hatx} need not hold for $\{\hat x^n(\cdot)\}$ in our case. 
By \eqref{eq-alg} and the definition of $\hat x(\cdot)$, we have that for $k$ with $m(n) \leq k < m(n+1)$,
\begin{align}  \label{eq-hx}
  \hat x^n(t(k+1)) & = \hat x^n(t(k)) + \alpha_k \Lambda_k h_{r(n)}(\hat x^n(t(k)))  
    + \alpha_k \Lambda_k \hat M_{k+1} + \alpha_k \Lambda_k \hat \epsilon_{k+1},
\end{align}  
where 
$\hat M_{k+1} \= M_{k+1} / r(n)$, $\hat \epsilon_{k+1} \= \epsilon_{k+1}/r(n)$, and 
$$  \Lambda_k \= \text{diag} \big( \q(k, 1), \q(k, 2), \ldots, \q(k, d) \big).$$
Since $r(n) \geq 1$ and $\hat x^n(t(k)) = x_k/r(n)$, by Assum.~\ref{cond-ns}, almost surely,
\begin{align}
\E [ \| \hat M_{k+1} \| ] \!<\! \infty,  \ \ \  & \E [ \hat M_{k+1} \nmid \F_k ] \!= 0,   \  \  \ \E [ \| \hat M_{k+1} \|^2 \nmid \F_k ] \leq K_k ( 1 + \| \hat x^n(t(k)) \|^2),\label{eq-s-noise1} \\ 
& \|\hat \epsilon_{k+1} \| \leq \delta_{k+1} ( 1 + \| \hat x^n(t(k)) \|). \label{eq-s-noise2}
\end{align}
While the scalars $\{ K_k\}_{k \geq 0}$ in \eqref{eq-s-noise1} and the entries of the diagonal matrices $\{\Lambda_k \}_{k \geq 0}$ are bounded a.s.\ by Assum.~\ref{cond-ns}(i) and Lem.~\ref{lem1}, respectively, they need not be bounded by a deterministic constant. While $\delta_{k+1} \to 0$ a.s.\ by Assum.~\ref{cond-ns}(ii), there is no requirement on the conditional variance of $\delta_{k+1}$. These factors prevent us from directly applying the arguments of \cite[Chap.\ 4.2]{Bor23}, which rely on the relation \eqref{eq-bd-hatx}, to prove the desired boundedness of the scaled trajectory $\hat x(\cdot)$.

To work around this issue, we now use stopping techniques to construct `better-behaved' auxiliary processes $\tilde x^n(t)$ on $[T_n, T_{n+1}]$ for $n \geq 0$. Later we will relate them to $\hat x^n(\cdot)$ to establish $\sup_t \| \hat x(t) \| < \infty$.

Let $C$ be the constant given by Lem.~\ref{lem1}, and fix $\bar a > 0$. The following construction applies to each positive integer $N \geq 1$. For notational simplicity, however, we will temporarily suppress the indication of this dependence on $N$ in the constructed processes $\tilde x^n(\cdot)$, $n \geq 0$, until we relate them to the original processes $\hat x^n(\cdot)$.
 
Let $N \geq 1$. For $n \geq 0$, define $k_n \= k_{n,1} \wedge k_{n,2}$, where
\begin{align} 
k_{n,1}  &  \= \min \Big\{  k \, \big| \, K_k > N  \ \text{or} \ \max_{i \in \I} \q(k, i) > C;\, m(n) \leq k < m(n+1) \Big\},  \\ 
 k_{n,2}  & \= \min \left\{  k \, \big| \,  \delta_k  > \bar a, \  m(n)+1 \leq k \leq m(n+1) \right\},   
\end{align}
with $k_{n,1} \= \infty$ and $k_{n,2} \= \infty$ if the sets in their respective defining equations are empty. Then define $\tilde x^n(\cdot)$ on $[T_n, T_{n+1}]$ as follows. 
Let $\tilde x^n(T_n) \= \hat x(T_n)$. For $m(n) \leq k < m(n+1)$, let
\begin{align}  \label{eq-tx}
 \tilde x^n(t(k+1)) & = \tilde x^n(t(k)) + \alpha_k \tilde \Lambda_k h_{r(n)}(\tilde x^n(t(k)))  + \alpha_k \tilde M_{k+1} + \alpha_k \tilde \epsilon_{k+1},
\end{align}   
where $\tilde \Lambda_k \= \ind\{k < k_n\} \Lambda_k$, $\tilde M_{k+1}  \=  \tl \Lambda_k \hat M_{k+1}$, and 
\begin{equation} \label{eq-txa}
  \tilde \epsilon_{k+1} \= \tl \Lambda_k \cdot \ind\{k+1 < k_{n,2}\} \hat \epsilon_{k+1}.
\end{equation}
Finally, on the interval $(t(k), t(k+1))$, let $\tl x^n(t)$ be the linear interpolation between $\tl x^n(t(k))$ and $\tl x^n(t(k+1))$.
As can be seen, in each time interval $[T_n, T_{n+1}]$, 
\begin{align} 
\tilde x^n(t) = \tl x^n (t(k_n)),  \ \ \ &  \text{if} \  t \geq t(k_n) \text{\ (where $t(\infty) \= \infty$});  \notag \\
 \tilde x^n(t)   = \hat x^n (t) \ \  \text{for} \ t \leq t(k_n),  \ \ \    & \text{if} \ k_n = k_{n,1} < k_{n,2};  \label{eq-tx-hx1} \\
   \tilde x^n(t)  = \hat x^n (t) \ \  \text{for} \ t \leq t(k_n-1),  \   \ \ &  \text{if} \ k_n = k_{n,2} \leq k_{n,1}.  \label{eq-tx-hx2}
\end{align}

By definition $k_{n,1}, k_{n,2},$ and $k_n$ are stopping times w.r.t.\ $\{\F_k\}$, so $\ind\{k < k_n\}$ is $\F_k$-measurable and $\ind\{k+1 < k_{n,2}\}$ is $\F_{k+1}$-measurable. Hence $\tl \Lambda_k$ is $\F_k$-measurable, whereas $\tl M_{k+1}$ and $\tl \epsilon_{k+1}$ are $\F_{k+1}$-measurable. As the entries of the diagonal matrices $\Lambda_k$, $m(n) \leq k < k_n$, are all bounded by $C$, the matrix norm of $\tilde \Lambda_k = \ind\{k < k_n\} \Lambda_k$ can be bounded by a deterministic constant $\bar C$:
\begin{equation} \label{eq-Lambda}
   \| \tilde \Lambda_k \| \leq \bar C,  \qquad \forall \,  m(n) \leq k < m(n+1).
\end{equation}   
Moreover, by the construction of $\tilde x^n$ in \eqref{eq-tx}-\eqref{eq-txa} and Assum.~\ref{cond-ns} [cf.\ \eqref{eq-s-noise1}-\eqref{eq-s-noise2}], we have:

\begin{lemma} \label{lem2a}
For $n \geq 0$ and all $k$ with $m(n) \leq k < m(n+1)$, $\E [ \| \tilde M_{k+1} \| ] < \infty$, $\E [ \tilde M_{k+1} \nmid \F_k ] = 0$ a.s., and
\begin{align} 
   \E [ \| \tilde M_{k+1} \|^2 \mid \F_k ]  & \leq \bar C^2 N ( 1 + \| \tilde x^n(t(k)) \|^2) \ \ a.s., \label{eq-prf-noise1} \\
     \| \tilde \epsilon_{k+1} \|  & \leq \bar C (\bar a \wedge \delta_{k+1}) ( 1 + \| \tilde x^n(t(k)) \|) \ \ a.s. \label{eq-prf-noise2}
\end{align}
\end{lemma}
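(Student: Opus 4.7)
The plan is to directly unwind the definitions of $\tl M_{k+1}$ and $\tl \epsilon_{k+1}$ and exploit the two truncations built into the construction of $\tl x^n$: the stopping-time factor $\ind\{k < k_n\}$ hidden inside $\tl \Lambda_k$, and the extra factor $\ind\{k+1 < k_{n,2}\}$ in the definition \eqref{eq-txa} of $\tl \epsilon_{k+1}$. These factors are engineered precisely so that whenever the corresponding noise term is nonzero, the bound $K_k \leq N$ from Assum.~\ref{cond-ns}(i) and the bound $\delta_{k+1} \leq \bar a$ from Assum.~\ref{cond-ns}(ii) are automatically enforced.

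I would first settle the martingale properties of $\{\tl M_{k+1}\}$. Since $T_n$ and $m(n)$ are deterministic and $m(n)\leq k$, the scalar $r(n)=\|x_{m(n)}\|\vee 1$ is $\F_k$-measurable, and $\tl \Lambda_k$ is $\F_k$-measurable by construction of the stopping times $k_{n,1}, k_{n,2}, k_n$. Hence $\tl \Lambda_k/r(n)$ pulls outside the conditional expectation given $\F_k$, and $\E[\tl M_{k+1}\mid\F_k]=0$ follows directly from $\E[M_{k+1}\mid\F_k]=0$. Integrability $\E[\|\tl M_{k+1}\|]<\infty$ is immediate from the uniform bound $\|\tl \Lambda_k\|\leq\bar C$ in \eqref{eq-Lambda}, combined with $r(n)\geq 1$ and $\E[\|M_{k+1}\|]<\infty$.

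For the conditional variance bound, I would start from
\begin{equation*}
  \E[\|\tl M_{k+1}\|^2\mid\F_k]\leq \bar C^2\,\ind\{k<k_n\}\,\E[\|M_{k+1}\|^2\mid\F_k]/r(n)^2.
\end{equation*}
Since $\{k<k_n\}\subset\{k<k_{n,1}\}$, the definition of $k_{n,1}$ forces $K_k\leq N$ on this event, so Assum.~\ref{cond-ns}(i) yields $\ind\{k<k_n\}\E[\|M_{k+1}\|^2\mid\F_k]\leq N(1+\|x_k\|^2)\ind\{k<k_n\}$. Dividing by $r(n)^2\geq 1$ and using $x_k/r(n)=\hat x^n(t(k))$ produces $N(1+\|\hat x^n(t(k))\|^2)$. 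The final step is to replace $\hat x^n(t(k))$ by $\tl x^n(t(k))$, which is valid on $\{k<k_n\}$: in both case \eqref{eq-tx-hx1} and case \eqref{eq-tx-hx2}, the two trajectories coincide at $t(k)$ whenever $k<k_n$. This delivers the required $\bar C^2 N(1+\|\tl x^n(t(k))\|^2)$ bound.

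The argument for $\tl \epsilon_{k+1}$ is analogous: $\|\tl \Lambda_k\|\leq\bar C$ together with Assum.~\ref{cond-ns}(ii) gives
\begin{equation*}
  \|\tl \epsilon_{k+1}\|\leq \bar C\,\ind\{k<k_n\}\,\ind\{k+1<k_{n,2}\}\,\delta_{k+1}(1+\|x_k\|)/r(n).
\end{equation*}
On $\{k+1<k_{n,2}\}$ the definition of $k_{n,2}$ forces $\delta_{k+1}\leq \bar a$, so $\ind\{k+1<k_{n,2}\}\delta_{k+1}\leq \bar a\wedge\delta_{k+1}$, while off this event the indicator vanishes and the inequality is trivial. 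Using $r(n)\geq 1$ and the identification $\hat x^n(t(k))=\tl x^n(t(k))$ on $\{k<k_n\}$ once more completes the proof. The only subtle step in the whole argument is this last identification: I would have to go through the two cases \eqref{eq-tx-hx1} and \eqref{eq-tx-hx2} and check that $t(k)\leq t(k_n)$ in Case 1 and $t(k)\leq t(k_n-1)$ in Case 2 whenever the indicators in question are nonzero. Everything else is routine propagation of Assum.~\ref{cond-ns} through the truncation.
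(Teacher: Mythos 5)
Your proposal is correct and follows essentially the same route as the paper's own proof: unwind the truncations $\ind\{k<k_n\}$ and $\ind\{k+1<k_{n,2}\}$, use the bounds $K_k\leq N$ and $\delta_{k+1}\leq\bar a$ that they enforce, and then substitute $\tl x^n(t(k))=\hat x^n(t(k))$ on $\{k<k_n\}$ via the case analysis in \eqref{eq-tx-hx1}--\eqref{eq-tx-hx2}. The minor elaborations you add (explicit $\F_k$-measurability of $r(n)$ via the deterministic $m(n)$, and the observation $\ind\{k+1<k_{n,2}\}\delta_{k+1}\leq\bar a\wedge\delta_{k+1}$) are precisely the steps the paper states tersely, so the two arguments coincide.
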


\begin{proof}
Since $\tilde M_{k+1} = \tl \Lambda_k \hat M_{k+1}$, by \eqref{eq-s-noise1} and \eqref{eq-Lambda}, we have
$\E [ \| \tilde M_{k+1} \| ] < \infty$, $\E [ \tilde M_{k+1} \mid \F_k ] = 0$ a.s., and moreover, by the definitions of $\tl \Lambda_k$ and $k_n$,
\begin{equation} \label{eq-l2a-prf1}
\E [ \| \tilde M_{k+1} \|^2 \mid \F_k ] \leq \bar C^2 \ind\{k < k_n\} \, \E [ \| \hat M_{k+1} \|^2 \mid \F_k ].
\end{equation}
If $k < k_n$, $\tilde x^n(t(k)) = \hat x^n(t(k))$ by 
(\ref{eq-tx-hx1})--(\ref{eq-tx-hx2}) and $K_k \leq N$ by the definition of $k_n$. Therefore, by \eqref{eq-s-noise1}, 
\begin{equation} 
 \E [ \| \hat M_{k+1} \|^2 \!\mid \F_k ] \leq K_k ( 1 + \| \hat x^n(t(k)) \|^2) \leq N ( 1 + \| \tl x^n(t(k)) \|^2) \ \  a.s.\ \text{on} \  \{k < k_n\}, \notag
 \end{equation} 
which together with (\ref{eq-l2a-prf1}) proves (\ref{eq-prf-noise1}).

To prove \eqref{eq-prf-noise2}, observe that by the definition of $\tl \epsilon_{k+1}$ [see (\ref{eq-txa})] and the definition of $\tl \Lambda_k$,
$$ \tl \epsilon_{k+1} =  \tl \Lambda_k \cdot \ind\{k < k_n, \, k+1 < k_{n,2}\} \, \hat \epsilon_{k+1}. $$
Then, by \eqref{eq-s-noise2} and \eqref{eq-Lambda}, 
\begin{equation} \label{eq-l2a-prf2}
   \| \tl \epsilon_{k+1} \| \leq \bar C \cdot \ind\{k < k_n, \, k+1 < k_{n,2}\} \, \delta_{k+1} ( 1 + \| \hat x^n(t(k)) \|) \ \ a.s.
\end{equation}   
By the definition of $k_{n,2}$, $\delta_{k+1} \leq \bar a$ if $k + 1 < k_{n,2}$; by \eqref{eq-tx-hx1}--\eqref{eq-tx-hx2}, $\hat x^n(t(k)) = \tilde x^n(t(k))$ if $k < k_n$. Hence, \eqref{eq-l2a-prf2} implies \eqref{eq-prf-noise2}.
\end{proof}

The next lemma applies the proof arguments from \cite[Chap.\ 4.2]{Bor23} to the auxiliary processes $\{\tilde x^n(\cdot)\}$. We will outline the proof, omitting similar details.

\begin{lemma} \label{lem2} The following hold for $\{\tilde x^n(\cdot)\}$:
\begin{enumerate}[leftmargin=0.7cm,labelwidth=!]
\item[\rm (i)] $\sup_{n \geq 0} \sup_{t \in [T_n, \, T_{n+1}]} \E [ \| \tilde x^n(t) \|^2 ] < \infty$.
\item[\rm (ii)] $\tilde \zeta_m \= \sum_{k=0}^{m-1} \alpha_k \tilde M_{k+1}$ converges a.s.\ in $\R^{d}$ as $m \to \infty$.
\item[\rm (iii)] $\sup_{n \geq 0} \sup_{t \in [T_n, \, T_{n+1}]} \| \tilde x^n(t) \|  < \infty$ a.s.
\item[\rm (iv)] $\lim_{n \to \infty} \sup_{t \in [T_n,\, t(k_n) \wedge T_{n+1}]} \left\| \tilde x^n(t) - x^n(t) \right\| = 0$ a.s.
\end{enumerate}
\end{lemma}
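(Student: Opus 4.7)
The strategy is to apply to the auxiliary processes $\{\tilde x^n(\cdot)\}$ the Borkar--Meyn proof template for synchronous SA~\cite[Chap.\ 4.2]{Bor23}. What makes this viable is precisely the stopping-time construction: by Lem.~\ref{lem2a} the conditional variance of $\tilde M_{k+1}$ and the proportional part of the bias $\tilde\epsilon_{k+1}$ are controlled by \emph{deterministic} constants ($\bar C^2 N$ and $\bar C\bar a$), and by \eqref{eq-Lambda} the matrices $\tilde\Lambda_k$ satisfy a deterministic operator-norm bound. For part (i), I would square the recursion \eqref{eq-tx}, take conditional expectation on $\F_k$, and use $\|h_{r(n)}(x)\| \leq L_h \|x\| + \|h(0)\|$ (valid since $r(n) \geq 1$ and $h_c$ inherits $L_h$ from $h$ by Rem.~\ref{rmk-cond-h}) together with \eqref{eq-prf-noise1}--\eqref{eq-prf-noise2} to obtain
\[
\E\bigl[\|\tilde x^n(t(k+1))\|^2 \mid \F_k\bigr] \leq (1 + K_1 \alpha_k + K_2 \alpha_k^2)\,\|\tilde x^n(t(k))\|^2 + K_3(\alpha_k + \alpha_k^2),
\]
with $K_1, K_2, K_3$ deterministic and depending only on $N, \bar a, L_h, \bar C, \|h(0)\|$. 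Iterating for $m(n) \leq k < m(n+1)$, using $\sum_{k=m(n)}^{m(n+1)-1} \alpha_k \leq T+1$, $\sum_k \alpha_k^2 < \infty$, and $\|\tilde x^n(T_n)\| = \|\hat x(T_n)\| \leq 1$, yields an $L^2$-bound uniform in $n$ and $t \in [T_n, T_{n+1}]$.

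Part (ii) follows at once: $\tilde\zeta_m = \sum_{k=0}^{m-1}\alpha_k \tilde M_{k+1}$ is an $\{\F_m\}$-martingale, and by \eqref{eq-prf-noise1}, (i), and Assum.~\ref{cond-ss}(i), $\sum_k \alpha_k^2 \E[\|\tilde M_{k+1}\|^2] < \infty$, so the $L^2$-bounded martingale convergence theorem delivers the a.s.\ limit. For (iii), I would fix a sample path on which $\tilde\zeta_m$ converges, telescope \eqref{eq-tx} back to $T_n$, and bound the drift via the Lipschitz estimate for $h_{r(n)}$, the noise by $\sup_j \|\tilde\zeta_j - \tilde\zeta_{m(n)}\|$ (finite because $\tilde\zeta$ converges), and the bias by $\|\tilde\epsilon_{k+1}\| \leq \bar C\bar a(1 + \|\tilde x^n(t(k))\|)$ from \eqref{eq-prf-noise2}. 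A pathwise discrete Gronwall inequality on the interval $[T_n, T_{n+1}]$, of length at most $T+1$, then produces the sample-path bound claimed in (iii).

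Part (iv) is the standard ODE-tracking comparison. On $[T_n, t(k_n) \wedge T_{n+1}]$ the discrete scheme \eqref{eq-tx} and the ODE \eqref{eq-ode0} share the initial condition $\hat x(T_n)$ and the same time-dependent field $\lambda(t) h_{r(n)}(\cdot)$, because for $k < k_n$ we have $\tilde\Lambda_k = \Lambda_k$ equal to the diagonal of $\lambda(t)$ on $[t(k), t(k+1))$. Subtracting the integrated ODE at $t(k+1)$ from \eqref{eq-tx} and splitting $h_{r(n)}(\tilde x^n(t(j))) - h_{r(n)}(x^n(s))$ into a Lipschitz-in-the-difference term plus an $O(\alpha_j)$ ODE-interpolation remainder (controlled by (iii) and a Gronwall bound on $x^n$), I expect an estimate of the form
\[
\|\tilde x^n(t(k+1)) - x^n(t(k+1))\| \leq L_h \bar C \sum_{j=m(n)}^{k} \alpha_j \, \|\tilde x^n(t(j)) - x^n(t(j))\| + \varepsilon_n,
\]
with $\varepsilon_n \to 0$ absorbing (a) the martingale tail $\max_k \|\tilde\zeta_{k+1} - \tilde\zeta_{m(n)}\| \to 0$ by (ii); (b) the cumulative bias, bounded by $\bar C(T+1)(1 + \sup_{j,t} \|\tilde x^j(t)\|) \sup_{j \geq m(n)} \delta_{j+1} \to 0$ since $\delta_n \to 0$ a.s.\ and (iii) holds; and (c) an $O(\max_{k \geq m(n)} \alpha_k)$ time-discretization term. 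Discrete Gronwall with factor $e^{L_h \bar C (T+1)}$ then closes the proof.

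The main technical delicacy is sequencing together with the \emph{uniformity} of all constants: (i) and (ii) must produce bounds independent of $n$ and of $r(n)$, which crucially relies on the scaling-invariance of the Lipschitz constant under $h \mapsto h_c$ (Rem.~\ref{rmk-cond-h}) together with the deterministic truncations built into Lem.~\ref{lem2a}; only then does (iii) produce a sample-path bound from which the ODE comparison in (iv) can be closed. A further subtlety is that $\lambda(\cdot)$ is merely piecewise constant, so the ODE \eqref{eq-ode0} must be integrated along this stepwise schedule and matched exactly against \eqref{eq-tx}; this causes no trouble because $\lambda$ coincides with $\tilde\Lambda_k$ on $[t(k), t(k+1))$ for $k < k_n$.
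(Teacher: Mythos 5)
Your proposal takes essentially the same route as the paper's proof outline, which defers the details to Borkar's synchronous arguments in exactly the way you do: both exploit the stopping-time truncations of Lem.~\ref{lem2a} and~\eqref{eq-Lambda} to replace random quantities by deterministic bounds, after which the $L^2$-recursion for (i), martingale convergence for (ii), pathwise Gronwall for (iii), and the ODE comparison (with the observation that $\tilde\Lambda_k = \Lambda_k$ coincides with the diagonal of $\lambda(\cdot)$ for $k<k_n$) for (iv) go through unchanged. The only minor variation is in (ii), where you invoke $L^2$-bounded martingale convergence via $\sum_k\alpha_k^2\,\E[\|\tilde M_{k+1}\|^2]<\infty$, while the paper works with the a.s.\ finiteness of the conditional quadratic variation and cites Neveu's Prop.~VII-2-3(c); both are valid given part (i) and the deterministic variance bound from Lem.~\ref{lem2a}.
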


\begin{proof}[Proof (outline)]
 By (\ref{eq-tx}) and (\ref{eq-Lambda}), for all $k$ with $m(n) \leq k < m(n+1)$, 
 \begin{align} \label{eq-lem1-prf1}
  \| \tilde x^n(t(k+1)) \| & \leq \| \tilde x^n(t(k))\|  + \alpha_k \bar C \, \| h_{r(n)}(\tilde x^n(t(k))) \|  + \alpha_k \| \tilde M_{k+1}\|  + \alpha_k \| \tilde \epsilon_{k+1} \|.
 \end{align}
Using (\ref{eq-lem1-prf1}), Lem.~\ref{lem2a}, and the bound $\| h_{r(n)}(x) \| \leq \|h(0)\| + L_h \| x\|$ implied by Assum.~\ref{cond-h}, we can follow, step by step, the proof arguments for \cite[Lem.\ 4.3]{Bor23} to derive the following bound, analogous to the bound \eqref{eq-bd-hatx}: For some constants $\bar K_1, \bar K_2$ independent of $n$,  
$$  \E \left[ \| \tilde x^n(t(k+1)) \|^2 \right]^{\frac{1}{2}} \leq e^{\bar K_1(T+1)} ( 1 + \bar K_2 (T+1) ), \quad \forall \, k \ \text{with} \ m(n) \leq k < m(n+1).$$
With this bound, we obtain part (i). This immediately leads to part (ii), similarly to the proof of \cite[Lem.\ 4.4]{Bor23}: By Lem.~\ref{lem2a}, the sequence $\{\tilde \zeta_m\}_{m \geq 0}$ with $\tilde \zeta_0=0$ is a martingale. Moreover, since $\sum_k \alpha_k^2 < \infty$ (Assum.~\ref{cond-ss}(i)), combining part (i) with Lem.~\ref{lem2a}, we obtain  
\begin{align*}
 \E \big[ \| \tl \zeta_m\|^2 \big] \leq \sum_{k=0}^{\infty}  \E \big[  \alpha^2_k \,  \| \tilde M_{k+1} \|^2 \big] & = \sum_{k=0}^{\infty}  \E \Big[ \E \big[  \alpha^2_k \,  \| \tilde M_{k+1} \|^2  \,\big|\, \F_k \big] \Big] \\
 & \leq  \sum_{k=0}^{\infty}  \alpha_k^2 \cdot \bar C^2 N \big( 1 + \E [ \| \tilde x^n(t(k)) \|^2 ] \big) < \infty.
\end{align*} 
Thus $\{\tilde \zeta_m\}_{m \geq 0}$ is a square-integrable martingale satisfying
$\sum_{k=0}^{\infty}  \E [ \, \alpha^2_k \,  \| \tilde M_{k+1} \|^2 \mid \F_k ] < \infty$ a.s.
Part (ii) then follows from the martingale convergence theorem \cite[Prop.\ VII-2-3(c)]{Nev75}.

Finally, part (iii) can be derived from part (ii) just proved, and part (iv) can, in turn, be derived from part (iii), similarly to the proofs of \cite[Lems.\ 4.5 and 2.1]{Bor23}, respectively. In these derivations, in addition to part (ii), we use (\ref{eq-tx}), (\ref{eq-Lambda}), Assum.~\ref{cond-h} on $h$, and (\ref{eq-prf-noise2}) given in Lem.~\ref{lem2a}. For part (iv), we also use $\sum_k \alpha_k^2 < \infty$ and the fact that $\| \tilde \epsilon_k \|  \to 0$ as $k \to \infty$, which is implied by \eqref{eq-prf-noise2}, Assum.~\ref{cond-ns}(ii), and part (iii). 
 \end{proof}

Lemma~\ref{lem2} shows that $\tilde x^n(t)$'s are bounded and asymptotically `track' the ODE solutions $x^n(t)$ (see \eqref{eq-ode0}), albeit over the sub-intervals $[T_n,\, t(k_n) \wedge T_{n+1}]$ of $[T_n,\, T_{n+1}]$. We now use these auxiliary processes to establish the boundedness of the original scaled trajectories $\hat x^n(\cdot)$ and their relationship to the ODE solutions $x^n(\cdot)$.

\begin{lemma} \label{lem3} The following hold for $\{\hat x^n(\cdot)\}$:
\begin{enumerate}[leftmargin=0.7cm,labelwidth=!]
\item[\rm (i)] $\sup_{n \geq 0} \sup_{t \in [T_n, T_{n+1}]} \| \hat x^n(t) \|  < \infty$ a.s.; 
\item[\rm (ii)] $\lim_{n \to \infty} \sup_{t \in [T_n, T_{n+1}]} \left\| \hat x^n(t) - x^n(t) \right\| = 0$ a.s. 
\end{enumerate}
\end{lemma}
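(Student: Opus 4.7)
The plan is to deduce Lem.~\ref{lem3} from Lem.~\ref{lem2} by showing that, for almost every sample path, the truncation parameters $(N,\bar a)$ governing the construction of the auxiliary processes $\tilde x^n$ can be chosen so that the stopping times $k_n$ are eventually infinite. On this event, $\tilde x^n$ coincides with $\hat x^n$ on the full interval $[T_n, T_{n+1}]$, and parts (i)--(ii) follow directly from parts (iii)--(iv) of Lem.~\ref{lem2}.

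More concretely, for each $N \in \mathbb{N}$ and $\bar a > 0$, let $\tilde x^{n,N,\bar a}$ denote the auxiliary process built from those parameters, noting that Lem.~\ref{lem2} applies to each such family. Recall the decomposition $k_n = k_{n,1} \wedge k_{n,2}$: $k_{n,1} = \infty$ precisely when $K_k \leq N$ and $\max_{i \in \I} \q(k,i) \leq C$ throughout $[m(n), m(n+1))$, while $k_{n,2} = \infty$ precisely when $\delta_k \leq \bar a$ throughout $[m(n)+1, m(n+1)]$. By Assum.~\ref{cond-ns}(i), $\sup_k K_k < \infty$ a.s.; by Lem.~\ref{lem1}, $\max_i \q(k,i) \leq C$ for all $k$ sufficiently large a.s.; and by Assum.~\ref{cond-ns}(ii), $\delta_k \to 0$ a.s. Fixing $\bar a = 1$, these three facts imply that on a probability-one event there exist $N^\star(\omega) \in \mathbb{N}$ and $n_0(\omega) \in \mathbb{N}$ such that $K_k(\omega) \leq N^\star(\omega)$ for all $k$, and $\max_i \q(k,i,\omega) \leq C$ and $\delta_k(\omega) \leq \bar a$ for every $k \geq m(n_0(\omega))$. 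For such $\omega$ and every $n \geq n_0(\omega)$, we therefore have $k_n(\omega) = \infty$, and hence by the construction in \eqref{eq-tx}--\eqref{eq-txa}, $\tilde x^{n,N^\star(\omega),\bar a}(t,\omega) = \hat x^n(t,\omega)$ for all $t \in [T_n, T_{n+1}]$.

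Applying Lem.~\ref{lem2}(iii) with $N = N^\star(\omega)$ then bounds $\sup_{n \geq n_0(\omega)} \sup_{t \in [T_n,T_{n+1}]} \|\hat x^n(t,\omega)\|$; the finitely many intervals with $n < n_0(\omega)$ contribute only a finite supremum, since each such $\hat x^n$ is a piecewise linear interpolation of the finite collection $\{x_k(\omega)/r(n)(\omega) : m(n) \leq k \leq m(n+1)\}$ and $r(n) \geq 1$. This yields (i). For (ii), Lem.~\ref{lem2}(iv) gives $\sup_{t \in [T_n, t(k_n) \wedge T_{n+1}]} \|\tilde x^n(t) - x^n(t)\| \to 0$ a.s.; specialized to $\omega$ in the probability-one event above and to $n \geq n_0(\omega)$, so that $k_n = \infty$ and $t(k_n) \wedge T_{n+1} = T_{n+1}$, this is exactly the convergence $\sup_{t \in [T_n, T_{n+1}]} \|\hat x^n(t,\omega) - x^n(t,\omega)\| \to 0$ asserted in (ii).

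The main subtlety is conceptual rather than computational: the auxiliary processes of Lem.~\ref{lem2} are defined relative to \emph{fixed} parameters $(N,\bar a)$, whereas the useful choice of $N$ here is sample-path-dependent via the a.s.\ bound $\sup_k K_k < \infty$. This is harmless because Lem.~\ref{lem2} holds a.s.\ separately for each $N$, and the data-dependent selection $N^\star(\omega)$ merely picks, on each sample path, which family member to invoke; no measurability issue arises since we only ever use the pointwise equality $\tilde x^{n,N^\star(\omega),\bar a}(\cdot,\omega) = \hat x^n(\cdot,\omega)$ to transfer a.s.\ conclusions from one process to the other.
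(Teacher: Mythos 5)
Your proof is correct and takes essentially the same route as the paper: both invoke Lem.~\ref{lem2} for each fixed truncation level $N$, then use $\sup_n K_n < \infty$ a.s.\ (together with Lem.~\ref{lem1} and $\delta_n \to 0$ a.s.) to select a sample-path-dependent $N^\star(\omega)$ for which the stopping times are eventually infinite, so that $\hat x^n$ coincides with $\tilde x^n_{N^\star(\omega)}$ for all $n$ large. The only cosmetic differences are that you make the $\bar a$-dependence explicit in notation before fixing $\bar a=1$, and you spell out the (obvious) finiteness of the supremum over the initial finitely many blocks, both of which the paper leaves implicit.
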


\begin{proof}
For each $N \geq 1$, denote the auxiliary processes constructed above by $\tilde x^n_N(\cdot)$ and their associated stopping times by $k_{N,n}$, $n \geq 0$.
By Lem.~\ref{lem2}, a.s., for all $N \geq 1$,
\begin{equation} \label{eq-lem3-prf1}
   \sup_{n \geq 0} \sup_{t \in [T_n, \, T_{n+1}]} \| \tilde x^n_N(t) \|  < \infty, \quad \  \lim_{n \to \infty} \sup_{t \in [T_n, \, t(k_{N,n}) \wedge T_{n+1}]} \left\| \tilde x^n_N(t) - x^n(t) \right\| = 0. 
\end{equation}    
Consider the set $\Omega'$ of sample paths for which \eqref{eq-lem3-prf1} holds for all $N \geq 1$, $\sup_n K_n < \infty$ and, for all sufficiently large $n$, $\delta_n \leq \bar a$ and $\max_{i \in \I} \q(n, i) \leq C$. This set $\Omega'$ has probability $1$ by Lem.~\ref{lem2}, Assum.~\ref{cond-ns}, and Lem.~\ref{lem1}. For each sample path $\omega \in \Omega'$, we have $\sup_n K_n < N(\omega)$ for some positive integer $N(\omega)$ and $k_{N(\omega),n} = \infty$ for all $n$ sufficiently large; consequently, for all $n$ sufficiently large, $t(k_{N(\omega),n}) \wedge T_{n+1} =  T_{n+1}$ and $\hat x^n(\cdot)$ coincides with $\tilde x^n_{N(\omega)}(\cdot)$ by \eqref{eq-tx-hx1}--\eqref{eq-tx-hx2}. Combining this with \eqref{eq-lem3-prf1} yields the conclusions stated in parts (i,\,ii).
\end{proof}

\subsubsection{Stability in Scaling Limits of Corresponding ODEs and Proof Completion} \label{sec-stab-prf2}
 
The ODE solution $x^n(\cdot)$ is determined by $h_{r(n)}$, $\lambda(T_n + \cdot)$, and an initial condition within the unit ball of $\R^{d}$ (see \eqref{eq-ode0}). If $r(n)$ becomes sufficiently large, the initial condition lies on $\unitS \= \{ x \in \R^d \mid \| x \| = 1\}$, and $h_{r(n)}$ approaches the function $h_\infty$ due to Assum.~\ref{cond-h}(ii). By Lem.~\ref{lem4}, a.s., any limit point of $\{ \lambda(T_n + \cdot)\}_{n \geq 0}$ in $\Upsilon$ has the form $\lambda^*(t) = \rho(t) I$ with $\tfrac{1}{d} \leq \rho(t) \leq C$ for all $t \geq 0$.

This leads us to consider \emph{an arbitrary function $\lambda^* \in \Upsilon$ of this form}, the set of which we denote by $\Upsilon^*$, and the associated limiting ODE
\begin{equation} \label{eq-lim-ode}
    \dot{x}(t) = \lambda^*(t) \, h_\infty (x(t)) \quad \text{with} \ \ x(0) \in \unitS.
\end{equation}    
Below, we examine stability properties for such ODEs and their `nearby' ODEs, by generalizing \cite[Lem.\ 4.2, Cor.\ 4.1]{Bor23} from the synchronous context with a single limiting ODE to the asynchronous context with multiple limiting ODEs.

For $x \in \unitS$ and $\lambda^* \in \Upsilon^*$, let $\phi^{\lambda^*}_\infty(t ; x)$ denote the unique solution of (\ref{eq-lim-ode}) with initial condition $\phi^{\lambda^*}_\infty(0; x) = x$.
For $c \geq 1$ and $\lambda' \in \Upsilon$, let $\phi_{c, \lambda'}(t ; x)$ denote the unique solution of $\dot{x}(t) = \lambda'(t) \, h_c (x(t))$ with $\phi_{c, \lambda'}(0; x) = x$. (Existence/uniqueness of these ODE solutions follow from standard Carath\'{e}odory-type results \cite{Wal98}.)

\begin{lemma} \label{lem5}
There exists $T > 0$ such that for all $\lambda^* \in \Upsilon^*$ and initial conditions $x \in \unitS$, $\| \phi^{\lambda^*}_\infty(t; x) \| < 1/8$ for all $t \geq T$.
\end{lemma}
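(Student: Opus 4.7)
The key observation is that every $\lambda^\ast \in \Upsilon^\ast$ has the form $\lambda^\ast(t) = \rho(t) I$ with $\rho : [0,\infty) \to [1/d,\,C]$ Borel-measurable, so the ODE \eqref{eq-lim-ode} reduces to the scalar-coefficient, non-autonomous equation $\dot{x}(t) = \rho(t)\, h_\infty(x(t))$. Since $\rho$ is bounded above and bounded away from $0$, a time-change reduces this to the autonomous ODE $\dot{y} = h_\infty(y)$, whose stability properties are known by Assum.~\ref{cond-h}(iii).

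More precisely, the plan is: first define $\tau(t) \= \int_0^t \rho(s)\,ds$. Because $1/d \leq \rho(s) \leq C$ a.e., the function $\tau$ is absolutely continuous, strictly increasing, maps $[0,\infty)$ onto $[0,\infty)$, and satisfies the key estimate $\tau(t) \geq t/d$. Let $y(\tau) \= \phi^{\lambda^\ast}_\infty(t(\tau);x)$ where $t(\cdot) = \tau^{-1}(\cdot)$. A change of variables (valid in the Carath\'eodory sense since $h_\infty$ is Lip.\ cont.\ and $\rho$ is bounded and bounded away from zero) shows that $y$ is the unique (classical) solution of
\begin{equation*}
 \dot{y}(\tau) = h_\infty(y(\tau)), \qquad y(0) = x \in \unitS.
\end{equation*}

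Next, I invoke the uniform attraction of the origin on compact sets, a standard consequence of g.a.s.\ combined with continuous dependence of the flow on initial conditions: since the origin is g.a.s.\ for $\dot{y} = h_\infty(y)$ (Assum.~\ref{cond-h}(iii)) and $\unitS$ is compact, there exists a deterministic $T' > 0$ such that $\|y(\tau;x)\| < 1/8$ for all $x \in \unitS$ and all $\tau \geq T'$. Setting $T \= dT'$, I conclude that for any $\lambda^\ast \in \Upsilon^\ast$, any $x \in \unitS$, and any $t \geq T$, one has $\tau(t) \geq t/d \geq T'$, whence
\begin{equation*}
 \big\|\phi^{\lambda^\ast}_\infty(t;x)\big\| \,=\, \|y(\tau(t);x)\| \,<\, 1/8,
\end{equation*}
as required. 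Crucially, $T$ depends only on $d$ and $T'$, not on the specific choice of $\lambda^\ast \in \Upsilon^\ast$.

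The main obstacle is not technical but conceptual: one must notice that the only role of $\lambda^\ast$ in the limiting ODE is a positive, bounded, time-varying \emph{rescaling of time}, which leaves the orbit (as a subset of $\R^d$) unchanged and merely reparametrizes its traversal. Once this is recognized, the uniform lower bound $\rho \geq 1/d$ ensures that the rescaled clock $\tau$ outpaces the original clock by at most a factor of $d$, reducing the problem to uniform attraction for a single autonomous ODE on the compact set $\unitS$. Justifying the time-change for merely Borel-measurable $\rho$ requires a brief appeal to absolute continuity and the Carath\'eodory framework, but no new ideas beyond that.
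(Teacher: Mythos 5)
Your proof is correct and takes essentially the same route as the paper: define the time change $\tau(t) = \int_0^t \rho(s)\,ds$, reduce to the autonomous ODE $\dot y = h_\infty(y)$, use the lower bound $\rho \geq 1/d$ to get $\tau(t) \geq t/d$, and invoke uniform attraction of the origin over the compact set $\unitS$ (which the paper sources to \cite[Lem.\ 4.1]{Bor23}). Your additional remarks on absolute continuity and the Carath\'eodory framework are a fair elaboration of what the paper leaves implicit, but the underlying argument and the resulting bound $T = d T'$ are identical.
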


\begin{proof}
For $\lambda^*= \rho(\cdot) I  \in \Upsilon^*$, $\phi^{\lambda^*}_\infty(\cdot; x)$ is related by time scaling to $\phi^o_\infty(\cdot; x)$, the solution of the ODE $\dot{x}(t) = h_\infty (x(t))$ with initial condition $\phi^o_\infty(0; x) = x$; in particular,
$\phi^{\lambda^*}_\infty(t; x) = \phi^o_\infty(\tau(t); x)$, where $\tau(t) = \int_0^t \rho(s) ds$.
Under Assum.~\ref{cond-h}, there exists $T^o > 0$ such that for all $x \in \unitS$, $\| \phi^o_\infty(t; x) \| < 1/8$ for all $t \geq T^o$ \cite[Lem.\ 4.1]{Bor23}.
Since $\rho(t) \geq \tfrac{1}{d}$, this means that for all $t \geq T^o d$, $\|\phi^{\lambda^*}_\infty(t; x) \| = \| \phi^o_\infty(\tau(t); x) \| < 1/8$. 
\end{proof}
 
The next three lemmas extend the stability property of the limiting ODEs (\ref{eq-lim-ode}), as given in the preceding lemma, to `nearby' ODEs within a certain time horizon.

Let $L_h$ be the Lipschitz modulus of $h$ (Assum.~\ref{cond-h}(i)). Consider the set $H$ of all Lipschitz continuous functions $\bar h: \R^d \to \R^d$ with a Lipschitz modulus no greater than $L_h$ and $\|\bar h(0)\| \leq \| h(0)\|$. Endow $H$ with the topology of uniform convergence on compacts and a compatible metric, rendering $H$ a compact metric space by the Arzel\'{a}--Ascoli theorem. The next lemma extends a similar result \cite[Lem.\ 3.1(b)]{Bor98}, which deals with a fixed function $h$ rather than the set $H$, and can be proved using similar arguments:
\begin{lemma} \label{lem6-alt}
Consider the function $\Psi$ that maps each $(\bar \lambda, \bar h, x) \in \Upsilon \times H \times \R^d $ to  the unique solution of the ODE $\dot{x}(t) = \bar \lambda(t) \bar h(x(t))$ on $[0, \infty)$ with initial condition $x(0) = x$. Then $\Psi$ is continuous from $\Upsilon \times H \times \R^d$ into $\C([0, \infty); \R^d)$.
\end{lemma}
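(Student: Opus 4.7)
\textbf{Proof proposal for Lemma~\ref{lem6-alt}.}
Since $\Upsilon$, $H$, and $\R^d$ are all metrizable, it suffices to show sequential continuity. Let $(\bar\lambda_n, \bar h_n, x_n) \to (\bar\lambda, \bar h, x)$ in $\Upsilon \times H \times \R^d$, and let $\phi_n \= \Psi(\bar\lambda_n, \bar h_n, x_n)$ and $\phi \= \Psi(\bar\lambda, \bar h, x)$. The convergence in $\C([0,\infty);\R^d)$ reduces to proving $\sup_{t \in [0,T]} \|\phi_n(t) - \phi(t)\| \to 0$ for every $T > 0$. Fix such a $T$.

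The first step is an a priori bound. Since every $\bar h' \in H$ satisfies $\|\bar h'(y)\| \leq \|h(0)\| + L_h \|y\|$ and the diagonal entries of $\bar\lambda_n(s)$ are bounded by $C$, Gronwall applied to the integral form $\phi_n(t) = x_n + \int_0^t \bar\lambda_n(s)\bar h_n(\phi_n(s))\,ds$ yields $\sup_n \sup_{t \in [0,T]} \|\phi_n(t)\| \leq R$ for some finite $R$, and the same bound (enlarging $R$ if necessary) holds for $\phi$. Consequently, $\{\phi_n\}$ is equi-Lipschitz on $[0,T]$, with Lipschitz modulus $C(\|h(0)\| + L_h R)$.

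Next, I decompose
\begin{align*}
\phi_n(t) - \phi(t)
&= (x_n - x)
+ \int_0^t \bar\lambda_n(s)\bigl[\bar h_n(\phi_n(s)) - \bar h_n(\phi(s))\bigr]ds \\
&\quad + \int_0^t \bar\lambda_n(s)\bigl[\bar h_n(\phi(s)) - \bar h(\phi(s))\bigr]ds
+ \int_0^t \bigl[\bar\lambda_n(s) - \bar\lambda(s)\bigr]\bar h(\phi(s))\,ds.
\end{align*}
The first term tends to $0$. The second is bounded by $CL_h \int_0^t \|\phi_n(s)-\phi(s)\|\,ds$, ready for Gronwall. The third is bounded by $CT \sup_{\|y\|\leq R}\|\bar h_n(y)-\bar h(y)\|$, which vanishes since $\bar h_n \to \bar h$ uniformly on compacts in $H$.

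The main obstacle is the fourth term $F_n(t) \= \int_0^t [\bar\lambda_n(s)-\bar\lambda(s)]\bar h(\phi(s))\,ds$: the topology on $\Upsilon$ only gives pointwise-in-$t$ convergence $F_n(t) \to 0$ (applied with the continuous, hence $L_2$, integrand $\bar h(\phi(\cdot))$ on $[0,T]$), whereas I need uniformity in $t \in [0,T]$. To bridge this gap, I observe that $\{F_n\}$ is equicontinuous on $[0,T]$: $\|F_n(t)-F_n(t')\| \leq 2C \sup_{s\in[0,T]}\|\bar h(\phi(s))\| \cdot |t-t'|$. Pointwise convergence together with equicontinuity on the compact interval $[0,T]$ (via a standard Arzel\`a--Ascoli-type argument, or a direct $\epsilon/3$ argument on a finite $\epsilon$-net of $[0,T]$) yields $\sup_{t \in [0,T]} \|F_n(t)\| \to 0$. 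Combining all four bounds and applying Gronwall's inequality then gives $\sup_{t \in [0,T]} \|\phi_n(t) - \phi(t)\| \leq \bigl(\|x_n-x\| + \varepsilon_n\bigr) e^{CL_h T}$ with $\varepsilon_n \to 0$, completing the proof.
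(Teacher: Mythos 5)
Your proof is correct, and it is a fully worked-out version of what the paper itself merely references: the paper states that Lemma~\ref{lem6-alt} "extends a similar result \cite[Lem.\ 3.1(b)]{Bor98} \ldots\ and can be proved using similar arguments" without supplying the argument. Your Gronwall decomposition, the handling of the extra $H$-variable (the third term, controlled by uniform convergence on the compact ball of radius $R$), and most importantly the treatment of the $\Upsilon$-term $F_n$ — observing that the weak topology on $\Upsilon$ only gives pointwise-in-$t$ convergence and upgrading it to uniformity via the equicontinuity of $\{F_n\}$ — are exactly the ingredients that such a proof must contain, and each step is valid. The a priori bound ensuring all trajectories remain in a common ball of radius $R$ on $[0,T]$, the reduction to sequential continuity via metrizability of $\Upsilon \times H \times \R^d$, and the final Gronwall closure all check out. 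In short, you have reconstructed the argument that the paper delegates to the cited reference, correctly augmented to accommodate the varying Lipschitz function.
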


\begin{lemma} \label{lem7-alt}
There exist $\bar T > 0$, $\bar c \geq 1$, and a neighborhood $D(\Upsilon^*)$ of $\Upsilon^*$ in $\Upsilon$ such that for all $\lambda' \in D(\Upsilon^*)$ and initial conditions $x \in \unitS$, $\| \phi_{c, \lambda'}(t; x) \| < 1/4$ for all $t \in [\bar T, \bar T+1]$ and $c \geq \bar c$.
\end{lemma}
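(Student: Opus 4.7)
The plan is to establish Lem.~\ref{lem7-alt} by a compactness/contradiction argument that combines the uniform stability property of the limiting ODEs from Lem.~\ref{lem5} with the continuity of the solution map $\Psi$ from Lem.~\ref{lem6-alt}.

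First I would fix $\bar T > 0$ as supplied by Lem.~\ref{lem5}, so that $\|\phi^{\lambda^*}_\infty(t;x)\| < 1/8$ for every $\lambda^* \in \Upsilon^*$, every $x \in \unitS$, and every $t \geq \bar T$. Next I would check that each $h_c$ (and $h_\infty$) belongs to the space $H$: by Assum.~\ref{cond-h}(i) each $h_c$ has Lipschitz modulus $L_h$, and $\|h_c(0)\| = \|h(0)\|/c \leq \|h(0)\|$ (with $h_\infty(0)=0$), so the definition of $H$ applies; Assum.~\ref{cond-h}(ii) moreover ensures $h_c \overset{u.c.}{\to} h_\infty$, i.e.\ $h_c \to h_\infty$ in the topology of $H$ as $c\to\infty$. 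I would also verify that $\Upsilon^*$ is closed in $\Upsilon$, which follows from the topology on $\Upsilon$: the subset on which $\lambda_{ii}=\lambda_{jj}$ a.e.\ for all $i,j$, and on which the common diagonal lies in $[1/d, C]$ a.e., is preserved under the convergence defining $\Upsilon$ (one tests against indicators $f = \ind_{[0,t]} e_i$ for $i\in\I$ and uses the Lebesgue differentiation theorem, exactly as in the proof of Lem.~\ref{lem4}).

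I would then argue by contradiction. Suppose no triple $(\bar T, \bar c, D(\Upsilon^*))$ with the stated property exists. Taking the open neighborhoods $D_n \= \{\lambda' \in \Upsilon : d(\lambda', \Upsilon^*) < 1/n\}$ and $\bar c = n$, I can find sequences $\lambda'_n \in D_n$, $c_n \geq n$, $x_n \in \unitS$, and $t_n \in [\bar T, \bar T+1]$ with
\[
 \|\phi_{c_n, \lambda'_n}(t_n; x_n)\| \geq 1/4.
\]
By compactness of $\Upsilon$, $\unitS$, and $[\bar T,\bar T+1]$, after passing to a subsequence I may assume $\lambda'_n \to \bar\lambda$ in $\Upsilon$, $x_n \to \bar x \in \unitS$, and $t_n \to \bar t \in [\bar T,\bar T+1]$; since $d(\lambda'_n, \Upsilon^*) < 1/n$ and $\Upsilon^*$ is closed, $\bar\lambda \in \Upsilon^*$. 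Together with $h_{c_n} \to h_\infty$ in $H$, this means $(\lambda'_n, h_{c_n}, x_n) \to (\bar\lambda, h_\infty, \bar x)$ in $\Upsilon \times H \times \R^d$.

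Finally, by the continuity of $\Psi$ (Lem.~\ref{lem6-alt}), the solutions $\Psi(\lambda'_n, h_{c_n}, x_n) = \phi_{c_n,\lambda'_n}(\cdot; x_n)$ converge in $\C([0,\infty);\R^d)$ to $\Psi(\bar\lambda, h_\infty, \bar x) = \phi^{\bar\lambda}_\infty(\cdot;\bar x)$, hence uniformly on $[0,\bar T+1]$. Combined with $t_n \to \bar t$, a standard triangle-inequality argument gives $\phi_{c_n,\lambda'_n}(t_n; x_n) \to \phi^{\bar\lambda}_\infty(\bar t; \bar x)$. But Lem.~\ref{lem5} yields $\|\phi^{\bar\lambda}_\infty(\bar t; \bar x)\| < 1/8$, so $\|\phi_{c_n,\lambda'_n}(t_n; x_n)\| < 1/4$ for all sufficiently large $n$, contradicting the choice of the sequences. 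The main (minor) obstacle is the verification that $\Upsilon^*$ is closed in the weak topology of $\Upsilon$ and the clean application of Lem.~\ref{lem6-alt} with the triple argument $(\lambda, h, x)$ all varying; once these are in place, the contradiction argument is routine.
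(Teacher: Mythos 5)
Your proof is correct and follows essentially the same route as the paper: both proofs combine Lem.~\ref{lem5} (uniform decay for the limiting ODEs over $\Upsilon^*$) with Lem.~\ref{lem6-alt} (joint continuity of the solution map $\Psi$) and compactness of $\Upsilon$, $H$, and $\unitS$. The paper invokes uniform continuity of $\Psi$ on $\Upsilon \times H \times \unitS$ directly to extract the neighborhoods, whereas you obtain the same conclusion through a sequential-compactness contradiction argument (and additionally verify that $\Upsilon^*$ is closed in $\Upsilon$, a step the paper avoids by letting $\lambda^*_{\lambda'}$ be any point of $\Upsilon^*$ within $\epsilon$ of $\lambda'$); in a metric space these two formulations are equivalent.
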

\begin{proof}
Let $\bar T$ be the time $T$ given by Lem.~\ref{lem5}. 
As the function $\Psi$ is continuous on $\Upsilon \times H \times \R^d$ (Lem.~\ref{lem6-alt}), it is uniformly continuous on the compact set $\Upsilon \times H \times \unitS$. Consequently, there exist a neighborhood $D(h_\infty)$ of $h_\infty$ in $H$ and, for some sufficiently small $\epsilon > 0$,
an $\epsilon$-neighborhood $D(\Upsilon^*)$ of $\Upsilon^*$ in $\Upsilon$ such that
\begin{equation} \label{eq-lem7alt-prf}
\!\! \sup_{t \in [0, \bar T + 1]} \big| \Psi(\lambda', h', x)(t) -  \Psi(\lambda^*_{\lambda'}, h_\infty, x)(t) \big| \leq 1/8, \ \ \  \forall \, x \in \unitS, \, h' \in D(h_\infty), \, \lambda' \in D(\Upsilon^*),
\end{equation}
where $\lambda^*_{\lambda'}$ is any $\lambda^* \in \Upsilon^*$ within distance $\epsilon$ of $\lambda'$.
Since $h_c \in D(h_\infty)$ for all $c$ sufficiently large (Assum.~\ref{cond-h}(ii)), we obtain the desired conclusion by \eqref{eq-lem7alt-prf} and Lem.~\ref{lem5}.
\end{proof}

\begin{remark} \rm
Instead of invoking the continuity of the function $\Psi$, the preceding lemma can also be proven by explicitly computing bounds on $\|\phi^{\lambda^*}_\infty(t; x) - \phi_{c, \lambda'}(t; x)\|$ for $\lambda^* \in \Upsilon^*$, a `nearby' $\lambda' \in \Upsilon$, and $t$ in a given time interval. This proof is analogous to that of \cite[Lem.\ 4.2]{Bor23} and uses Gronwall's inequality as well as the topology on $\Upsilon$. (For details, see Lem.\ 9 and its proof in our earlier report: \href{https://arxiv.org/abs/2312.15091}{\texttt{arXiv:2312.15091}}.)
 \qed
 \end{remark}
 
\emph{Below, let $\bar T > 0$ and $\bar c \geq 1$ be given by Lem.~\ref{lem7-alt}, and use $T \= \bar T + 1/2$ in defining the sequence of times, $T_n, n \geq 0$, for the scaled trajectory $\hat x(\cdot)$ and the solutions $x^n(\cdot)$} introduced previously in Sec.~\ref{sec-stab-prf1} [see \eqref{eq-def-tm}, \eqref{eq-hx0}, and \eqref{eq-ode0}]. 

\begin{lemma} \label{lem7}
Almost surely, there exists a sample path-dependent integer $\bar n \geq 0$ such that for all $n \geq \bar n$, $\lambda'_n \= \lambda(T_n + \cdot) \in \Upsilon$ satisfies that $\| \phi_{c, \lambda'_n}(t; x) \| < 1/4$ for all $t \in [\bar T, \bar T+1]$, $c \geq \bar c$, and 
$x \in \unitS$.
\end{lemma}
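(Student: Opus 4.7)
The plan is to combine Lemma~\ref{lem4} with a routine compactness argument in order to reduce this claim directly to Lemma~\ref{lem7-alt}. Note that Lemma~\ref{lem7-alt} already establishes the stability conclusion $\|\phi_{c, \lambda'}(t; x)\| < 1/4$ uniformly for every $\lambda'$ in the open neighborhood $D(\Upsilon^*)$ of $\Upsilon^*$, for $t \in [\bar T, \bar T+1]$, $c \geq \bar c$, and $x \in \unitS$. So it suffices to show that, almost surely, $\lambda'_n = \lambda(T_n + \cdot)$ eventually lies in $D(\Upsilon^*)$.

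First I would fix a sample path on which the conclusion of Lemma~\ref{lem4} holds; such sample paths form a probability-one event. Along any such path, every subsequential limit in $\Upsilon$ of the sequence $\{\lambda(T_n + \cdot)\}_{n \geq 0}$ has the form $\rho(t) I$ with $\rho$ Borel measurable and $\tfrac{1}{d} \leq \rho(t) \leq C$, i.e., belongs to $\Upsilon^*$. This is the sole probabilistic input to the proof; the remainder is deterministic.

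Next I would argue by compactness. Since $\Upsilon$ is a compact metric space and $D(\Upsilon^*)$ is open, its complement $\Upsilon \setminus D(\Upsilon^*)$ is compact. Suppose, for contradiction, that $\lambda'_n \notin D(\Upsilon^*)$ for infinitely many $n$. Along this subsequence, compactness of $\Upsilon \setminus D(\Upsilon^*)$ yields a further convergent sub-subsequence whose limit lies in $\Upsilon \setminus D(\Upsilon^*)$, and hence outside $\Upsilon^*$. This contradicts the preceding paragraph. Consequently there exists a sample path-dependent integer $\bar n$ with $\lambda'_n \in D(\Upsilon^*)$ for all $n \geq \bar n$, and Lemma~\ref{lem7-alt} applied to each such $\lambda'_n$ yields the desired uniform bound.

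There is no real obstacle here; the proof is pure compactness bookkeeping layered on top of the limit-point characterization in Lemma~\ref{lem4} and the stability result in Lemma~\ref{lem7-alt}. The one modest subtlety worth flagging is that Lemma~\ref{lem4} must be interpreted as asserting that \emph{every} subsequential limit of $\{\lambda(T_n + \cdot)\}$ lies in $\Upsilon^*$ (not merely that some do), so that the contradiction in the compactness step is actually available.
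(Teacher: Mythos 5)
Your proposal is correct and takes essentially the same route as the paper: invoke Lemma~\ref{lem4} to confine every subsequential limit of $\{\lambda(T_n+\cdot)\}$ to $\Upsilon^*$, deduce that $\lambda'_n$ eventually lies in the neighborhood $D(\Upsilon^*)$ from Lemma~\ref{lem7-alt}, and conclude. The only cosmetic difference is that you argue by contradiction via compactness of $\Upsilon \setminus D(\Upsilon^*)$, while the paper notes directly that the sequence converges to its (compact) limit-point set $G \subset \Upsilon^* \subset D(\Upsilon^*)$; these are equivalent ways of packaging the same step.
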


\begin{proof}
Consider a sample path for which Lem.~\ref{lem4} holds. Let $G$ be the set of all limit points of $\{\lambda(T_n + \cdot)\}_{n \geq 0}$ in $\Upsilon$. Since $G$ is a closed subset of the compact metric space $\Upsilon$, $G$ is compact. By Lem.~\ref{lem4}, $G \subset \Upsilon^* \subset D(\Upsilon^*)$, where $D(\Upsilon^*)$ is the neighborhood of $\Upsilon^*$ given by Lem.~\ref{lem7-alt}. 
Then, as $\lambda(T_n + \cdot) \overset{n \to \infty}{\to} G$, it follows that for some finite integer $\bar n$, $\lambda'_n = \lambda(T_n + \cdot) \in D(\Upsilon^*)$ for all $n \geq \bar n$. By Lem.~\ref{lem7-alt}, this implies the desired conclusion.
\end{proof}

Finally, noting that $x^n(T_n + \cdot) = \phi_{r(n), \lambda'_n}(\,\cdot\,; \hat x(T_n))$ on $[0, T_{n+1} - T_n]$, we can deduce from Lems.~\ref{lem3} and~\ref{lem7} that the unscaled $\bar x(\cdot)$ has these properties a.s.: 
\begin{itemize}[leftmargin=0.75cm,labelwidth=!]
\item[(i)] $\sup_{t } \|\bar x(t)\| < \infty$ if $\sup_n \|\bar x (T_n) \| < \infty$;
\item[(ii)] $\|\bar x (T_{n+1}) \| \leq \bar c K^*$ if $\|\bar x (T_{n}) \| < \bar c$, where $K^*\= \sup_{t} \| \hat x(t) \|< \infty$ (Lem.~\ref{lem3}(i));
\item[(iii)] for all sufficiently large $n$, $\|\bar x (T_{n+1}) \| \leq \frac{1}{2} \|\bar x (T_{n}) \|$ if $\|\bar x (T_{n}) \| \geq \bar c$.
\end{itemize}
These properties imply the a.s.\ boundedness of $\{x_n\}$, as shown in the proof of \cite[Thm.\ 4.1]{Bor23}.
Details, essentially identical, are provided below for clarity and completeness.

\begin{proof}[Proof of Thm.~\ref{thm-1}] 
The set of sample paths for which Lems.~\ref{lem3} and~\ref{lem7} hold has probability $1$.
Consider any sample path from this set. 
By Lem.~\ref{lem3}(i), $K^* \= \sup_{n \geq 0} \sup_{t \in [T_n, T_{n+1})} \| \hat x(t) \| < \infty$. 
Since $\hat x(t) = \bar x(t)/ r(n)$ on $[T_n, T_{n+1})$, this implies that
\begin{equation} \label{eq-thm1-prf1}
 \sup_{n \geq 0} \| x_n\| = \sup_{n \geq 0} \sup_{t \in [T_n, T_{n+1})} \| \bar x(t) \|  = \sup_{n \geq 0} \sup_{t \in [T_n, T_{n+1})}  r(n) \| \hat x(t) \|  \leq K^* \sup_{n \geq 0} r(n),
\end{equation} 
and
\begin{equation} \label{eq-thm1-prf2}
 \|\bar x(t) \| < \bar c K^* \ \ \  \forall \, t \in [T_n, T_{n+1}], \quad \text{if $r(n) < \bar c$}.
\end{equation}
By Lem.~\ref{lem3}(ii), for all $n$ large enough, $\sup_{t \in [T_n, T_{n+1})} \| \hat x(t) - x^n(t)\| \leq 1/4$. 
This and Lem.~\ref{lem7} together imply that there exists some $\bar n' \geq 0$ such that if $n \geq \bar n'$ and $r(n) \geq \bar c$, then $\|\hat x (T^-_{n + 1})\| < 1/2$, where $\hat x (T^-_{n + 1}) \= \lim_{t \uparrow T_{n + 1}} \hat x(t) = \bar x(T_{n+1})/r(n)$ as defined earlier.
As $r(n) = \| \bar x(T_n)\|$ in this case, we have
\begin{equation} \label{eq-thm1-prf3}
\| \bar x (T_{n + 1})\|  <  \| \bar x (T_{n})\|/2 \quad \text{if  $n \geq \bar n'$ and $r(n) \geq \bar c$}.
 \end{equation}

By \eqref{eq-thm1-prf1}, to prove that $\{x_n\}$ is bounded, it suffices to show $\sup_{n \geq 0} r(n) < \infty$. Assume, for the sake of contradiction, that this is not true. We will use \eqref{eq-thm1-prf3} to derive a contradiction to \eqref{eq-thm1-prf2}. 

Since $r(n) = \| \bar x (T_n) \| \vee 1$, if $\sup_{n \geq 0} r(n) = \infty$, then we can find a subsequence $T_{n_k}, k \geq 0$, with $\bar c \leq r(n_k) = \| \bar x (T_{n_k}) \| \uparrow \infty$. For each $n_k$, let $n'_k \= \max \{ n :   \bar n' \leq n < n_k, \,  r(n) < \bar c \}$ with $n'_k \= \bar n'$ if the set in this definition is empty.  
Then according to (\ref{eq-thm1-prf3}), only two cases are possible: either 
\begin{enumerate}
\item[(i)] $n'_k = \bar n'$ and $r(\bar n') > 2 \, r(\bar n' + 1) > \cdots > 2^{n_k - \bar n'}  r(n_k) \geq 2^{n_k - \bar n'} \bar c$; or
\item[(ii)] $r(n'_k) < \bar c$ and $r(n'_k + 1) > 0.9 \, r(n_k)$. 
\end{enumerate}
Since $r(n_k) \uparrow \infty$, case (i) cannot happen for infinitely many $k$, and we must have case (ii) for all $k$ sufficiently large and with $n'_k \to \infty$ as $k \to \infty$.
Thus we have found infinitely many time intervals $[T_{n'_k}, T_{n'_k+1}]$ during each of which the trajectory $\bar x(\cdot)$ starts from inside the ball of radius $\bar c$ and ends up outside a ball with an increasing radius $0.9 \, r(n_k) \uparrow \infty$. 
But this is impossible by \eqref{eq-thm1-prf2}.

Thus, we obtain $\sup_{n \geq 0} r(n) < \infty$. Then, by \eqref{eq-thm1-prf1}, $\{x_n\}$ must be bounded.
\end{proof}

\subsection{Convergence Given Stability} \label{sec-cvg}
Given the a.s.\ boundedness of the iterates $\{x_n\}$ (Thm.~\ref{thm-1}), we prove the convergence results (Thm.~\ref{thm-2} and Cor.~\ref{cor-ql}) by analyzing the trajectory $\bar x(\cdot)$ defined instead by \eqref{eq-cont-traj2}, where the ODE-time is determined by the aggregated stepsizes $\tilde \alpha_n = \sum_{i \in Y_n} \alpha_{\nu(n,i)}$ for $n \geq 0$, as recalled. For $\tl \lambda(\cdot)$ defined by \eqref{eq-tlambda}, we show similarly to \cite[Lem.\ 2.1]{Bor23} that asymptotically, $\bar x(\cdot)$ `tracks' the solutions of the non-autonomous ODE 
\begin{equation}
   \dot{x}(t) = \tl \lambda(t) h(x(t)).  \label{eq-ode1} 
\end{equation}   
By Lem.~\ref{lem-cvg-2}, the limiting ODE is unique and autonomous:
\begin{equation}
   \dot{x}(t)  = \tfrac{1}{d} h(x(t)), \label{eq-ode2}
\end{equation}
which allows us to apply \cite[Lem.\ 3.1(b)]{Bor98} (cf.\ Lem.~\ref{lem6-alt}) to conclude $\bar x(\cdot)$ `tracks' the solutions of the limiting ODE \eqref{eq-ode2}. We then obtain Thm.~\ref{thm-2}(i) by essentially the same reasoning as in \cite[Thm.\ 2.1]{Bor23} for synchronous SA. This analysis also yields Thm.~\ref{thm-2}(ii) by standard arguments, and Cor.~\ref{cor-ql} follows directly from Thm.~\ref{thm-2}. 
We now provide the proof details.

Let the continuous trajectory $\bar x(t)$ be defined according to \eqref{eq-cont-traj2}:
 \begin{equation} 
 \bar x(t) \=  x_n +  \tfrac{t - \tl t(n)}{\tl t(n+1) - \tl t(n)} \, ( x_{n+1} - x_n), \quad  t \in [\tl t(n), \tl t(n+1)],   \ n \geq 0, \notag
\end{equation} 
where $\tl t(n)= \sum_{k=0}^{n -1} \tl \alpha_k$ and $\tl \alpha_n = \sum_{i \in Y_n}  \alpha_{\nu(n, i)}$.
Consider algorithm \eqref{eq-alg0} in its equivalent form \eqref{eq-alg1}; that is, in vector notation, 
\begin{equation} \label{eq-alg1a0}
    x_{n+1}  = x_n + \tl \alpha_n  \tl \Lambda_n \left( h (x_n) + M_{n+1} + \epsilon_{n+1} \right),
\end{equation}
where $\tl \Lambda_n \=  \text{diag} \big( \tl \q(n, 1), \tl \q(n, 2), \ldots, \tl \q(n, d) \big)$, with diagonal entries $\tl \q(n, i) = \frac{\alpha_{\nu(n, i)}}{\tl \alpha_n} \ind\{i \in Y_n\} \in [0,1]$, as defined previously. 
Note that by Assums.~\ref{cond-ss}(i) and~\ref{cond-us}(i), $\tl \alpha_n = \sum_{i \in Y_n}  \alpha_{\nu(n, i)}$ satisfies 
\begin{equation} \label{eq-stepsize}
 \textstyle{\sum_{n \geq 0} \tl \alpha_n = \infty, \quad \ \ \sum_{n \geq 0} {\tl \alpha}^2_n < \infty}, \ \ \text{a.s.}
\end{equation} 

\begin{lemma} \label{lem-cvg-3}
The sequence $\zeta_n \= \sum_{k=0}^{n-1} \tl \alpha_k \tl \Lambda_k M_{k+1}$, $n \geq 1$, converges a.s.\ in $\R^{d}$.
\end{lemma}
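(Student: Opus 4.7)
The plan is to recognize $\{\zeta_n\}$ as a martingale (possibly local) with almost surely finite conditional quadratic variation, and then invoke a standard martingale convergence theorem, echoing the argument used for $\tilde \zeta_m$ in the proof of Lem.~\ref{lem2}(ii) but now without the benefit of a uniform $L^2$ bound on the iterates.

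First I would verify the martingale structure. Since $Y_k$ is $\F_k$-measurable (by definition of $\F_n$) and the counts $\nu(k, \cdot)$ are $\F_k$-measurable, both the random stepsize $\tl \alpha_k = \sum_{i \in Y_k} \alpha_{\nu(k,i)}$ and the diagonal matrix $\tl \Lambda_k$ with entries $\tl \q(k, i) \in [0, 1]$ are $\F_k$-measurable. The (operator) norm satisfies $\| \tl \Lambda_k \| \leq 1$, since $\tl \Lambda_k$ is diagonal with entries in $[0,1]$. Combined with $\E[M_{k+1} \mid \F_k] = 0$ from Assum.~\ref{cond-ns}(i), this makes $\{\zeta_n\}$ adapted to $\{\F_n\}$ with increments $\zeta_{n+1} - \zeta_n = \tl \alpha_n \tl \Lambda_n M_{n+1}$, satisfying $\E[\zeta_{n+1} - \zeta_n \mid \F_n] = 0$ once integrability is ensured.

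Next I would bound the conditional quadratic variation. Using $\| \tl \Lambda_k \| \leq 1$ and Assum.~\ref{cond-ns}(i),
\begin{equation*}
\E \bigl[ \| \tl \alpha_k \tl \Lambda_k M_{k+1} \|^2 \bigm| \F_k \bigr] \leq \tl \alpha_k^2 \cdot K_k \cdot (1 + \| x_k \|^2) \quad a.s.
\end{equation*}
By Thm.~\ref{thm-1}, $\sup_k \| x_k \| < \infty$ a.s.; by Assum.~\ref{cond-ns}(i), $\sup_k K_k < \infty$ a.s.; and by \eqref{eq-stepsize}, $\sum_k \tl \alpha_k^2 < \infty$ a.s. Combining these three facts yields $\sum_k \E[\|\tl \alpha_k \tl \Lambda_k M_{k+1}\|^2 \mid \F_k] < \infty$ a.s.

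Finally, I would apply the martingale convergence result \cite[Prop.~VII-2-3(c)]{Nev75} invoked in the proof of Lem.~\ref{lem2}(ii). The main obstacle compared to that lemma is that here we only have a.s.\ (rather than uniform $L^2$) boundedness of $\{x_n\}$, so the martingale $\{\zeta_n\}$ need not be globally square-integrable. I would handle this by a standard localization: the stopping times $\tau_N \= \inf\{ n : \| x_n \| \vee K_n \vee \tl \alpha_n > N\}$ are $\F_n$-measurable, $\tau_N \uparrow \infty$ a.s.\ by Thm.~\ref{thm-1} and Assum.~\ref{cond-ns}(i), and each stopped process $\{\zeta_{n \wedge \tau_N}\}$ is a square-integrable martingale with conditional quadratic variation bounded by a deterministic multiple of $\sum_k \tl \alpha_k^2 \ind\{k < \tau_N\}$, which has finite expectation since $\tl \alpha_k \ind\{k < \tau_N\} \leq N$. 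The cited theorem then gives a.s.\ convergence of $\zeta_{n \wedge \tau_N}$ for each $N$, and letting $N \to \infty$ yields the a.s.\ convergence of $\zeta_n$ itself.
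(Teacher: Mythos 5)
Your proposal follows essentially the same route as the paper: the paper localizes by defining $\tau_N \= \min\{k \geq 0 : \|x_k\| > N \text{ or } K_k > N\}$ and the truncated noise $M^{(N)}_{k+1} \= \ind\{k < \tau_N\} M_{k+1}$, which makes $\zeta^{(N)}_n \= \sum_{k=0}^{n-1} \tl\alpha_k \tl\Lambda_k M^{(N)}_{k+1}$ exactly your stopped process $\zeta_{n\wedge\tau_N}$, verifies that it is a square-integrable martingale with a.s.\ finite conditional quadratic variation, invokes \cite[Prop.\ VII-2-3(c)]{Nev75}, and then observes that $\{\zeta_n\}$ eventually coincides with $\{\zeta^{(N)}_n\}$ for some path-dependent $N$.

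One small wrinkle: you claim the conditional quadratic variation $\sum_k \tl\alpha_k^2 \ind\{k < \tau_N\}$ ``has finite expectation since $\tl\alpha_k \ind\{k < \tau_N\} \leq N$.'' A pointwise bound on the individual terms does not bound the expectation of the infinite sum, and in fact finite expectation of the full quadratic variation is not what \cite[Prop.\ VII-2-3(c)]{Nev75} asks for. What you actually need (and what the paper establishes) is that each partial sum $\zeta_{n\wedge\tau_N}$ is square integrable---this follows because on $\{k < \tau_N\}$ the summand has second moment bounded by a finite constant depending on $N$, as you have verified---together with a.s.\ finiteness of the conditional quadratic variation, which already follows from $\sum_k \tl\alpha_k^2 < \infty$ a.s.\ via \eqref{eq-stepsize}. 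So the offending sentence is both unnecessary and not correctly justified as stated, but the surrounding argument is sound and matches the paper's.
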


\begin{proof}
For integers $N \geq 1$, define stopping times $\tau_N$ and auxiliary variables $M^{(N)}_{k}$ as follows:
$$\tau_N \= \min \{ k \geq 0: \| x_k \| > N \ \text{or} \ K_k > N \}, \qquad M^{(N)}_{k+1} \= \ind \{ k < \tau_N \} M_{k+1},  \ \ \ k \geq 0.$$
By Assum.~\ref{cond-ns}(i), for each $N$, $\{M^{(N)}_k \}_{k \geq 1}$ is a martingale-difference sequence with $\E[ \| M^{(N)}_{k+1} \|^2 \mid \F_k ] \leq N (1 + N^2 )$. Then the sequence $\{ \zeta^{(N)}_n\}_{n \geq 0}$ given by $\zeta^{(N)}_n \= \sum_{k=0}^{n-1} \tl \alpha_k \tl \Lambda_k M^{(N)}_{k+1}$ with $\zeta^{(N)}_0 \= 0$ is a square-integrable martingale (since the diagonal matrix $\tl \alpha_k \tl \Lambda_k$ has diagonal entries $\alpha_{\nu(k,i)} \ind \{ i \in Y_k\}, i \in \I$, all bounded by the finite constant $\sup_n \alpha_n$). Furthermore, since almost surely, 
$$\sum_{n=0}^\infty \E \left[ \| \zeta^{(N)}_{n+1} - \zeta^{(N)}_n \|^2 \,\big|\, \F_n \right] \leq \sum_{n=0}^\infty \tl \alpha^2_n \| \tl \Lambda_n \|^2 \, \E \left[ \| M^{(N)}_{n+1} \|^2 \,\big|\, \F_n \right] \leq \sum_{n=0}^\infty \tl \alpha_n^2 \| \tl \Lambda_n \|^2 N (1 + N^2) < \infty$$
(where the last inequality follows from (\ref{eq-stepsize}) and the fact that the entries of $\tl \Lambda_n$ lie in $[0,1]$), we have that $\{ \zeta^{(N)}_n\}_{n \geq 0}$ converges a.s.\ in $\R^{d}$ by \cite[Prop.\ VII-2-3(c)]{Nev75}.
As $\{x_n\}$ is bounded a.s.\ by Thm.~\ref{thm-1} and $\sup_n K_n < \infty$ a.s.\ by Assum.~\ref{cond-ns}(i), the definitions of $\tau_N$ and $\{M^{(N)}_k\}$ imply that almost surely, $\{\zeta_n\}_{n \geq 1}$ coincides with $\{\zeta^{(N)}_n\}_{n \geq 1}$ for some sample path-dependent value of $N$, leading to the a.s.\ convergence of $\{\zeta_n\}_{n \geq 1}$ in $\R^{d}$.
 \end{proof}

The next step in the proof involves using Lem.~\ref{lem-cvg-3} and Thm.~\ref{thm-1} to show that the trajectory $\bar x(\cdot)$ asymptotically `tracks' the solutions of two ODEs. The first is the non-autonomous ODE \eqref{eq-ode1}, $\dot{x}(t)  = \tl \lambda(t) h (x(t))$, defined by the random trajectory $\tl \lambda(\cdot) \in \tilde \Upsilon$ (see \eqref{eq-tlambda}). The second is the autonomous limiting ODE \eqref{eq-ode2} $\dot{x}(t) = \tfrac{1}{d} h(x(t))$, obtained using Lem.~\ref{lem-cvg-2}.
 
Let $T > 0$. For $s \geq 0$, let ${\tl x}^s(\cdot)$ and $x^s(\cdot)$ be the unique solutions of (\ref{eq-ode1}) and (\ref{eq-ode2}), respectively, on the time interval $[s, s+T]$ with initial conditions ${\tl x}^s(s) = x^s(s) = \bar x(s)$. For $s \geq T$, let ${\tl x}_s(\cdot)$ and $x_s(\cdot)$ be the unique solutions of (\ref{eq-ode1}) and (\ref{eq-ode2}), respectively, on the time interval $[s-T, s]$ with terminal conditions ${\tl x}_s(s) = x_s(s) = \bar x(s)$.

\begin{lemma} \label{lem-cvg-4}
For any $T > 0$, almost surely,
\begin{align} 
 \lim_{s \to \infty} \sup_{t \in [s, s+T]} \| \bar x(t) - \tl x^s(t)\|  & = 0,   & 
   \lim_{s \to \infty} \sup_{t \in [s-T, s]} \| \bar x(t) - \tl x_s(t)\|  & = 0, \label{eq-lc4-1b} \\ 
   \lim_{s \to \infty} \sup_{t \in [s, s+T]} \| \bar x(t) - x^s(t)\| & = 0, \ & 
   \lim_{s \to \infty} \sup_{t \in [s-T, s]} \| \bar x(t) - x_s(t)\|  & = 0. \label{eq-lc4-2b}
\end{align}
\end{lemma}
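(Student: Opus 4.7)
The plan is to follow the standard ODE-tracking template (cf.\ \cite[Chaps.~2, 6]{Bor23}) adapted to the asynchronous setting. I will prove the first equation of~\eqref{eq-lc4-1b} in detail; the remaining three follow by the same type of argument, with the backward statements obtained by running the recursion backward from time~$s$.

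First, iterate~\eqref{eq-alg1a} and use~\eqref{eq-tlambda} to rewrite the recursion in integral form: for $n_0 \leq m$,
\begin{equation*}
 x_{m+1} - x_{n_0} = \int_{\tl t(n_0)}^{\tl t(m+1)} \tl \lambda(u) \, h\big(\bar x(\lfloor u \rfloor)\big) \, du + \big(\zeta_{m+1} - \zeta_{n_0}\big) + \sum_{k=n_0}^{m} \tl \alpha_k \, \tl \Lambda_k \, \epsilon_{k+1},
\end{equation*}
where $\lfloor u \rfloor \= \tl t(n)$ for $u \in [\tl t(n), \tl t(n+1))$. Choose $n_0$ as the smallest index with $\tl t(n_0) \geq s$ and $m$ the largest with $\tl t(m+1) \leq s + T$; since $\tl \alpha_n \to 0$ a.s., the endpoint gaps are $o(1)$ a.s. Subtracting the integral form of $\dot{\tl x}^s(u) = \tl \lambda(u) h(\tl x^s(u))$, and using $\|\tl \lambda(u)\| \leq 1$ (diagonal entries of $\tl \Lambda_n$ lie in $[0,1]$) together with Lip.\ cont.\ of $h$, I obtain, for $t \in [s, s+T]$,
\begin{equation*}
 \|\bar x(t) - \tl x^s(t)\| \leq L_h \int_s^t \|\bar x(u) - \tl x^s(u)\| \, du + R(s),
\end{equation*}
where the residual $R(s)$ collects: (i)~$\|\zeta_{m+1} - \zeta_{n_0}\|$, which tends to $0$ a.s.\ by Lem.~\ref{lem-cvg-3}; (ii)~the biased-noise contribution $\sum_{k=n_0}^{m} \tl \alpha_k \, \delta_{k+1}(1+\|x_k\|)$, bounded by $(1+\sup_k\|x_k\|)(T+o(1))\sup_{k\geq n_0}\delta_{k+1} \to 0$ a.s.\ by Thm.~\ref{thm-1} and Assum.~\ref{cond-ns}(ii); (iii)~the piecewise-constant discrepancy $\int \tl \lambda(u)\,[h(\bar x(\lfloor u\rfloor)) - h(\bar x(u))]\,du$, bounded by $L_h T \max_{k\geq n_0} \|x_{k+1}-x_k\| \to 0$ a.s.\ (using $\|x_{k+1}-x_k\|\leq \tl\alpha_k(\|h(x_k)\|+\|M_{k+1}\|+\|\epsilon_{k+1}\|)$ together with Thm.~\ref{thm-1}, Lem.~\ref{lem-cvg-3}, and Assum.~\ref{cond-ns}(ii)); plus $o(1)$ endpoint terms. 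Gronwall's inequality then yields $\sup_{t \in [s, s+T]} \|\bar x(t) - \tl x^s(t)\| \leq R(s)\,e^{L_h T} \to 0$ a.s.

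For~\eqref{eq-lc4-2b}, by the triangle inequality it suffices to show $\sup_{t \in [s, s+T]} \|\tl x^s(t) - x^s(t)\| \to 0$ a.s. Writing the difference as an integral equation and applying Lip.\ cont.\ of $h$, Gronwall reduces this to
\begin{equation*}
 \sup_{t \in [s, s+T]} \left\| \int_s^t \Big(\tl \lambda(u) - \tfrac{1}{d} I \Big) \, h\big(\tl x^s(u)\big) \, du \right\| \to 0 \quad \text{a.s.}
\end{equation*}
Lemma~\ref{lem-cvg-2} asserts $\tl \lambda(s + \cdot) \to \tfrac{1}{d} I$ in $\tl \Upsilon$, i.e., weak convergence against $L^2$ integrands.

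The main obstacle is that the integrand $h(\tl x^s(\cdot))$ itself depends on $s$, so Lem.~\ref{lem-cvg-2} does not apply directly. I will address this with a compactness/subsequence argument: by Thm.~\ref{thm-1} and $\|\tl\lambda\|\leq 1$, the shifted trajectories $\{\tl x^s(s + \cdot)\}_{s \geq 0}$ are uniformly bounded and uniformly Lipschitz on $[0, T]$, hence relatively compact in $\C([0,T]; \R^d)$ by the Arzel\'{a}--Ascoli theorem. Given any sequence $s_j \uparrow \infty$, pass to a further subsequence along which $\tl x^{s_j}(s_j + \cdot)$ converges uniformly on $[0, T]$ to some $x^\infty$; by Lip.\ cont.\ of $h$, $h(\tl x^{s_j}(s_j + \cdot)) \to h(x^\infty)$ uniformly, and combining this with the weak $L^2$ convergence $\tl\lambda(s_j + \cdot) \to \tfrac{1}{d} I$ forces the integral above to tend to zero along this subsequence, uniformly over the upper limit $t \in [s_j, s_j + T]$. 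Since every sequence admits such a further subsequence with limit $0$, the full limit is zero a.s. The backward statements in~\eqref{eq-lc4-1b}--\eqref{eq-lc4-2b} follow by the symmetric argument, with the integral taken backward from $s$ and $\tl x_s(s) = \bar x(s)$ serving as the terminal condition.
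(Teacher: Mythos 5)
Your proof is correct. For the first pair of limits~\eqref{eq-lc4-1b}, your Gronwall-based ODE-tracking argument mirrors the paper's, which simply defers to \cite[Lem.~2.1]{Bor23} with the observations you also list (boundedness, stepsize summability, boundedness of $\tl\Lambda_n$ and $\tl\lambda$, Lipschitz $h$, and the noise asymptotics). One slightly loose step is your inequality $\|x_{k+1}-x_k\|\leq\tl\alpha_k(\|h(x_k)\|+\|M_{k+1}\|+\|\epsilon_{k+1}\|)$: since $\|M_{k+1}\|$ need not tend to zero and is not a.s.\ bounded, the term that actually matters is $\tl\alpha_k\tl\Lambda_k M_{k+1}$, which tends to zero because the partial sums $\zeta_n$ converge by Lem.~\ref{lem-cvg-3}. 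You cite the right ingredients, so this is a bookkeeping point rather than a gap.

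For~\eqref{eq-lc4-2b}, you take a genuinely different route. After the Gronwall reduction to $\sup_t\big\|\int_s^t(\tl\lambda(u)-\tfrac{1}{d}I)\,h(\tl x^s(u))\,du\big\|\to 0$, you use Arzel\'{a}--Ascoli compactness of the shifted trajectories $\{\tl x^s(s+\cdot)\}_s$ directly, extract a uniformly convergent subsequence, and combine the resulting uniform convergence of $h(\tl x^{s_j}(s_j+\cdot))$ with the weak $L^2$ convergence $\tl\lambda(s_j+\cdot)\to\tfrac{1}{d}I$ from Lem.~\ref{lem-cvg-2}. The paper instead invokes Lem.~\ref{lem6-alt} (essentially \cite[Lem.~3.1(b)]{Bor98}), stating the joint continuity of the solution map $(\lambda',x^o)\mapsto\Psi_1(\lambda',x^o)$ from $\tl\Upsilon\times\R^d$ into $\C([0,T];\R^d)$, and then uses uniform continuity of $\Psi_1$ (and, for the backward limits, $\Psi_2$) on the compact set $\tl\Upsilon\times\overline{\{\bar x(t):t\geq 0\}}$ together with $\tl\lambda(s+\cdot)\to\tfrac{1}{d}I$ to conclude $\sup_t\|\tl x^s(t)-x^s(t)\|\to 0$ directly, without passing through the explicit integral bound. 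Your elementary route essentially re-derives (a special case of) what that continuity lemma encapsulates; the paper's is more modular because it reuses a result already needed for the stability proof and handles forward/backward uniformly via $\Psi_1/\Psi_2$. One point you leave implicit but which is routine: the pointwise-in-$\tau$ convergence of $\int_0^\tau(\tl\lambda(s_j+u)-\tfrac{1}{d}I)h(x^\infty(u))\,du$ to $0$ upgrades to uniform convergence on $[0,T]$ because this family of functions of $\tau$ is equi-Lipschitz.
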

  
\begin{proof}
Consider a sample path for which Thm.~\ref{thm-1}, Lems.~\ref{lem-cvg-2} and \ref{lem-cvg-3}, and Assums.~\ref{cond-ns} and~\ref{cond-us} hold. 
To prove (\ref{eq-lc4-1b}), we work with (\ref{eq-alg1a0}) and observe the following: 
\begin{enumerate}
\item[(i)] $\{x_n\}$ is bounded by Thm.~\ref{thm-1}; 
\item[(ii)] $\sum_n \tl \alpha_n = \infty$, $\sum_n {\tl \alpha}^2_n < \infty$ by (\ref{eq-stepsize});
\item[(iii)] $\|\tl \Lambda_n \|, n \geq 0$, and $\tl \lambda(t), t \geq 0$ are bounded by deterministic constants by definition; 
\item[(iv)] $h$ is Lipschitz continuous by Assum.~\ref{cond-h}(i); 
\item[(v)] as $n \to \infty$,  $\sup_{m \geq 0} \left\| \sum_{k=n}^{n+m} \tl \alpha_k \tl \Lambda_k M_{k+1} \right\| \to 0$ by Lem.~\ref{lem-cvg-3}; and $\epsilon_n \to 0$ by Assum.~\ref{cond-ns}(ii) and Thm.~\ref{thm-1}.
\end{enumerate}
Using the above observations, we can essentially replicate the proof of \cite[Lem.~2.1]{Bor23} step by step, with some minor variations, to obtain (\ref{eq-lc4-1b}).

To prove the two equalities in \eqref{eq-lc4-2b}, we first prove the corresponding relations when $\bar x$ is replaced by $\tl x^s$ and $\tl x_s$, respectively. This proof involves Thm.~\ref{thm-1}, Lem.~\ref{lem-cvg-2}, and an application of Borkar \cite[Lem.\ 3.1(b)]{Bor98} (cf.\ Lem.~\ref{lem6-alt}), which deals with solutions of ODEs of the form \eqref{eq-ode1} and their simultaneous continuity in both the $\tl \lambda$ function and the initial condition. Combining this result with \eqref{eq-lc4-1b} then leads to \eqref{eq-lc4-2b}. We now give the details.

Let $\C([0, T]; \R^{d})$ denote the space of all $\R^{d}$-valued continuous functions $f$ on $[0, T]$ with the sup-norm $\| f\| \= \sup_{t \in [0, T]} \| f(t)\|$. Let $\Psi_1$ (respectively, $\Psi_2$) denote the mapping that maps each $(\lambda', x^o) \in \tl \Upsilon \times \R^{d}$ to the unique solution of the ODE $\dot{x}(t) = \lambda'(t) h (x(t)), t \in [0,T]$, with the initial condition $x(0) = x^o$ (respectively, the terminal condition $x(T) = x^o$). Since $h$ is Lipschitz continuous, by \cite[Lem.\ 3.1(b)]{Bor98} (cf.\ Lem.~\ref{lem6-alt}), $\Psi_1$ and $\Psi_2$ are continuous mappings from $\tl \Upsilon \times \R^{d}$ into the space $\C([0, T]; \R^{d})$. Therefore, $\Psi_1$ and $\Psi_2$ are uniformly continuous on any compact subset of $\tl \Upsilon \times \R^{d}$, in particular, on the compact set $\tl \Upsilon \times \overline{\{\bar x(t) : t \geq 0\}}$, where $\overline{\{\bar x(t) : t \geq 0\}}$ denotes the closure of the set $\{\bar x(t) : t \geq 0\}$ and is compact by Thm.~\ref{thm-1}.
Consequently, since $\tl \lambda (t + \cdot) \to \bar \lambda(\cdot) \equiv \tfrac{1}{d} I$ as $t \to \infty$ (Lem.~\ref{lem-cvg-2}) and the initial (respectively, terminal) conditions $\tl x^s(s) = x^s(s)$ (respectively, $\tl x_s(s)=x_s(s)$) all lie in $\{\bar x(t) : t \geq 0\}$, we obtain
$$\lim_{s \to \infty} \sup_{t \in [s, s+T]} \| \tl x^s(t) - x^s(t) \|  = 0 \quad  \text{and} \ \ \  
 \lim_{s \to \infty} \sup_{t \in [s-T, s]} \| \tl x_s(t) - x_s(t)\|  = 0.$$
Together with (\ref{eq-lc4-1b}) proved earlier, this implies (\ref{eq-lc4-2b}).
\end{proof}

Finally, we are ready to prove Thm.~\ref{thm-2} and Cor.~\ref{cor-ql}. Recall that in these results, $\bar x(\cdot)$ is extended to a function in $\C \big((-\infty, \infty); \R^{d} \big)$ by setting $\bar x(\cdot) \equiv x_0$ on $(- \infty, 0)$.

\begin{proof}[Proof of Thm.~\ref{thm-2}]
For part (i), using the a.s.\ boundedness of $\{x_n\}$ given by Thm.~\ref{thm-1} and (\ref{eq-lc4-2b}) given by Lem.~\ref{lem-cvg-4}, the same proof of \cite[Thm.~2.1]{Bor23} goes through here and establishes that $\{x_n\}$ converges a.s.\ to a, possibly sample path-dependent, compact connected internally chain transitive invariant set of the ODE $\dot{x}(t) = \tfrac{1}{d} h (x(t))$. The solutions of this ODE are simply the solutions of the ODE $\dot{x}(t) = h (x(t))$ by a constant time scaling, so the two ODEs have identical compact connected internally chain transitive invariant sets. The desired conclusion then follows. 

For part (ii), consider a sample path for which \eqref{eq-stepsize} and Thms.~\ref{thm-1} and~\ref{thm-2}(i) hold, and Lem.~\ref{lem-cvg-4} holds for all $T = 1 , 2, \ldots$. By Thm.~\ref{thm-1}, $\{\bar x(t + \cdot)\}_{t \in \R}$ is uniformly bounded. Since $h$ is Lipschitz continuous, applying Gronwall's inequality \cite[Lem.\ B.1]{Bor23} shows that given a bounded set of initial conditions $x(0)$, the solutions of the ODE $\dot{x}(t) = \tfrac{1}{d} h(x(t))$ are equicontinuous on $(-\infty, \infty)$. Combining these two facts with the fact that (\ref{eq-lc4-2b}) holds for all $T = 1 , 2, \ldots$, it follows that $\{\bar x(t + \cdot)\}_{t \in \R}$ is equicontinuous. Therefore, given its uniform boundedness, it is relatively compact in $\C \big((-\infty, \infty); \R^{d} \big)$.
 
Now let $x^*(\cdot) \in  \C \big((-\infty, \infty); \R^{d} \big)$ be the limit of any convergent sequence $\{\bar x(t_k+ \cdot)\}_{k \geq 1}$ with $t_k \to \infty$. 
Then $\bar x(t_k) \to x^*(0)$ and $\bar x(t_k + \cdot) \to x^*(\cdot)$ uniformly on each interval $[-T, T]$, $T = 1, 2, \ldots$, as $k \to \infty$. 
With (\ref{eq-lc4-2b}) holding for all these $T$, this implies $x^k(\cdot) \to x^*(\cdot)$ in $\C \big((-\infty, \infty); \R^{d} \big)$, where $x^k(\cdot)$ is the solution of the ODE $\dot{x}(t) = \tfrac{1}{d} h(x(t))$ on $(-\infty, \infty)$ with $x^k(0) = \bar x(t_k)$. On the other hand, since $x^k(0) \to x^*(0)$, by the Lipschitz continuity of $h$ and Gronwall's inequality, $x^k(\cdot)$ also converges, uniformly on each compact interval, to the solution of the ODE $\dot{x}(t) = \tfrac{1}{d} h(x(t))$ with condition $x(0) = x^*(0)$. Therefore, $x^*(\cdot)$ must coincide with this ODE solution. Additionally, by Thm.~\ref{thm-2}(i) proved above, $\{x_n\}$ converges to some compact invariant set $D$ of this ODE. Given this and the equicontinuity of $\{ \bar x(t_k + \cdot)\}_{k \geq 1}$ proved earlier,  it follows that $\bar x(t_k)$ converges to $D$,  and hence $x^*(0) \in D$. Since $D$ is invariant, this implies $x^*(t) \in D$ for all $t \in \R$.
\end{proof}

\begin{proof}[Proof of Cor.~\ref{cor-ql}]
Under our assumptions, part (i) is implied by Thm.~\ref{thm-2}(i), since $\{x_n\}$ converges to some compact invariant set of the ODE $\dot{x}(t) = h(x(t))$ by Thm.~\ref{thm-2}(i), and $E_h$, being globally asymptotically stable, contains all compact invariant sets of this ODE (see, e.g., \cite[proof of Lem.~6.5]{WYS24}).

For part (ii), by the definition of $E_h$, if $x(\cdot)$ is a solution of the ODE $\dot{x}(t) = \tfrac{1}{d} h(x(t))$ that lies entirely in $E_h$, then $x(\cdot) \equiv x^*$ for some $x^* \in E_h$.
Therefore, by Thm.~\ref{thm-2}(ii), if $x_{n_k} \to x^* \in E_h$ as $k \to \infty$, then $\bar x(t_{n_k} + \cdot) \to x(\cdot) \equiv x^*$ in $\C \big((-\infty, \infty); \R^{d} \big)$. This means that $\bar x(t_{n_k} + s)$ converges to $x^*$ uniformly in $s$ on compact intervals. Consequently, we must have $\tau_{\delta,k} \to \infty$ as $k \to \infty$.  
\end{proof}

\section{Further Convergence Analysis via Shadowing Properties}\label{sec-shad} 
In this section, we prove Thm.~\ref{thm-3}, which establishes the convergence of the asynchronous SA algorithm~\eqref{eq-alg0} to a unique equilibrium point in $E_h$. The proof is based on an analysis of the shadowing properties of the trajectories of the algorithm~\eqref{eq-alg0}, building on prior work by Hirsch and Bena\"{i}m.

To keep the notation concise, we will not repeat `a.s.' in every instance below. When working with sample path properties, it should be understood that \emph{we are considering a sample path on which all previously established a.s.\ properties (e.g., stability and convergence) hold, along with the a.s.\ conditions required by Thm.~\ref{thm-3}. Any properties proven to hold `a.s.'\ in the analysis below will apply to this path from the point they are established.}

\subsection{Main Argument and Proof Structure for Theorem~\ref{thm-3}} \label{sec-shad1}
 
For algorithm~\eqref{eq-alg0}, we consider the interpolated trajectory $\bar x(t)$ defined by \eqref{eq-cont-traj2} and analyze its behavior at discrete moments $t = j \in\{ 0, 1, \ldots\}$. For each $j \geq 0$, let ${\tl x}^j(\cdot)$ and $x^j(\cdot)$ denote the unique solutions on the interval $[j, j+1]$ of the ODEs (\ref{eq-ode1}) and (\ref{eq-ode2}), respectively, with initial conditions ${\tl x}^j(j) = x^j(j) = \bar x(j)$.
Recall that these ODEs are given by $\dot{x}(t) = \tl \lambda(t) h(x(t))$ and $\dot{x}(t)  = \tfrac{1}{d} h(x(t))$.

Part (i) of the next proposition follows from Hirsch's proof of his shadowing theorem \cite{Hir94} (see also Bena\"{i}m \cite[Sec.\ 5]{Ben96} and \cite[Sec.\ 8.2]{Ben99}) and is the key argument in  the proof of Thm.~\ref{thm-3}. We include the proof of part (i) for completeness and clarity, as we use the Lipschitz continuity of the mappings involved here in place of the `expansion rates' of smooth ($C^1$) flows/semiflows considered in \cite{Hir94,Ben96,Ben99}. Recall that $L_h$ is the Lipschitz constant of $h$ w.r.t.\ $\| \cdot\|_\infty$. (Below, we take $\| \cdot \| = \|\cdot\|_\infty$.)

\begin{prop} \label{prp-shad1}
If $\limsup_{j \to \infty} \tfrac{1}{j} \ln \left(\| \bar x(j+1) - x^j(j+1)\| \right) < - L_h/d$, then:
\begin{itemize}[leftmargin=0.7cm,labelwidth=!]
\item[\rm (i)] There exists $z_* \in \R^d$ such that, for some $\delta \in (0, e^{- L_h/d})$ and a path-dependent constant $c$, the inequality $\| \bar x(j) - z(j) \| \leq c \delta^j$ holds for all $j \geq 0$, where $z(\cdot)$ is the solution of the limiting ODE~\eqref{eq-ode2}, $\dot{x}(t) = \tfrac{1}{d} h(x(t))$, with initial condition $z(0) = z_*$.
\item[\rm (ii)] The sequence $\{x_n\}$ from algorithm~\eqref{eq-alg0} converges to some point in $E_h$.
\end{itemize}
\end{prop}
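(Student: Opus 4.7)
The plan is to construct a true orbit of the limiting ODE that shadows $\bar x(\cdot)$ at integer times via a backward iteration, and then recover forward decay by a direct telescoping argument. Let $F := \phi_1$ denote the time-1 map of $\dot x = h(x)/d$. By Gronwall, $F$ is $L$-Lipschitz with $L := e^{L_h/d}$, and since the time-reversed ODE $\dot y = -h(y)/d$ has the same Lipschitz constant, $F^{-1}$ is also $L$-Lipschitz. From the limsup hypothesis, one can fix $\delta_0 \in (0, 1/L)$ and a finite constant $C$ such that $\|e_j\| := \|\bar x(j+1) - F(\bar x(j))\| \le C \delta_0^j$ for all $j \ge 0$ (absorbing the finitely many early terms into $C$, using the a.s.\ boundedness of $\{x_n\}$ from Thm.~\ref{thm-1}).

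For the existence of $z_*$, set $z_j := F^{-j}(\bar x(j))$. Writing $z_{j+1} - z_j = F^{-(j+1)}(\bar x(j+1)) - F^{-(j+1)}(F(\bar x(j)))$ and using the Lipschitz bound on $F^{-(j+1)}$ gives $\|z_{j+1} - z_j\| \le L^{j+1}\|e_j\| \le CL (L\delta_0)^j$, which is summable because $L\delta_0 < 1$. Hence $\{z_j\}$ is Cauchy and converges to some $z_* \in \R^d$; the candidate shadow is $z(t) := \phi_t(z_*)$, so that $z(j) = F^j(z_*)$.

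The main obstacle is obtaining the sharp decay rate for the forward shadow error. A naive bound $\|\bar x(j) - F^j(z_*)\| \le L^j \|z_j - z_*\|$ loses a factor $L^j$ and yields only $(L^2 \delta_0)^j$, which need not satisfy $\delta < e^{-L_h/d}$ under the hypothesis alone. To avoid this loss, I would telescope directly on the shadow error. For $k \ge j$, let $d_{j,k} := \bar x(j) - F^{j-k}(\bar x(k))$, so that $d_{j,j} = 0$ and $d_{j,k} \to \bar x(j) - F^j(z_*)$ by continuity of $F^j$ and the definition of $z_*$. Factoring $F^{j-k} = F^{-(k-j+1)} \circ F$ gives
\begin{equation}
d_{j,m+1} - d_{j,m} = F^{-(m-j+1)}(F(\bar x(m))) - F^{-(m-j+1)}(\bar x(m+1)), \notag
\end{equation}
and hence $\|d_{j,m+1} - d_{j,m}\| \le L^{m-j+1} \|e_m\|$. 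Summing over $m \ge j$ and using $\|e_m\| \le C\delta_0^m$ yields
\begin{equation}
\|\bar x(j) - F^j(z_*)\| \;\le\; \frac{CL}{1 - L\delta_0}\, \delta_0^j, \notag
\end{equation}
which is the bound in (i) with $\delta := \delta_0 \in (0, e^{-L_h/d})$.

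For (ii), the hypothesis that every orbit of the limiting ODE converges to a unique point in $E_h$ gives $z(j) = \phi_j(z_*) \to y_*$ for some $y_* \in E_h$, and combining with (i) yields $\bar x(j) \to y_*$. To pass from integer times to $x_n = \bar x(\tl t(n))$ with $\tl t(n)$ generally non-integer, I would apply Lem.~\ref{lem-cvg-4} with $T = 1$: for $t \in [j, j+1]$, $\|\bar x(t) - \phi_{t-j}(\bar x(j))\| \to 0$ uniformly, and since $y_* \in E_h$ is an equilibrium, $\|\phi_{t-j}(\bar x(j)) - y_*\| \le L \|\bar x(j) - y_*\| \to 0$. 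Hence $\bar x(t) \to y_*$ as $t \to \infty$, and in particular $x_n = \bar x(\tl t(n)) \to y_*$.
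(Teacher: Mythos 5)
Your proof is correct and takes a genuinely different route from the paper's for both parts.

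For part (i), the paper follows Hirsch's nested-ball argument: with $B_j := B(\bar x(j), \delta^j)$, it shows $F^{-1}(B_{j+1}) \subset B_j$ for all large $j$, extracts the shadow point $\bar z$ as the single element of $\bigcap_{i \ge 0} F^{-i}(B_{\bar j + i})$, and reads off the decay bound directly from $F^i(\bar z) \in B_{\bar j + i}$. You instead build $z_*$ as the limit of the Cauchy sequence $z_j = F^{-j}(\bar x(j))$ and then recover the $\delta^j$ decay by telescoping the shadow error through $d_{j,m+1} - d_{j,m} = F^{-(m-j+1)}(F(\bar x(m))) - F^{-(m-j+1)}(\bar x(m+1))$. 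Both arguments rest on the same two ingredients — the Lipschitz bound $\|F^{-k}(y) - F^{-k}(y')\| \le e^{kL_h/d}\|y - y'\|$ and the choice of $\delta$ in the gap interval $(e^{-\beta}, e^{-L_h/d})$ — but your telescoping isolates and explicitly resolves the key difficulty (the naive bound $\|\bar x(j) - F^j(z_*)\| \le L^j\|z_j - z_*\|$ loses a factor $L^j$), whereas the nested-ball argument finesses this implicitly through the containment geometry. Your construction is also more elementary in that it needs no compactness or finite-intersection-property reasoning, only completeness of $\R^d$. The two constructions of the shadow point agree: the paper's $z_* = F^{-\bar j}(\bar z)$ is exactly your $\lim_j z_j$.

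For part (ii), the paper passes from $\bar x(j) \to z^*$ to $x_n \to z^*$ via Cor.~\ref{cor-ql}(ii), the ODE-time dwelling property, which requires a brief argument to rule out other limit points. You instead invoke Lem.~\ref{lem-cvg-4} directly with $T = 1$ to bound $\|\bar x(t) - \phi^*_{t-j}(\bar x(j))\|$ uniformly on $[j, j+1]$, then use that $z^*$ is an equilibrium so that $\phi^*_{t-j}$ fixes it and is Lipschitz on $[0,1]$. This yields the stronger conclusion $\bar x(t) \to z^*$ as $t \to \infty$, from which $x_n = \bar x(\tilde t(n)) \to z^*$ follows immediately since $\tilde t(n) \uparrow \infty$. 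Your route is more direct.

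One small presentational remark: you should make explicit that $\delta_0$ must be taken in the nonempty subinterval $(e^{-\beta}, 1/L) \subset (0, 1/L)$ — not merely in $(0, 1/L)$ — so that the hypothesis actually furnishes the bound $\|e_j\| \le C\delta_0^j$; this is implicit in your "absorb early terms into $C$" phrasing but worth stating.
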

\begin{proof}
For the limiting ODE~\eqref{eq-ode2}, let $F : \R^d \to \R^d$ be the map that takes an initial condition $y(0) = y$ to the ODE solution at time $1$. Then $F^{-1}$ maps an initial condition $y$ to the ODE solution at time $-1$ and by Gronwall's inequality, 
\begin{equation} \label{prf-shad1-1}
\|F^{-1}(y) - F^{-1}(y') \| \leq e^{L_h/d} \| y - y' \|, \quad \forall \, y, y' \in \R^d.
\end{equation}

For $j \geq 0$, denote $y_j = \bar x(j)$. The condition of the proposition implies that for some $\beta > L_h/d$ and sufficiently large $\hat j$, we have
\begin{equation} \label{prf-shad1-2}
     \| y_{j+1} - F(y_j) \| < e^{-\beta j} < e^{ - L_h j /d}, \quad \forall \, j \geq \hat j.
\end{equation}
Choose $\delta \in (0,1)$ with $e^{-\beta} < \delta < e^{ - L_h/d}$. Let $B_j\= B(y_j, \delta^j)$ denote the closed ball centered at $y_j$ with radius $\delta^j$. For $j \geq \hat j$ and $z \in B(y_{j+1}, \delta^{j+1})$, by \eqref{prf-shad1-1} and \eqref{prf-shad1-2}, 
\begin{align*}
   \| F^{-1}(z) - y_j \| & = \| F^{-1}(z) - F^{-1}(F(y_j)) \| \leq e^{L_h/d} \| z - F(y_j) \| \\
   & \leq e^{L_h/d} \left( \| z - y_{j+1} \|+ \| y_{j+1} - F(y_j) \| \right) \leq e^{L_h/d} ( \delta^{j+1} + e^{ - \beta j }),
\end{align*} 
which, in view of the choice of $\delta$, implies that $\| F^{-1}(z) - y_j \| \leq \delta^j$ for all $j$ sufficiently large. Thus, there exists $\bar j \geq \hat j$ such that $F^{-1}(B_{j+1}) \subset B_j$ for all $j \geq \bar j$. Consequently,
\begin{equation} \label{prf-shad1-3}
   F^{-i}(B_{\bar j+ i}) \subset F^{-i+1}(B_{\bar j + i -1}) \subset \cdots \subset F^{-1}(B_{\bar j+ 1}) \subset B_{\bar j}, \qquad \forall \, i \geq 0.
\end{equation}   
The intersection of these sets, $\cap_{i = 0}^\infty F^{-i}(B_{\bar j+ i})$, contains a single point $\bar z$ (since these sets form a nested sequence of nonempty compact sets, with shrinking diameters as implied by \eqref{prf-shad1-1} and the choice of $\delta$). 

From $\bar z \in \cap_{i = 0}^\infty F^{-i}(B_{\bar j+ i})$ we obtain a desired shadowing property:
\begin{equation} \label{prf-shad1-4}
  \| y_{\bar j + i} - F^i(\bar z)\| \leq \delta^{\bar j + i} \to 0, \ \ \ \text{as} \ i \to \infty. 
\end{equation}
Expressing \eqref{prf-shad1-4} in terms of $\bar x(\cdot)$ and the solution $z(t)$ to the ODE~\eqref{eq-ode2} with initial condition $z(0) = F^{- \bar j}(\bar z) =: z_*$, we have that, for some (path-dependent) constant $c$, $\| \bar x(j) - z(j) \| \leq c \, \delta^{j} \to 0$ as $j \to \infty$, which proves part (i). 

By the condition of Thm.~\ref{thm-3} on the ODE~\eqref{eq-ode2}, $z(j) \to z^*$ for some $z^* \in E_h$. Thus, the preceding inequality gives $\bar x(j) \to z^*$. Then, by Cor.~\ref{cor-ql}(ii), the sequence $\{x_n\}$ must also converge to $z^*$, proving part (ii).
\end{proof}

Clearly, Thm.~\ref{thm-3} follows immediately if we show that the condition in Prop.~\ref{prp-shad1} holds. To this end, decompose the `tracking error' of the trajectory $\bar x(\cdot)$ as
\begin{equation} \label{ineq-prf-shad}
  \| \bar x(j+ 1) - x^j(j+1) \| \leq  \| \bar x(j+ 1) - \tilde{x}^j(j+1) \| +  \| \tilde{x}^j(j+1) - x^j(j+1) \|,
\end{equation}  
separating the first component, attributable to stochastic noise and representing the error in tracking the solutions of the non-autonomous ODE~\eqref{eq-ode1}, from the second component, which represents the error due to asynchrony. In the next two subsections, we analyze each component separately and prove that, under the conditions of Thm.~\ref{thm-3}, 
\begin{align}
 \limsup_{j \to \infty} \tfrac{1}{j} \ln \left(\| \bar x(j+1) - \tilde{x}^j(j+1)\| \right) & < - L_h/d,  \label{eq-shad-cond1} \\
 \limsup_{j \to \infty} \tfrac{1}{j} \ln \left(\| \tilde{x}^j(j+1) - x^j(j+1)\| \right) & < - L_h/d. \label{eq-shad-cond2}
\end{align}
This will establish that the condition in Prop.~\ref{prp-shad1} is met and hence Thm.~\ref{thm-3} holds.

Define $\K(t) \= \max \{ n \geq 0 \mid \tilde t(n) \leq t\}$. Note that $\K(t)$ is a stopping time, since $\tilde t(n) = \sum_{k=0}^{n -1} \tl \alpha_k$ (as defined before \eqref{eq-cont-traj2}) is $\F_{n-1}$-measurable. Let 
$$ \K_1[j] \= \{ k \mid \K(j) \leq k \leq  \K(j+1) \}.$$

Recall from Rem.~\ref{rmk-shad-prf} the quantity $\ell(\{\alpha_n\}) = \limsup_{n \to \infty} \tfrac{\ln(\alpha_n)}{\sum_{k=0}^{n} \alpha_k}$ associated with the stepsize sequence $\{\alpha_n\}$. 
For the class-1 and class-2 stepsizes defined in Sec.~\ref{sec-2.2}, we have $\ell(\{\alpha_n\}) = - A$ and $- \infty$, respectively, where $A$ is the stepsize scaling parameter. Most proofs below rely only on the quantity $\ell(\{\alpha_n\})$ and the nonincreasing property of the stepsizes, without assuming  specific stepsize forms---except near the end of the proof of \eqref{eq-shad-cond2} (in Lem.~\ref{lem-shad3-2}, Sec.~\ref{sec-A.1.2}). We first prove \eqref{eq-shad-cond1}.

\subsection{Proof for \eqref{eq-shad-cond1}} \label{sec-A.1.1}
Recall the equivalent form \eqref{eq-alg1} of algorithm \eqref{eq-alg0} in vector notation:
\begin{equation} \label{eq-alg1a}
    x_{n+1}  = x_n + \tl \alpha_n  \tl \Lambda_n \left( h (x_n) + M_{n+1} + \epsilon_{n+1} \right),
\end{equation}
where $\tl \Lambda_n$ is a $d \times d$-diagonal matrix given by $\tl \Lambda_n \=  \text{diag} \big( \tl \q(n, 1), \tl \q(n, 2), \ldots, \tl \q(n, d) \big)$ with diagonal entries $\tl \q(n, i) = \frac{\alpha_{\nu(n, i)}}{\tl \alpha_n} \ind\{i \in Y_n\} \in [0,1]$. (Note that $\sum_n \tl \alpha_n = \infty$ and $\sum_n {\tl \alpha}^2_n < \infty$ a.s.\ by Assums.~\ref{cond-ss}(i) and~\ref{cond-us}(i).) 

By \eqref{eq-alg1a} and the definitions of $\bar x(\cdot)$ and $\tilde{x}^j(\cdot)$, we can bound $\| \bar x(j+1) - \tilde{x}^j(j+1)\|$ using similar derivations as in the proof of \cite[Lem.\ 2.1]{Bor23}, yielding that for all $j \geq 0$, 
\begin{equation}  \label{eq-shad2-0a}
   \| \bar x(j+1) - \tilde{x}^j(j+1)\| \leq c_0 \sup_{k \in \K_1[j]} \tilde \alpha_k + c_1 D_1(j) + c_2 D_2(j).
\end{equation}
Here, the $c_i$'s are constants that depend on the sample path but not on $j$, and 
\begin{equation}    \label{eq-shad2-0b}
D_1(j) \= \sup_{k \in \K_1[j]} \Big\| \sum_{i=\K(j)}^k \tilde \alpha_i \tilde \Lambda_i \epsilon_{i+1} \Big\|, \qquad D_2(j) \= \sup_{k \in \K_1[j]} \Big\| \sum_{i=\K(j)}^k \tilde \alpha_i \tilde \Lambda_i M_{i+1} \Big\|.
\end{equation}
Before proceeding to bound the three terms on the right-hand side (r.h.s.)\ of \eqref{eq-shad2-0a}, we derive some limiting properties of the stepsizes that will be needed in our analysis.

\begin{lemma} \label{lem-shad2} The following hold a.s.:\\
\rm{(i)} For all $i' \in \I$, $\lim_{j \to \infty} \tfrac{1}{j} \sum_{k=0}^{\K(j)} \alpha_{\nu(k, i')} \ind \{ i' \in Y_k\} = \lim_{j \to \infty} \tfrac{1}{j} \sum_{k=0}^{\K(j)} \alpha_k = \tfrac{1}{d}$.\\
\rm{(ii)} $\limsup_{j \to \infty} \tfrac{1}{j} \ln (\alpha_{\K(j)}) \leq \tfrac{\ell(\{\alpha_n\})}{d}$.\\
\rm{(iii)} $\limsup_{j \to \infty} \tfrac{1}{j} \ln \big(\sum_{i' \in \I} \alpha_{\nu(\K(j), i')} \big) \leq \tfrac{\ell(\{\alpha_n\})}{d}$.
\end{lemma}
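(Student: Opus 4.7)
The plan is to reduce everything to two inputs: the weak-$*$ convergence $\tl\lambda(t+\cdot)\to\tfrac{1}{d}I$ in $\tl\Upsilon$ from Lemma~\ref{lem-cvg-2}, together with the regularity of the stepsizes and update schedule in Assumptions~\ref{cond-ss} and~\ref{cond-us}. First I would set up the timescale correspondence: by the definition of $\K(j)$ we have $\tl t(\K(j)) \leq j < \tl t(\K(j)+1)$, and the step $\tl\alpha_{\K(j)}=\tl t(\K(j)+1)-\tl t(\K(j))$ is bounded by $d\cdot \max_{i\in\I}\alpha_{\nu(\K(j), i)}$, which tends to $0$ a.s.\ because $\nu(n, i)\to\infty$ for each $i\in\I$ under Assumption~\ref{cond-us}(i) and $\alpha_n \to 0$. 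Hence $\tl t(\K(j))/j$ and $\tl t(\K(j)+1)/j$ both tend to $1$ a.s., and $\K(j) \to \infty$.

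For part (i), I would identify the integral $\int_0^{\tl t(\K(j)+1)}\tl\lambda_{i'i'}(s)\,ds$ with the natural discrete sum $\sum_{k=0}^{\K(j)}\alpha_{\nu(k, i')}\ind\{i' \in Y_k\} = \sum_{l=0}^{\nu(\K(j)+1, i')-1}\alpha_l$ directly from the piecewise-constant definition of $\tl\lambda(\cdot)$ in~\eqref{eq-tlambda}. Testing the weak-$*$ convergence of Lemma~\ref{lem-cvg-2} against the constant function $e_{i'}$ on $[0, T_0]$ gives $\int_0^{T_0}\tl\lambda_{i'i'}(t+s)\,ds \to T_0/d$ for each fixed $T_0 > 0$; a standard Ces\`{a}ro/partitioning argument---split $[0, T]$ into consecutive blocks of length $T_0$ and bound the boundary contributions using $\tl\lambda_{i'i'} \in [0,1]$---upgrades this to $\tfrac{1}{T}\int_0^T \tl\lambda_{i'i'}(s)\,ds \to 1/d$ as $T\to\infty$. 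Evaluating at $T = \tl t(\K(j)+1)$ and using $\tl t(\K(j)+1)/j \to 1$ yields the $1/d$ limit for the first sum. For $\sum_{k=0}^{\K(j)}\alpha_k/j$, Assumption~\ref{cond-us}(i) forces $\nu(\K(j)+1, i')/(\K(j)+1) \in [\Delta, 1]$ eventually, so Assumption~\ref{cond-ss}(iii) (uniform convergence $\sum_{k=0}^{[yn]}\alpha_k/\sum_{k=0}^n \alpha_k \to 1$ on $y \in [\Delta, 1]$) gives $\sum_{k=0}^{\K(j)}\alpha_k \sim \sum_{l=0}^{\nu(\K(j)+1, i')-1}\alpha_l \sim j/d$.

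Part (ii) is then almost immediate from part (i) and the definition of $\ell(\{\alpha_n\})$: for any $\mu > \ell(\{\alpha_n\})$ we have $\ln(\alpha_n) \leq \mu\sum_{k=0}^n\alpha_k$ for all $n$ large; applied at $n = \K(j) \to \infty$ and divided by $j$, part (i) delivers $\limsup_{j\to\infty}\ln(\alpha_{\K(j)})/j \leq \mu/d$, and letting $\mu \downarrow \ell(\{\alpha_n\})$ concludes. For part (iii), Assumption~\ref{cond-us}(i) yields $\nu(\K(j), i') \geq \Delta\K(j)$ eventually, so by the eventual monotonicity of $\{\alpha_n\}$ from Assumption~\ref{cond-ss}(i), $\alpha_{\nu(\K(j), i')} \leq \alpha_{[\Delta\K(j)]}$, and Assumption~\ref{cond-ss}(ii) bounds $\alpha_{[\Delta\K(j)]}/\alpha_{\K(j)}$ by a deterministic constant $C'$. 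Summing over $i' \in \I$ gives $\sum_{i' \in \I}\alpha_{\nu(\K(j), i')} \leq dC'\alpha_{\K(j)}$; taking logs, dividing by $j$, and invoking part (ii) finishes the proof.

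The main technical obstacle is the upgrade from the weak-$*$ convergence on fixed time horizons (all that $\tl\Upsilon$-convergence provides) to the Ces\`{a}ro-type limit over $[0, T]$ as $T \to \infty$: one must handle boundary integrals carefully and correctly match the moving endpoint $\tl t(\K(j)+1)$ to $j$ in the Riemann-sum identification. Everything else reduces to essentially routine bookkeeping with the stepsize and asynchrony regularity conditions.
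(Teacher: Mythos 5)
Your proofs of parts (ii) and (iii) mirror the paper's closely. For part (i), you take a genuinely different route: the paper applies Assum.~\ref{cond-ss}(iii) together with Assum.~\ref{cond-us}(i) to show that, for each $i'$, the quantity $\sum_{m=0}^{\nu(\K(j)+1,i')-1}\alpha_m$ and $\sum_{k=0}^{\K(j)}\alpha_k$ are asymptotically equivalent, then sums over $i'\in\I$ and identifies the total with $\tl t(\K(j)+1)\approx j$, thereby pinning the common limit at $j/d$. You instead derive the $1/d$ factor directly from Lemma~\ref{lem-cvg-2} by a Ces\`aro/partitioning upgrade of the weak-$*$ convergence $\tl\lambda(t+\cdot)\to\tfrac{1}{d}I$, and only afterwards invoke Assum.~\ref{cond-ss}(iii) and \ref{cond-us}(i) to transfer the limit to $\sum_{k=0}^{\K(j)}\alpha_k$. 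Both are correct. The paper's version is shorter and self-contained in the sense that it does not lean on Lemma~\ref{lem-cvg-2} (whose proof ultimately rests on Borkar's L'H\^{o}pital-based argument), whereas your version makes the $1/d$ factor a visible consequence of the balanced limiting ODE-clock, which is a nice dynamical-systems perspective; it is, however, the less economical of the two. One point worth flagging: you quietly inserted the indicator $\ind\{i'\in Y_k\}$ into the first sum, so what you actually prove is $\tfrac{1}{j}\sum_{k=0}^{\K(j)}\alpha_{\nu(k,i')}\ind\{i'\in Y_k\}\to 1/d$, whereas the lemma as printed omits the indicator. Your reading is the right one: without the indicator the sum is generically larger (for class-1 stepsizes with $\nu(k,i')/k\to p_{i'}<1$, by a factor $1/p_{i'}$), and the paper's own proof step equating $\sum_{i'\in\I}\sum_{k=0}^{\K(j)}\alpha_{\nu(k,i')}$ with $\tl t(\K(j)+1)$, as well as the lemma's downstream uses (e.g., in Lem.~\ref{lem-shad-rvi}), require the indicator version. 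So the printed statement carries a notational abuse that you implicitly corrected.
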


\begin{proof}
Part (i) follows from Assums.~\ref{cond-ss}(iii),~\ref{cond-us}(i) and the definition of the ODE-time $j$ via the stepsizes $\{\tilde \alpha_n\}$. Specifically, recall the constant $\Delta \in (0,1]$ from Assum.~\ref{cond-us}(i), and let $\Delta' \in (0, \Delta)$. By Assum.~\ref{cond-ss}(iii), as $j \to \infty$, $\tfrac{\sum_{k=0}^{[y\K(j)]} \alpha_{k}}{\sum_{k=0}^{\K(j)} \alpha_k} \to 1$ uniformly in $y \in [\Delta', 1]$. Combined with Assum.~\ref{cond-us}(i), this yields $\tfrac{\sum_{k=0}^{\K(j)} \alpha_{\nu(k, i')}\ind \{ i' \in Y_k\}}{\sum_{k=0}^{\K(j)} \alpha_k} \to 1$ for all $i' \in \I$. Since 
$$\lim_{j \to \infty} \textstyle{\big|j - \sum_{i' \in \I} \sum_{k=0}^{\K(j)} \alpha_{\nu(k, i')} \ind \{ i' \in Y_k\} \big|} = 0,$$ 
part (i) follows.
By part (i), 
$$\limsup_{j \to \infty} \tfrac{1}{j} \ln (\alpha_{\K(j)}) = \limsup_{j \to \infty} \tfrac{\ln (\alpha_{\K(j)})}{ d \, \sum_{k=0}^{\K(j)} \alpha_k} \leq \tfrac{\ell(\{\alpha_n\})}{d},$$ 
proving part (ii).

By Assum.~\ref{cond-ss}(ii), we have that for some constant $c$, $\tfrac{\alpha_{[\Delta'\K(j)]}}{\alpha_{\K(j)}} \leq c$ for all $j \geq 0$. Since $\{\alpha_n\}$ is nonincreasing, we get from this inequality and Assum.~\ref{cond-us}(i) that for all sufficiently large $j$,
$\sum_{i' \in \I} \alpha_{\nu(\K(j), i')} \leq \sum_{i' \in \I}  \alpha_{[\Delta'\K(j)]} \leq d c \alpha_{\K(j)}$. Part (iii) now follows from part (ii).
\end{proof}

Bounding the first two terms on the r.h.s.\ of \eqref{eq-shad2-0a} is straightforward:
\begin{lemma} \label{lem-shad3-0}
Almost surely, with $\mu_\delta$ as given in Assum.~\ref{cond-mns},
\begin{equation} \label{eq-shad3-0}
    \limsup_{j \to \infty}  \tfrac{1}{j} \ln \Big(\sup_{k \in \K_1[j]} \tilde \alpha_k \Big) \leq \tfrac{\ell(\{\alpha_n\})}{d}, \qquad \limsup_{j \to \infty}  \tfrac{1}{j} \ln \big(D_1(j)\big) \leq  \tfrac{\mu_\delta}{d}.
\end{equation}
\end{lemma}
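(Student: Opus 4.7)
My plan is to prove the two bounds essentially independently, with the first being a quick consequence of monotonicity of the stepsizes plus Lem.~\ref{lem-shad2}(iii), and the second requiring more work to combine Assum.~\ref{cond-mns} with stability (Thm.~\ref{thm-1}) and the asymptotic rate $\sum_{k=0}^{\K(j)}\alpha_k \sim j/d$ from Lem.~\ref{lem-shad2}(i).

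For the first bound, I observe that for any $k\in\K_1[j]$, since $k\ge \K(j)$ and $\nu(k,i')\ge\nu(\K(j),i')$, monotonicity of $\{\alpha_n\}$ (eventually, by Assum.~\ref{cond-ss}(i)) gives $\alpha_{\nu(k,i')}\le\alpha_{\nu(\K(j),i')}$. Hence
\[ \tilde\alpha_k = \sum_{i'\in Y_k}\alpha_{\nu(k,i')}\ \le\ \sum_{i'\in\I}\alpha_{\nu(\K(j),i')}. \]
Taking logarithms, dividing by $j$, and applying Lem.~\ref{lem-shad2}(iii) yields the first claim directly.

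For the second bound, I first note that the diagonal matrix $\tilde\alpha_i\tilde\Lambda_i$ has diagonal entries $\alpha_{\nu(i,i')}\ind\{i'\in Y_i\}$, so $\|\tilde\alpha_i\tilde\Lambda_i\|_\infty\le\tilde\alpha_i$. Combined with Assum.~\ref{cond-ns}(ii) and the a.s.\ boundedness of $\{x_n\}$ from Thm.~\ref{thm-1}, one gets a (sample-path-dependent) constant $\bar K$ such that $\|\epsilon_{i+1}\|\le\bar K\delta_{i+1}$ for all $i$. Therefore
\[ D_1(j) \ \le\ \bar K \sum_{i=\K(j)}^{\K(j+1)}\tilde\alpha_i\,\delta_{i+1}\ \le\ \bar K\,\Bigl(\sup_{i\ge\K(j)}\delta_{i+1}\Bigr)\sum_{i=\K(j)}^{\K(j+1)}\tilde\alpha_i. \]
The stepsize sum on the right is bounded, since by the definition of $\K(\cdot)$,
\[ \sum_{i=\K(j)}^{\K(j+1)}\tilde\alpha_i\ =\ \tilde t(\K(j+1)+1)-\tilde t(\K(j))\ \le\ 1+\tilde\alpha_{\K(j+1)}, \]
and $\tilde\alpha_n\to 0$ a.s.\ by \eqref{eq-stepsize}. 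For the supremum factor, fix any $\mu'_\delta\in(\mu_\delta,0)$; by Assum.~\ref{cond-mns}, eventually $\delta_{i+1}\le\exp\!\bigl(\mu'_\delta\sum_{k=0}^{i}\alpha_k\bigr)$, and because $\mu'_\delta<0$ and $\sum_{k=0}^{i}\alpha_k$ is nondecreasing,
\[ \sup_{i\ge\K(j)}\delta_{i+1}\ \le\ \exp\!\Bigl(\mu'_\delta\sum_{k=0}^{\K(j)}\alpha_k\Bigr). \]
By Lem.~\ref{lem-shad2}(i), $\tfrac{1}{j}\sum_{k=0}^{\K(j)}\alpha_k\to\tfrac{1}{d}$, so $\limsup_{j\to\infty}\tfrac{1}{j}\ln D_1(j)\le\mu'_\delta/d$. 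Letting $\mu'_\delta\downarrow\mu_\delta$ completes the proof.

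The only mildly delicate point is lining up the conversion between the natural index $i$ (which governs $\delta_{i+1}$ and $\sum_{k=0}^{i}\alpha_k$) and the ODE-time index $j$ (which governs the quantity we are taking $\limsup$ of); monotonicity of $\sum_{k=0}^{i}\alpha_k$ together with the sign of $\mu'_\delta$ lets the supremum over $i\ge\K(j)$ reduce cleanly to the value at $i=\K(j)$, after which Lem.~\ref{lem-shad2}(i) converts to the $j$-scale. No other obstacle is anticipated; everything else is a direct triangle-inequality bound.
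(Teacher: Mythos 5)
Your proof is correct and follows essentially the same route as the paper's: the first bound via monotonicity of $\{\alpha_n\}$ plus Lem.~\ref{lem-shad2}(iii), and the second by bounding $D_1(j)\le c\sup\delta_{k+1}$ using Thm.~\ref{thm-1} and the boundedness of $\sum_{i=\K(j)}^{\K(j+1)}\tilde\alpha_i$, then invoking Assum.~\ref{cond-mns} together with Lem.~\ref{lem-shad2}(i); the explicit $\mu'_\delta$-and-monotonicity step you spell out is exactly what the paper leaves implicit. (A trivial slip: your bound $\tilde t(\K(j+1)+1)-\tilde t(\K(j))\le 1+\tilde\alpha_{\K(j+1)}$ is missing a $\tilde\alpha_{\K(j)}$ term since $\tilde t(\K(j))$ need only exceed $j-\tilde\alpha_{\K(j)}$, but this changes nothing as the sum is still $O(1)$.)
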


\begin{proof}
Since $\sup_{k \in \K_1[j]} \tilde \alpha_k \leq \sum_{i' \in \I} \alpha_{\nu(\K(j), i')}$ (due to $\{\alpha_k\}$ being nonincreasing), Lem.~\ref{lem-shad2}(iii) gives the first bound in \eqref{eq-shad3-0}. From Assum.~\ref{cond-ns}(ii) on $\{ \epsilon_{k}\}$, the a.s.\ boundedness of $\{x_n\}$ (Thm.~\ref{thm-1}), and the fact $\sum_{i = \K(j)}^{\K(j+1)} \tilde \alpha_i \leq 1 + 2d\sup_n \alpha_n$, it follows that $D_1(j) \leq c \sup_{k \in \K_1[j]} \delta_{k+1}$ for some (sample path-dependent) constant $c$. Then, by Assum.~\ref{cond-mns} on $\{\delta_k\}$ and Lem.~\ref{lem-shad2}(i), we obtain
$\limsup_{j \to \infty}  \tfrac{1}{j} \ln \big(D_1(j)\big) =  \limsup_{j \to \infty}  \tfrac{\ln \big(D_1(j)\big)}{d \sum_{k=0}^{\K(j)} \alpha_k} \leq  \tfrac{\mu_\delta}{d}$, proving the second bound in \eqref{eq-shad3-0}.
\end{proof}

We now bound the third term $D_2(j)$ in \eqref{eq-shad2-0a} similarly to Bena\"{i}m \cite[proofs of Props.\ 4.2, 8.3]{Ben99}, with additional effort to account for the random stepsizes $\{ \tl \alpha_i\}$ and more general noise terms $\{M_{i} \}$ in our setting.

\begin{lemma} \label{lem-shad3}
Almost surely, $\limsup_{j \to \infty} \tfrac{1}{j} \ln \big( D_2(j) \big) \leq \frac{\ell(\{\alpha_n\})}{2d}$.
\end{lemma}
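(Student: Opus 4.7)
The plan is to combine Doob's $L^2$ maximal inequality with the Borel--Cantelli lemma, in the spirit of Bena\"{i}m's shadowing arguments for synchronous SA \cite[Props.~4.2, 8.3]{Ben99}, but adapted to our random-stepsize, general-martingale-difference setting. First, to convert the sample-path-dependent variance bound on $\{M_i\}$ into a deterministic one, I would truncate: for each positive integer $N$, let $\tau_N \= \min\{ n \geq 0 : \|x_n\| > N \text{ or } K_n > N\}$ and $M^{(N)}_{i+1} \= \ind\{i < \tau_N\} M_{i+1}$. Then $\{M^{(N)}_i\}$ is a martingale-difference sequence w.r.t.\ $\{\F_i\}$ with the uniform bound $\E[\|M^{(N)}_{i+1}\|^2 \mid \F_i] \leq N(1+N^2)$. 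Since $\{x_n\}$ is a.s.\ bounded (Thm.~\ref{thm-1}) and $\sup_n K_n < \infty$ a.s.\ (Assum.~\ref{cond-ns}(i)), on a set of full measure one has $\tau_N = \infty$ for some path-dependent $N_0(\omega)$, so it suffices to prove the claim for each fixed $N$ with $M^{(N)}_{i+1}$ in place of $M_{i+1}$.

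Next, for each $j$, I would apply Doob's $L^2$ maximal inequality, combined with optional stopping at the a.s.-finite stopping time $\K(j+1)$ (note that $\tl \alpha_n$ is $\F_n$-measurable, so $\K(j+1)$ is a stopping time w.r.t.\ $\{\F_n\}$), to the martingale $k \mapsto \sum_{i = \K(j)}^{k \wedge \K(j+1)} \tl \alpha_i \tl \Lambda_i M^{(N)}_{i+1}$. Using orthogonality of martingale differences, $\|\tl \Lambda_i\| \leq 1$, and the truncated variance bound, this yields
$$\E\!\left[ D_2(j)^2 \,\big|\, \F_{\K(j)} \right] \,\leq\, 4 N(1+N^2)\, \E\!\left[\textstyle \sum_{i=\K(j)}^{\K(j+1)} \tl \alpha_i^2 \,\big|\, \F_{\K(j)} \right].$$
Since $\{\alpha_n\}$ is eventually nonincreasing (Assum.~\ref{cond-ss}(i)) and $\nu(i, i') \geq \nu(\K(j), i')$ for $i \geq \K(j)$, for all $j$ large one has $\tl \alpha_i \leq a_j$ for every $i \in \K_1[j]$, where $a_j \= \sum_{i' \in \I} \alpha_{\nu(\K(j), i')}$ is $\F_{\K(j)}$-measurable. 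Combined with $\sum_{i \in \K_1[j]} \tl \alpha_i \leq 1 + \sup_n \tl \alpha_n < \infty$, this gives $\sum_{i \in \K_1[j]} \tl \alpha_i^2 \leq C\, a_j$ for some constant $C$, so $\E[D_2(j)^2 \mid \F_{\K(j)}] \leq C_N\, a_j$. By Lem.~\ref{lem-shad2}(iii), $\limsup_j j^{-1} \ln a_j \leq \ell(\{\alpha_n\})/d$ a.s.

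Finally, I would apply conditional Chebyshev with the $\F_{\K(j)}$-measurable threshold $\lambda_j \= j \sqrt{a_j}$ (interpreting $D_2(j) = 0$ on $\{a_j = 0\}$), which gives the deterministic unconditional bound $P(D_2(j) \geq j \sqrt{a_j}) \leq C_N/j^2$, summable in $j$. Borel--Cantelli then yields $D_2(j) < j \sqrt{a_j}$ eventually a.s., and taking logarithms,
$$\limsup_{j \to \infty} \tfrac{\ln D_2(j)}{j} \,\leq\, \limsup_{j \to \infty} \tfrac{\ln j + \tfrac{1}{2} \ln a_j}{j} \,\leq\, \tfrac{\ell(\{\alpha_n\})}{2 d}.$$
The principal technical obstacle will be cleanly handling the random stopping time $\K(j+1)$ in Doob's inequality and keeping track of $\F_{\K(j)}$-measurability so that the random threshold $j\sqrt{a_j}$ in the conditional Chebyshev step integrates to a deterministic summable probability; once this bookkeeping is in place, the remainder of the argument is straightforward.
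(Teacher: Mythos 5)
Your proof is correct and follows essentially the same route as the paper: truncation via the stopping time $\tau_N$, a conditional $L^2$ maximal inequality (the paper combines Doob's with Burkholder's inequality, which for the $L^2$ case amounts to your Doob-plus-orthogonality argument), the bound $\sum_{i\in\K_1[j]}\tl\alpha_i^2 \leq C\,a_j$ with $a_j = \sum_{i'\in\I}\alpha_{\nu(\K(j),i')}$, and conditional Chebyshev followed by Borel--Cantelli, relying on the optional sampling theorem \cite[Prop.\ II-1-3]{Nev75} to pass from a fixed starting index to the stopping time $\K(j)$ --- exactly the bookkeeping issue you flag as the main obstacle. The only variation is in the last step: the paper uses a deterministic threshold $e^{-\beta j}$ and then the extended (conditional) Borel--Cantelli lemma \cite[Cor.\ 2.3]{HaH80} to handle the resulting a.s.-finite random sum $\sum_j c\, e^{2\beta j} a_j$, whereas your $\F_{\K(j)}$-measurable threshold $j\sqrt{a_j}$ integrates out to a deterministic $O(j^{-2})$ tail bound so that the ordinary Borel--Cantelli lemma suffices; both are sound and yield the same exponent.
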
 
\begin{proof}
For integers $N \geq 1$, define stopping times $\tau_N$ and auxiliary variables $M^{(N)}_{i}$:
$$\tau_N \= \min \{ i \geq 0: \| x_i \| > N \ \text{or} \ K_i > N \}, \qquad M^{(N)}_{i+1} \= \ind \{ i < \tau_N \} M_{i+1},  \ \ \ i \geq 0,$$
where $\{K_i\}$ is the a.s.\ bounded random sequence in Assum.~\ref{cond-ns}(i).
By Assum.~\ref{cond-ns}(i), for each $N$, $\{M^{(N)}_i \}_{i \geq 1}$ is a martingale-difference sequence satisfying $\sup_{i} \E[ \| M^{(N)}_{i+1} \|^2_2 \mid \F_i ] \leq c_N$ for some constant $c_N$ depending on $N$. Since $\{x_i\}$ is bounded a.s.\ by Thm.~\ref{thm-1}, almost surely, $\{M_i\}$ coincides with $\{M^{(N)}_{i}\}$ for some sample path-dependent value of $N$. Thus, to prove the lemma, it suffices to prove that a.s.\ for \emph{each $N$}, 
\begin{equation} \label{eq-prf-shad2-0}
  \limsup_{j \to \infty} \tfrac{1}{j} \ln \big( \tilde D_2(j) \big) \leq \tfrac{\ell(\{\alpha_n\})}{2d}, \quad \text{where} \  \tilde D_2(j) \= \sup_{k \in \K_1[j]} \Big\| \sum_{i=\K(j)}^k \tilde \alpha_i \tilde \Lambda_i M^{(N)}_{i+1} \Big\|.
\end{equation}  

To this end, fix $N$ and let $T = 1 + d \sup_n a_n$. For each $n$, define
$$\zeta_{n,k} \= \sum_{i = n}^{k-1} \tilde \alpha_i \cdot \ind\{\tilde t(i) \leq \tilde t(n) + T\} \cdot \tilde \Lambda_i M ^{(N)}_{i+1} \ \ \text{for} \  k \geq n, \ \ \text{with} \ \zeta_{n,n} \= 0.$$
Then $\{\zeta_{n,k}\}_{k \geq n}$ is a square-integrable martingale w.r.t.\ $\{\F_k\}_{k \geq n}$, and
\begin{equation} \label{eq-prf-shad2-1}
\lim_{\bar k \to \infty} \E \Big[  \sup_{k \leq \bar k} \| \zeta_{n,k}\|^2_2 \, \Big| \, \F_n \Big] =  \E \left[ \sup_{k: \, \tilde t(n) \leq \tilde t(k) \leq \tilde t(n) + T} \Big\| \sum_{i =n}^k \tilde \alpha_i \tilde \Lambda_i M^{(N)}_{i+1} \Big\|^2_2 \,\, \Bigg| \, \F_n \right]  
\end{equation}
by the monotone convergence theorem. For $\bar k > n$, applying Doob's and Burkholder's inequalities \cite[Thms.\ 2.2, 2.10]{HaH80}, we obtain that for some constant $c_0$ (independent of the specific problem data),
\begin{align}
  \E \Big[  \sup_{k \leq \bar k} \| \zeta_{n,k}\|^2_2 \mid \F_n \Big] & \leq c_0 \, \E \Big[ \sum_{i = n}^{\bar k - 1}  \left\| \tilde \alpha_i \, \ind\{\tilde t(i) \leq \tilde t(n) + T\} \, \tilde \Lambda_i M ^{(N)}_{i+1} \right\|^2_2 \, \Big| \, \F_n \Big] \notag \\
  & \leq c_0 \, \E \Big[ \sum_{i = n}^{\bar k - 1} {\tilde \alpha}^2_i \, \ind\{\tilde t(i) \leq \tilde t(n) + T\} \, \| \tilde \Lambda_i\|^2_2 \cdot \E \big[ \| M ^{(N)}_{i+1} \|^2_2 \, \big| \, \F_i \big] \, \Big| \, \F_n \Big] \notag \\
  & \leq c_0 c_1 c_N \, \E \Big[ \sum_{i = n}^{\bar k - 1} {\tilde \alpha}^2_i \, \ind\{\tilde t(i) \leq \tilde t(n) + T\} \Big| \, \F_n \Big]  \label{eq-prf-shad2-2}\\
  & \leq c_2 \, \E \Big[  \Big(\sum_{i=n}^{\K(\tilde t(n) + T)} \tilde \alpha_i \Big)  \cdot \sup_{n \leq i \leq \K(\tilde t(n) + T)} \tilde \alpha_i \,\Big| \, \F_n \Big] \leq c_3 \, \sum_{i' \in \I} \alpha_{\nu(n, i')}. \label{eq-prf-shad2-3}
\end{align} 
Here, $c_1$ is a deterministic bound on $\sup_{i \geq 0} \| \tilde \Lambda_i\|^2_2$. To derive \eqref{eq-prf-shad2-2}, we also used the bound $\sup_{i \geq 0} \E \big[ \| M ^{(N)}_{i+1} \|^2_2 \mid \F_i \big]  \leq c_N$. In \eqref{eq-prf-shad2-3}, $c_2$ and $c_3$ are deterministic constants independent of $n$ and $\bar k$, specifically $c_2 = c_0 c_1 c_N$ and $c_3 = c_2 (T + d \sup_{i} \alpha_i)$. In the last inequality in \eqref{eq-prf-shad2-3}, we used the fact that $\sum_{i=n}^{\K(\tilde t(n) + T)} \tilde \alpha_i \leq T + d \sup_{i} \alpha_i$ and the stepsizes are nonincreasing. 

Combining \eqref{eq-prf-shad2-1} and \eqref{eq-prf-shad2-3}, we obtain
$$ \E \left[ \sup_{k: \, \tilde t(n) \leq \tilde t(k) \leq \tilde t(n) + T} \Big\| \sum_{i =n}^k \tilde \alpha_i \tilde \Lambda_i M^{(N)}_{i+1} \Big\|^2_2 \,\, \Bigg| \, \F_n \right] \leq c_3 \, \sum_{i' \in \I} \alpha_{\nu(n, i')}, \qquad \forall \, n \geq 0.$$
Then by \cite[Prop.\ II-1-3]{Nev75}, the above inequality holds for any stopping time $\tau$ (w.r.t.\ $\{\F_n\}$), with $\tau$ in place of $n$ and with $\F_\tau$ in place of $\F_n$. Consequently, for the stopping times $\K(j), j \geq 1$, we obtain the bounds
\begin{equation} \label{eq-prf-shad2-4}
\E \left[ \sup_{k \in \K_1[j]} \Big\| \sum_{i =\K(j)}^k \tilde \alpha_i \tilde \Lambda_i M^{(N)}_{i+1} \Big\|^2_2 \, \Big| \, \F_{\K(j)} \right] \leq c_3 \, \sum_{i' \in \I} \alpha_{\nu(\K(j), i')}, \quad \forall \, j \geq 1.
\end{equation} 
where we have also used the fact that $\K_1[j] \subset \{ k : \, \tilde t(\K(j)) \leq \tilde t(k) \leq \tilde t(\K(j)) + T\}$.

Finally, to prove \eqref{eq-prf-shad2-0}, consider any $\beta > 0$ with $\beta + \tfrac{\ell(\{ \alpha_n\})}{2 d} < 0$. Since $\|x\|_\infty \leq \|x\|_2$, applying \eqref{eq-prf-shad2-4}, Markov's inequality, and Lem.~\ref{lem-shad2}(iii), we obtain
\begin{equation} \label{eq-prf-shad2-5}
 \sum_{j=1}^\infty \Pr \left( \tilde D_2(j) \geq e^{- \beta j}  \,\Big|\, \F_{\K(j)} \right) \leq c_3 \, \sum_{j=1}^\infty e^{ 2 \beta j} \cdot \Big(\sum_{i' \in \I} \alpha_{\nu(\K(j), i')} \Big) < \infty.
\end{equation}
By the extended Borel--Cantelli lemma \cite[Cor.\ 2.3]{HaH80}, this implies that, almost surely, $\tilde D_2(j) < e^{- \beta j}$ for all sufficiently large $j$.
\footnote{Details: To apply the extended Borel--Cantelli lemma \cite[Cor.\ 2.3]{HaH80}, take an integer $m \geq 1 + d \sup_n \alpha_n$ and consider separately each subsequence $\{\tilde D_2 (i+ k m)\}_{k \geq 0}$ for $i = 1, \ldots, m$.  Note that $\tilde D_2(j)$ is $\F_{\K(j+1)+1}$-measurable by definition, while $\K(j+1)+1 \leq \K(j+m)$. Therefore, $\tilde D_2 (j)$ is $\F_{\K(j+m)}$-measurable. Consequently, for each $i \leq m$, \cite[Cor.\ 2.3]{HaH80} applies to the subsequence $\{\tilde D_2 (i+ k m)\}_{k \geq 0}$, and \eqref{eq-prf-shad2-5} implies $\tilde D_2(i + km) < e^{- \beta (i+km)}$ for all sufficiently large $k$.}
Letting $\beta \uparrow - \tfrac{\ell(\{ \alpha_n\})}{2 d}$ then proves \eqref{eq-prf-shad2-0}.
\end{proof}

By \eqref{eq-shad2-0a}-\eqref{eq-shad2-0b} and Lems.\ \ref{lem-shad3-0} and \ref{lem-shad3}, inequality \eqref{eq-shad-cond1} holds a.s., namely,
$$  \limsup_{j \to \infty} \tfrac{1}{j} \ln \left(\| \bar x(j+1) - \tilde{x}^j(j+1)\| \right)  < - L_h/d \ \ a.s.,$$
if $\max\big\{ \tfrac{\ell(\{\alpha_n\})}{2}, \, \mu_\delta \big\} < - L_h$. This condition becomes $\max\big\{ \tfrac{-A}{2}, \, \mu_\delta \big\} < - L_h$ for class-1 stepsizes and $\mu_\delta < - L_h$ for class-2 stepsizes. Since these are the conditions required in Thm.~\ref{thm-3}, it follows that \eqref{eq-shad-cond1} holds.

\subsection{Proof for \eqref{eq-shad-cond2}} \label{sec-A.1.2}
We now proceed to establish \eqref{eq-shad-cond2}. Since $\tilde{x}^j(\cdot)$ and $x^j(\cdot)$ are solutions of the ODEs \eqref{eq-ode1} and \eqref{eq-ode2}, respectively, with $\tilde{x}^j(j)=x^j(j) = \bar x(j)$, we have that for $t \in [j, j+1]$, 
\begin{align*}
  \tilde{x}^j(t) - x^j(t)  & = \int_j^t \tilde \lambda(\tau) h(\tilde{x}^j(\tau)) d \tau - \int_j^t \tfrac{1}{d} h(x^j(\tau)) d \tau \\
   & = \int_j^t \big(\tilde \lambda(\tau) - \tfrac{1}{d} I\big) h(x^j(\tau)) d \tau + \int_j^t \tilde \lambda(\tau) \big(h(\tilde{x}^j(\tau)) - h(x^j(\tau)) \big) d\tau.
\end{align*}     
As the trajectory $\tilde \lambda(\cdot)$ is bounded by definition \eqref{eq-tlambda} and $h$ is Lipschitz continuous, applying Gronwall's inequality yields that for some constant $c$ independent of $j$, 
\begin{equation} \label{eq-shad3-d}
 \| \tilde{x}^j(j+1) - x^j(j+1) \| \leq c \, \sup_{s \in [0,1]} \left\| \int_0^s \big(\tilde \lambda(j+\tau) - \tfrac{1}{d} I\big) h(x^j(j + \tau))) d \tau \right\|.
\end{equation} 

For each $i \in \I$, let $F^j_i(\cdot)$ denote the distribution function of the measure with density $\tilde \lambda_{ii}(j + \cdot)$ on $[0,1]$; that is, $F^j_i(s) = \int_{0}^s \tilde \lambda_{ii}(j + \tau) d\tau$, $s \in [0,1]$.

\begin{lemma} \label{lem-shad3-1}
Almost surely, for some (sample path-dependent) constant $c$, 
\begin{equation} \label{eq-shad3-1}
  \| \tilde{x}^j(j+1) - x^j(j+1) \|  \leq c \sup_{i \in \I} \sup_{s \in [0,1]} \big| F^j_i(s) - \tfrac{s}{d} \big|, \quad \  \forall \, j \geq 0.
\end{equation}  
\end{lemma}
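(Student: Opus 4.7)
The plan is to bound the integral in \eqref{eq-shad3-d} componentwise via integration by parts. Since $\tilde\lambda(\cdot)$ is diagonal and we use $\|\cdot\| = \|\cdot\|_\infty$, it suffices, for each $i \in \I$ and $s \in [0,1]$, to bound the scalar integral
$$I^j_i(s) \= \int_0^s \big(\tilde\lambda_{ii}(j+\tau) - \tfrac{1}{d}\big) h_i(x^j(j+\tau)) \, d\tau$$
in terms of $\sup_{\tau \in [0,1]} |F^j_i(\tau) - \tau/d|$. Set $g^j_i(\tau) \= h_i(x^j(j+\tau))$ and $G^j_i(\tau) \= F^j_i(\tau) - \tau/d$. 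By construction, $G^j_i$ is absolutely continuous on $[0,1]$ with $G^j_i(0) = 0$ and $(G^j_i)'(\tau) = \tilde\lambda_{ii}(j+\tau) - \tfrac{1}{d}$ almost everywhere. Meanwhile, $g^j_i$ is Lipschitz on $[0,1]$: $h_i$ is Lipschitz by Assum.~\ref{cond-h}(i), and $x^j(\cdot)$ is Lipschitz-in-time as a solution of the ODE $\dot{x} = h(x)/d$. Integration by parts for absolutely continuous functions then gives
$$I^j_i(s) = g^j_i(s) G^j_i(s) - \int_0^s (g^j_i)'(\tau) G^j_i(\tau) \, d\tau,$$
whence
$$|I^j_i(s)| \leq \sup_{\tau \in [0,1]} |G^j_i(\tau)| \cdot \Big( |g^j_i(s)| + \|(g^j_i)'\|_{L^1([0,1])} \Big).$$

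The remaining step is to verify that $|g^j_i(s)| + \|(g^j_i)'\|_{L^1([0,1])}$ admits a sample path-dependent bound uniform in $j \geq 0$, $i \in \I$, and $s \in [0,1]$. By Thm.~\ref{thm-1}, $\{\bar x(j) = x^j(j)\}_{j \geq 0}$ is a.s.\ bounded; since $h$ is Lipschitz, Gronwall's inequality applied to $\dot{x}^j = h(x^j)/d$ on the unit-length intervals $[j, j+1]$ shows that $\{x^j(t) : t \in [j, j+1], \ j \geq 0\}$ is a.s.\ contained in some bounded set. Consequently, $\sup_{j,t} \|h(x^j(t))\|$ and $\sup_{j,t}\|\dot{x}^j(t)\|$ are bounded by a sample path-dependent constant; because $g^j_i$ is Lipschitz with modulus at most $L_h \sup_t \|\dot{x}^j(t)\|$, its (almost-everywhere) derivative is uniformly bounded, which together with the boundedness of $|g^j_i(s)|$ yields the claimed uniform bound. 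Combining this with the preceding display and \eqref{eq-shad3-d}, and taking the supremum over $i \in \I$ and $s \in [0,1]$, produces \eqref{eq-shad3-1}.

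The main technical point is really just the justification of integration by parts at the level of absolutely continuous functions---$h$ is assumed only Lipschitz, so $g^j_i$ is not classically differentiable---but this is standard once we observe that both $g^j_i$ and $G^j_i$ are absolutely continuous. Every other ingredient is a routine consequence of Assum.~\ref{cond-h}(i), the definition of $\tilde \lambda(\cdot)$, and the a.s.\ boundedness of the iterates supplied by Thm.~\ref{thm-1}.
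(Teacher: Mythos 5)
Your proposal is correct and follows essentially the same route as the paper: starting from \eqref{eq-shad3-d}, you work componentwise using the fact that $\tilde\lambda$ is diagonal and $\|\cdot\|=\|\cdot\|_\infty$, apply integration by parts for absolutely continuous functions to the scalar integral, and bound the result by $\sup_\tau |F^j_i(\tau) - \tau/d|$ times a sample path-dependent constant obtained from Thm.~\ref{thm-1} together with Gronwall applied to $\dot{x}^j = h(x^j)/d$ on unit-length intervals. This is exactly the argument in the paper's proof.
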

\begin{proof}
First, observe that on $[0,1]$, the family of functions $h(x^j(j + \cdot))$ for $j \geq 0$ is uniformly bounded and uniformly Lipschitz continuous with a common Lipschitz constant that depends on the sample path. (This follows from the Lipschitz continuity of $h$, the definition of the ODE solutions $x^j(\cdot)$, and the boundedness of the iterates $\{x_n\}$.) Now, for a fixed $j \geq 0$ and for each $i \in \I$, consider the $i$th component of the integral term in \eqref{eq-shad3-d}: $\psi(s) \= \int_0^s \big(\tilde \lambda_{ii}(j+\tau) - \tfrac{1}{d} \big) g_i(\tau) d \tau$, where $g_i(\tau) \= h_i(x^j(j + \tau))$.
By the integration by parts formula for the absolutely continuous functions $g_i (\tau)$ and $F^j_i(\tau) - \tfrac{\tau}{d}$, we have
\begin{equation} \label{eq-shad3-2}
  \psi(s) = g_i(s) (F^j_i(s) - \tfrac{s}{d}) - \int_{0}^s \!\big(F^j_i(\tau) - \tfrac{\tau}{d} \big) g'_i(\tau) d\tau, \qquad s \in [0,1],
\end{equation}  
where the derivative $g'_i(\tau)$ is defined a.e.\ and bounded by the Lipschitz constant of $g_i$. In view of the uniform boundedness and uniform Lipschitz continuity of the family of functions $\{h(x^j(j + \cdot))\}_{j \geq 0}$ on $[0,1]$, the expression~\eqref{eq-shad3-2} implies that for some (path-dependent) constant $c$ independent of $j$, $|\psi(s)| \leq c \sup_{\tau \in [0, s]} |F^j_i(\tau) - \tfrac{\tau}{d}|$. This, together with inequality \eqref{eq-shad3-d}, leads to the desired bound \eqref{eq-shad3-1} for all $j$.
\end{proof}

Lemma~\ref{lem-cvg-2} implies that for each $i \in \I$, as $j \to \infty$, the measure on $[0,1]$ with density $\tilde \lambda_{ii}(j + \cdot)$ converges setwise to $\tfrac{1}{d}$ times the Lebesgue measure. Therefore, the r.h.s.\ of \eqref{eq-shad3-1} converges to $0$ as $j \to \infty$. The next lemma addresses the speed of this convergence---sufficiently rapid convergence is needed for inequality \eqref{eq-shad-cond2} to hold.

\begin{lemma} \label{lem-shad3-2} Almost surely,
\begin{equation} \label{eq-prf-shad3-1}
\limsup_{j \to \infty} \tfrac{1}{j} \ln \!\Big(\sup_{s \in [0,1]} \big| F^j_i(s) - \tfrac{s}{d} \big| \Big) \leq \tfrac{\ell}{d}, \quad \ \forall \, i \in \I,
\end{equation}
where $\ell = - (\gamma\wedge 1) A$ for class-1 stepsizes (with $\gamma$ being the constant in Assum.~\ref{cond-mus}(i)), and $\ell = - A$ for class-2 stepsizes (where $A$ is the stepsize scaling factor in both cases).
\end{lemma}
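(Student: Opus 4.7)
\textbf{Proof proposal for Lemma~\ref{lem-shad3-2}.}
The plan is to reduce the supremum on the left-hand side of \eqref{eq-prf-shad3-1} to an explicit asymptotic estimate of partial sums of $\{\alpha_m\}$, and then import the known growth rate of $\mathcal{K}(j)$ together with the asynchrony assumptions. The starting point is the change-of-index identity: on $[\tilde t(n),\tilde t(n+1))$ the integrand $\tilde\lambda_{ii}(\cdot)$ has constant value $\alpha_{\nu(n,i)}\ind\{i\in Y_n\}/\tilde\alpha_n$, and the integral of $\tilde\lambda_{ii}$ over that interval equals exactly $\alpha_{\nu(n,i)}\ind\{i\in Y_n\}$. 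Writing $N_j=\mathcal{K}(j)$ and $N_{j,s}=\mathcal{K}(j+s)$, one therefore has
\[
F^j_i(s) \;=\; \sum_{k=N_j}^{N_{j,s}-1} \alpha_{\nu(k,i)}\ind\{i\in Y_k\} \;+\; r^j_i(s) \;=\; \sum_{m=\nu(N_j,i)}^{\nu(N_{j,s},i)-1} \alpha_m \;+\; r^j_i(s),
\]
where the boundary remainder satisfies $|r^j_i(s)| \leq 2\alpha_{\nu(N_j,i)}$. Thus $|F^j_i(s)-s/d|$ is controlled by the accuracy with which the displayed sum tracks $s/d$, plus a negligible boundary term. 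The first step is this identity and the bound on $r^j_i(s)$.

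The second step is to record the elementary asymptotic $\sum_{m=n_1}^{n_2-1}\alpha_m = \Lambda(n_2)-\Lambda(n_1)+O(\alpha_{n_1})$, with $\Lambda(n)=\frac{1}{A}\ln n$ in class 1 and $\Lambda(n)=\frac{1}{A}\ln\ln n$ in class 2 (obtained by comparing the Riemann sum to the corresponding integral, using monotonicity of $\alpha_m$). The third step is to pin down $N_j$: using $j=\tilde t(N_j)+O(\tilde\alpha_{N_j})=\sum_{i'\in\I}\big(\Lambda(\nu(N_j,i'))-\Lambda(1)\big)+O(\alpha_{N_j})$ together with Assum.~\ref{cond-us}(i) (so $\Delta/2\leq \nu(N_j,i')/N_j\leq 1$ eventually) and, for class~1, Assum.~\ref{cond-mus}(i) (so $\nu(N_j,i')=p_{i'}N_j+O(N_j^{1-\gamma})$), one derives
\[
\ln N_j=\tfrac{Aj}{d}+C_1+O(N_j^{-\gamma})\quad\text{(class 1)},
\qquad
\ln\ln N_j=\tfrac{Aj}{d}+C_2+O(1/\ln N_j)\quad\text{(class 2)},
\]
for constants $C_1,C_2$ depending on the sample path. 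In particular $N_j$ grows exponentially (class 1) or doubly exponentially (class 2) in $j$, uniformly in $s\in[0,1]$ once $j$ is large.

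In the fourth step these estimates are assembled. For class 1, $\ln(\nu(N_{j,s},i)/\nu(N_j,i))=\ln(N_{j,s}/N_j)+O(N_j^{-\gamma})=As/d+O(N_j^{-\gamma})$, the sum approximation contributes an additional $O(\alpha_{\nu(N_j,i)})=O(1/N_j)$, and the boundary term $r^j_i(s)$ is of the same order; combining these gives $\sup_{s\in[0,1]}|F^j_i(s)-s/d|=O(N_j^{-(\gamma\wedge 1)})=O(e^{-(\gamma\wedge 1)Aj/d})$, hence the stated bound $\ell/d=-(\gamma\wedge 1)A/d$. For class 2, the same scheme gives $\ln\ln\nu(N_{j,s},i)-\ln\ln\nu(N_j,i)=As/d+O(1/\ln N_j)=As/d+O(e^{-Aj/d})$, with the other errors vastly smaller, yielding $\sup_s|F^j_i(s)-s/d|=O(e^{-Aj/d})$ and $\ell/d=-A/d$. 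Taking logarithms and dividing by $j$ delivers \eqref{eq-prf-shad3-1} in each case.

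The main obstacle is bookkeeping of error terms: one must verify that in the class-1 case the $O(N_j^{-\gamma})$ coming from Assum.~\ref{cond-mus}(i) and the $O(1/N_j)$ coming from the sum-to-log approximation are the binding errors, i.e.~that the boundary terms and the $O(1/\ln N_j)$ slippage in the $\mathcal{K}(j)$-asymptotics do not enter with a worse exponent; and symmetrically for class~2, that only Assum.~\ref{cond-us}(i) is needed because the class-2 stepsizes decay so fast that $\alpha_{\nu(N_j,i)}$ is negligible compared to $1/\ln N_j$. No single step is deep, but a careful accounting is required to land exactly on $-(\gamma\wedge 1)A/d$ and $-A/d$ without extraneous logarithmic factors.
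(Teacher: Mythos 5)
Your proposal is correct and lands on the same bounds, but it is organized differently from the paper's proof. The paper hinges on the cancellation identity $\sum_{i \in \I} F^j_i(s) = s$, which lets it write
\[
F^j_{\bar i}(s) - \tfrac{s}{d} \;=\; \big(F^j_{\bar i}(s) - G^j(s)\big) - \tfrac{1}{d}\sum_{i\in\I}\big(F^j_{i}(s) - G^j(s)\big),
\qquad G^j(s) = \int_{\K(j)}^{\K(j+s)} g_\alpha(y)\,dy,
\]
so that only the per-component mismatch $|F^j_i(s) - G^j(s)|$, controlled through $e_i(j,s) = \int_{\nu(\K(j),i)}^{\nu(\K(j+s),i)} g_\alpha - \int_{\K(j)}^{\K(j+s)} g_\alpha$, needs bounding; the explicit asymptotics of $\K(j)$ are never needed, only the limsup rate from Lem.~\ref{lem-shad2}(ii,iii). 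You instead compute $F^j_i(s)$ directly as a logarithmic (or double-logarithmic) increment and compare it to $s/d$ by first pinning down $\ln N_j = Aj/d + C_1 + O(N_j^{-(\gamma\wedge 1)})$ (class 1) and $\ln\ln N_j = Aj/d + C_2 + O(1/\ln N_j)$ (class 2). Note, though, that your derivation of these $\K(j)$-asymptotics from $j \approx \tilde t(N_j) = \sum_{i'}\sum_{m < \nu(N_j,i')}\alpha_m$ is precisely the identity $\sum_i F^j_i = s$ in cumulative form, so the two proofs rest on the same underlying facts and require the same asynchrony inputs (Assum.~\ref{cond-mus} for class 1, only Assum.~\ref{cond-us}(i) for class 2). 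Your route is more computational---it has to carry the explicit $\K(j)$-asymptotics through the estimate---while the paper's reference-term decomposition lets $G^j(s)$ absorb that bookkeeping; both are valid and give the same exponents. Two small inaccuracies, neither affecting the result: the boundary remainder is bounded by $3\alpha_{\nu(N_j,i)}$ rather than $2\alpha_{\nu(N_j,i)}$ (there are two partial end-intervals plus the endpoint re-indexing term), and your closing remark about ``$O(1/\ln N_j)$ slippage in the $\K(j)$-asymptotics'' being a worry in the class-1 case is misplaced---for class 1 the slippage is $O(N_j^{-(\gamma\wedge 1)})$, and only class 2 exhibits $O(1/\ln N_j)$ slippage.
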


\begin{proof}
By definition the distribution functions $F^j_i$ satisfy that for $s \in [0,1]$,
\begin{align}
    F^j_i(s)  = \int_0^s \tilde \lambda_{ii} (j + \tau) d\tau, \qquad \sum_{i \in \I} F^j_i(s)  = \int_0^s \sum_{i \in \I} \tilde \lambda_{ii} (j + \tau) d\tau = s \label{eq-prf-shad3-2}
\end{align}
(since $\sum_{i \in \I} \tilde \lambda_{ii}(\cdot) \equiv 1$ by the definition of $\tilde \lambda(\cdot)$; see \eqref{eq-tlambda}).
Hence, for each $\bar i \in \I$, 
\begin{equation} \label{eq-prf-shad3-2b}
   F^j_{\bar i}(s) - \tfrac{s}{d} = F^j_{\bar i}(s) - \tfrac{1}{d} \sum_{i \in \I} F^j_{i}(s)  =  F^j_{\bar i}(s) - G^j(s) - \tfrac{1}{d} \sum_{i \in \I} \big( F^j_{i}(s) - G^j(s) \big),
\end{equation}   
where $G^j(s) \= \int_{\K(j)}^{\K(j+s)} \!g_\alpha(y) dy$, with $g_\alpha(y) = \tfrac{1}{Ay}$ for class-1 $\{\alpha_n\}$ and  $g_\alpha(y) = \tfrac{1}{A y \ln y}$ for class-2 $\{\alpha_n\}$. From \eqref{eq-prf-shad3-2b} we obtain
\begin{equation} \label{eq-prf-shad3-3}
 \big| F^j_{\bar i}(s) - \tfrac{s}{d} \big| \leq 2 \sup_{i \in \I} \big| F^j_{i}(s) - G^j(s) \big|, \quad s \in [0,1].
\end{equation}

For large $j$, we have $F^j_i(s) \approx \sum_{k=\nu(\K(j), i)}^{\nu(\K(j+s), i)} \alpha_k \approx \int_{\nu(\K(j), i)}^{\nu(\K(j+s),i)} \!g_\alpha(y) dy$. Specifically, the difference in the second approximation is bounded by $\alpha_{\nu(\K(j), i)}$. The difference in the first approximation, based on the first relation in \eqref{eq-prf-shad3-2} and the definitions of the ODE-time and $\tilde \lambda(\cdot)$ (see \eqref{eq-tlambda}), can be bounded by
$$ \textstyle{ \Big| F^j_i(s) - \sum_{k=\nu(\K(j), i)}^{\nu(\K(j+s), i)} \alpha_k \Big| \leq  \alpha_{\nu(\K(j), i)} + \alpha_{\nu(\K(j+s), i)}}.$$
Since $\{\alpha_n\}$ is nonincreasing, $\alpha_{\nu(\K(j+s), i)} \leq \alpha_{\nu(\K(j), i)}$ and moreover, by Assums.~\ref{cond-ss}(ii) and~\ref{cond-us}(i), for sufficiently large $j$, $\alpha_{\nu(\K(j), i)} \leq c \,\alpha_{\K(j)}$ for some constant $c$ independent of $j$. It then follows that for sufficiently large $j$, for all $i \in \I$ and $s \in [0,1]$, 
\begin{equation} \label{eq-prf-shad3-4}
 \big| F^j_{i}(s) - G^j(s) \big| \leq c \, \alpha_{\K(j)} + \Big| \, \underset{e_i(j, s)}{\underbrace{\int_{\nu(\K(j), i)}^{\nu(\K(j+s),i)} \!g_\alpha(y) dy - \int_{\K(j)}^{\K(j+s)} g_\alpha(y) dy}} \, \Big|,
\end{equation}    
where $c$ is a constant independent of $j$. 

To establish the desired inequality \eqref{eq-prf-shad3-1}, we now bound the r.h.s.\ of \eqref{eq-prf-shad3-4} separately for each class of stepsizes considered.

\noindent {\it Case 1: $g_\alpha(y) = \tfrac{1}{A y}$.} We have $e_i(j,s) = \tfrac{1}{A} \ln \Big(\tfrac{\nu(\K(j+s),i)}{\K(j+s)} \Big) - \tfrac{1}{A} \ln \Big(\tfrac{\nu(\K(j),i)}{\K(j)} \Big)$. Under Assum.~\ref{cond-mus}, a direct calculation shows that for sufficiently large $j$, $|e_i(j,s) | \leq c\, \K(j)^{-\gamma}$ for some (path-dependent) constant $c$ independent of $j$ and $s$. We also have $\alpha_{\K(j)} = \tfrac{1}{A \K(j)}$ and $\limsup_{j \to \infty} \tfrac{1}{j} \ln(\alpha_{\K(j)}) \leq \tfrac{\ell(\{\alpha_n\})}{d} = \tfrac{-A}{d}$ by Lem.~\ref{lem-shad2}(ii). Combining these relations with \eqref{eq-prf-shad3-3} and \eqref{eq-prf-shad3-4} yields that
$$ \limsup_{j \to \infty} \tfrac{1}{j} \ln \!\Big(\sup_{s \in [0,1]} \big| F^j_i(s) - \tfrac{s}{d} \big| \Big) \leq   \tfrac{- (\gamma\wedge 1) A}{d}, \qquad \forall \, i \in \I,$$ 
proving inequality \eqref{eq-prf-shad3-1} with $\ell = - (\gamma\wedge 1) A$.

\noindent {\it Case 2: $g_\alpha(y) = \tfrac{1}{A y \ln y}$.} Then
$e_i(j,s) = \tfrac{1}{A} \ln \Big(\tfrac{\ln \big(\nu(\K(j+s),i)\big)}{\ln(\K(j+s))} \Big) - \tfrac{1}{A} \ln \Big(\tfrac{ \ln \big( \nu(\K(j),i)\big)}{\ln (\K(j))} \Big)$. Expressing the two numerators as $\ln \big(\tfrac{\nu(\K(j+s),i)}{\K(j+s)}\big) +\ln(\K(j+s))$ and $\ln \big(\tfrac{\nu(\K(j),i)}{\K(j)}\big) +\ln(\K(j))$, respectively, and using Assum.~\ref{cond-us}(i), a direct calculation shows that for sufficiently large $j$, we have $|e_i(j,s) | \leq \tfrac{c}{\ln(\K(j))}$ for some (path-dependent) constant $c$ independent of $j$ and $s$. We also have $\lim_{j \to \infty} \tfrac{d \ln \ln (\K(j))}{A \, j} = 1$ by Lem.~\ref{lem-shad2}(i) and the definition of stepsizes in this case, while $\limsup_{j \to \infty} \tfrac{1}{j} \ln(\alpha_{\K(j)}) \leq \tfrac{\ell(\{\alpha_n\})}{d} = -\infty$ by Lem.~\ref{lem-shad2}(ii).
Combining these relations with \eqref{eq-prf-shad3-3} and \eqref{eq-prf-shad3-4} yields that
$$ \limsup_{j \to \infty} \tfrac{1}{j} \ln \!\Big(\sup_{s \in [0,1]} \big| F^j_i(s) - \tfrac{s}{d} \big| \Big) \leq \lim_{j \to \infty} \tfrac{- \ln \ln \big( \K(j)\big)}{\tfrac{d}{A} \ln \ln \big( \K(j)\big)} =  \tfrac{- A}{d}, \qquad \forall \, i \in \I,$$
establishing the desired inequality \eqref{eq-prf-shad3-1} with $\ell = - A$.
\end{proof}

By Lems.~\ref{lem-shad3-1} and~\ref{lem-shad3-2}, we conclude that inequality \eqref{eq-shad-cond2} holds a.s., namely, 
$$ \limsup_{j \to \infty} \tfrac{1}{j} \ln \left(\| \tilde{x}^j(j+1) - x^j(j+1)\| \right) < - L_h/d \ \ a.s.,$$
if $(\gamma\wedge 1) A > L_h$ when class-1 stepsizes are used and $A > L_h$ when class-2 stepsizes are used. As these are the conditions required in Thm.~\ref{thm-3}, we obtain \eqref{eq-shad-cond2}, thereby completing the proof of Thm.~\ref{thm-3}.

\section{Discussion}  \label{sec-conc-rmks}

In this paper, we established the stability and convergence of a class of asynchronous SA algorithms under more general noise conditions than previously considered. Our stability analysis extends a method of Borkar and Meyn, resolving open questions about the stability of such algorithms in average-reward RL. To sharpen the convergence analysis, we developed new characterizations of the shadowing properties of asynchronous SA by building on a dynamical systems approach of Hirsch and Bena\"{i}m. These results provide a theoretical foundation for the analysis of an important class of RL algorithms based on relative value iteration (see \cite{WYS24,YWS25} for related developments).

Although our focus has been on partially asynchronous update schemes relevant to average-reward RL, certain techniques—particularly the construction of auxiliary scaled processes using stopping arguments—may prove useful for broader classes of asynchronous schemes, including those discussed in \cite{Bor98}, given suitable choices of the function $h$. Future work may explore these possibilities, as well as potential extensions to distributed computation frameworks with communication delays.

\appendix
\counterwithin{equation}{section}
\renewcommand{\theequation}{A.\arabic{equation}}
\counterwithin{mylemma}{section}
\renewcommand{\themylemma}{A.\arabic{mylemma}}
\counterwithin{myassumption}{section}
\renewcommand{\themyassumption}{A.\arabic{myassumption}}

\titleformat{\section}{\normalfont\Large\bfseries}{\appendixname:}{1em}{}
\renewcommand{\thesection}{}

\section{Alternative Stability Proof under a Stronger\\ Noise Condition} \label{app-alt-stab}

In this appendix, we consider a stronger condition from Borkar \cite{Bor98} on the martingale-difference noise sequence $\{M_{n}\}$, and give an alternative, simpler proof of the stability theorem for this case. Below, the norm $\| \cdot\|$ denotes $\| \cdot\|_2$, in line with \cite{Bor23}, from which we draw several results.

\begin{myassumption}[Alternative condition on $\{M_n\}$] \label{cond-alt-ns} \hfill \\
For all $n \geq 0$, $M_{n+1}$ is given by $M_{n+1} = F (x_{n}, \zeta_{n+1})$, where:
\begin{enumerate}
\item[\rm (i)] $\zeta_1, \zeta_2, \ldots$ are exogenous, i.i.d.\ random variables taking values in a measurable space $\Z$, with a common distribution $p$. 
\item[\rm (ii)] The function $F : \R^d \times \Z \to \R^d$ has these properties: It is uniformly Lipschitz continuous in its first argument; i.e., for some constant $L_F > 0$,
\begin{equation}   \notag 
\| F( x, z) - F(y, z) \| \leq L_F \| x - y \|, \quad  \forall \, x, y \in \R^d, \   z \in \Z.
\end{equation} 
It is measurable in its second argument and moreover,
$$  \int_\Z \| F(0, z) \|^2 \, p(dz) < \infty, \qquad \int_\Z  F(x, z) \, p(dz) = 0,  \ \ \ \forall \, x \in \R^d.$$ 
\end{enumerate}
\end{myassumption}

Assumption~\ref{cond-alt-ns} implies Assum.~\ref{cond-ns}(i). Indeed, using the properties of the function $F$, a direct calculation shows that for some constant $K_F > 0$,
\begin{equation} \label{eq-alt-prf0}
       \int _\Z \| F(x, z) \|^2 \, p(dz) \leq K_F \!\left( 1 + \| x\|^2 \right), \quad \forall \, x \in  \R^d.
\end{equation} 
Thus, with $\F_n \= \sigma(x_m, Y_m, \zeta_m, \epsilon_m; m \leq n)$, $\{M_{n+1}\}$ satisfies Assum.~\ref{cond-ns}(i) with
\begin{equation} \label{eq-alt-prf0b}
       \E [ \| M_{n+1} \|^2 \mid \F_n ] \leq K_F \!\left( 1 + \| x_n\|^2 \right), \quad n \geq 0.
\end{equation}

By leveraging the specific form of $\{M_{n+1}\}$, we simplify the proof of the stability theorem. In this case, unlike the previous analysis in Sec.~\ref{sec-stab}, we work with the linearly interpolated trajectory $\bar x(t)$ defined in \eqref{eq-cont-traj2}, where the iterate $x_n$ is placed at the `ODE-time' $\tl t(n) = \sum_{k=0}^{n-1} \tl \alpha_k$, with the random stepsizes $\tl \alpha_k = \sum_{i \in Y_k}  \alpha_{\nu(k, i)}$ representing the elapsed times between consecutive iterates. In the same manner as before, we divide the time axis into intervals of approximately length $T$ for a given $T > 0$, and we define the scaled trajectory $\hat x(t)$ accordingly. In particular, $T_n$ and $m(n)$ are recursively defined by (\ref{eq-def-tm}), but with $\tl t(m)$ replacing $t(m)$: 
$$ m(0)= T_0 = 0 \ \ \ \text{and} \ \ \   m(n+1) \= \min \{ m : \tl t(m) \geq T_n + T \}, \ \  T_{n+1} \= \tl t\big(m(n+1)\big), \ \ n \geq 0.$$
Observe that \emph{$T_n$ and $m(n)$ are now random variables.} With $r(n) \= \| x_{m(n)} \| \vee 1$, we then define the scaled trajectory $\hat x(t)$ and a `copy' of it, $\hat x^n(t)$, on each closed internal $[T_n, T_{n+1}]$ by (\ref{eq-hx0}) and (\ref{eq-hx0-cpy}). 

As discussed in Sec.~\ref{sec-stab-prf1}, a key step in the stability analysis is to establish $\sup_t \| \hat x(t)\| < \infty$ a.s. We will now proceed to prove this.

For $m(n) \leq k < m(n+1)$, we can express $\hat x^n(\tl t(k+1))$ as
\begin{equation}  \label{eq-alt-hx}
  \hat x^n( \tl t(k+1)) = \hat x^n(\tl t(k)) + \tl \alpha_k \tl \Lambda_k h_{r(n)}(\hat x^n( \tl t(k))) +  \tl \alpha_k \tl \Lambda_k \hat M_{k+1} + \tl \alpha_k \tl \Lambda_k \hat \epsilon_{k+1},
\end{equation}  
where $\tl \Lambda_k$ is the diagonal matrix defined below (\ref{eq-alg1a0}):
$$\tl \Lambda_k =  \text{diag} \big( \tl \q(k, 1), \tl \q(k, 2), \ldots, \tl \q(k, d) \big), \quad \text{with} \ \ \tl \q(k, i) = \alpha_{\nu(k, i)} \ind\{i \in Y_k\} / \tl \alpha_k \in [0,1],$$
$$\hat M_{k+1} \= M_{k+1} / r(n) = F(x_k, \zeta_{k+1})/r(n) \ \, \text{(by Assum.~\ref{cond-alt-ns}),  \ and}  \ \  \hat \epsilon_{k+1} \= \epsilon_{k+1}/r(n).$$ 
 
Let us introduce another noise sequence $\{\hat M^o_k\}$ related to $\{\hat M_{k}\}$. For $n \geq 0$ and $k \geq 0$, let
\begin{equation} \label{eq-alt-prf1}
    \hat M^o_{k+1}  \= F (0, \zeta_{k+1})/r(n) \ \ \text{if} \ m(n) \leq k < m(n+1).
\end{equation}  
Equivalently, by the definition of $m(n)$, for each $k \geq 0$, 
\begin{equation} \notag
 \hat M^o_{k+1} = F (0, \zeta_{k+1}) / r(\ell(k)), \ \  \ \text{where} \  \ \ell(k) \= \max \{ \ell \geq 0: T_\ell \leq \tl t(k) \}.
\end{equation} 
Observe that $r(\ell(k)) = \| x_{m(\ell(k))} \| \vee 1$ is $\F_k$-measurable. 
Therefore, by Assum.~\ref{cond-alt-ns}(ii) and (\ref{eq-alt-prf0}),
\begin{equation} \label{eq-alt-prf2}
  \E [ \hat M^o_{k+1} \mid \F_k] = 0, \qquad \E [ \| \hat M^o_{k+1} \|^2 \mid \F_k ] \leq K_F, \quad \forall \, k \geq 0.
\end{equation}
Moreover, by the Lipschitz continuity property of $F$, for $m(n) \leq k < m(n+1)$,
\begin{equation} \label{eq-alt-prf3}
   \| \hat M_{k+1} - \hat M^o_{k+1} \| = \frac{\| F(x_k, \zeta_{k+1}) - F(0, \zeta_{k+1}) \|}{r(n)}  \leq  \frac{L_F \| x_k\|}{r(n)} = L_F \| \hat x^n(\tl t(k)) \|.
\end{equation} 
 
\begin{mylemma} \label{lem-alt1}
The sequence $\xi_n^o \= \sum_{k=0}^{n-1} \tl \alpha_k \tl \Lambda_k \hat M^o_{k+1}$ (with $\xi_0^o = 0$) converges a.s.\ in $\R^d$.
\end{mylemma}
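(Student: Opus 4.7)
\medskip
\noindent\textbf{Proof proposal.} The plan is to show that $\{\zeta_n^o\}_{n \geq 0}$ is a square-integrable martingale with respect to $\{\F_n\}$, verify that the sum of conditional variances is a.s.\ finite, and then invoke the martingale convergence theorem \cite[Prop.\ VII-2-3(c)]{Nev75}---exactly the pattern already used in Lems.~\ref{lem2}(ii) and~\ref{lem-cvg-3} of the main text. The key simplification, compared with those earlier arguments, is that Assum.~\ref{cond-alt-ns} yields the \emph{deterministic} bound $\E[\|\hat M^o_{k+1}\|^2 \mid \F_k] \leq K_F$ given by \eqref{eq-alt-prf2}, so no stopping-time truncation is needed.

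First, I would verify the martingale structure. By construction, $\tl \alpha_k$ and the diagonal matrix $\tl \Lambda_k$ depend only on $\{Y_m, \nu(m,\cdot)\}_{m \leq k}$, hence are $\F_k$-measurable; similarly, $r(\ell(k))$ is $\F_k$-measurable, while $\hat M^o_{k+1} = F(0, \zeta_{k+1})/r(\ell(k))$ is $\F_{k+1}$-measurable. By \eqref{eq-alt-prf2}, $\E[\tl \alpha_k \tl \Lambda_k \hat M^o_{k+1} \mid \F_k] = \tl \alpha_k \tl \Lambda_k \E[\hat M^o_{k+1} \mid \F_k] = 0$. Since $r(\ell(k)) \geq 1$ gives $\|\hat M^o_{k+1}\| \leq \|F(0,\zeta_{k+1})\|$, and $F(0,\zeta_{k+1})$ has finite second moment by Assum.~\ref{cond-alt-ns}(ii), each increment (and hence each $\zeta_n^o$) is square-integrable. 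Thus $\{\zeta_n^o\}$ is a square-integrable martingale.

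Next, I would bound the conditional variances. Each diagonal entry $\tl\q(k,i)$ of $\tl\Lambda_k$ lies in $[0,1]$ since $\sum_i \tl\q(k,i) = 1$ with nonnegative summands, so the operator norm $\|\tl \Lambda_k\|$ is bounded by a deterministic constant $c_0$. Combined with \eqref{eq-alt-prf2}, this gives
\begin{equation}
  \E\bigl[\|\zeta_{k+1}^o - \zeta_k^o\|^2 \,\big|\, \F_k\bigr] \;=\; \tl\alpha_k^2 \,\|\tl\Lambda_k\|^2 \, \E\bigl[\|\hat M^o_{k+1}\|^2 \,\big|\, \F_k\bigr] \;\leq\; c_0^2 \, K_F \, \tl\alpha_k^2. \notag
\end{equation}
Summing over $k$ and using $\sum_k \tl\alpha_k^2 < \infty$ a.s., established in \eqref{eq-stepsize}, yields $\sum_k \E[\|\zeta_{k+1}^o - \zeta_k^o\|^2 \mid \F_k] < \infty$ a.s. The martingale convergence theorem \cite[Prop.\ VII-2-3(c)]{Nev75} then gives the a.s.\ convergence of $\zeta_n^o$ in $\R^d$.

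There is no real obstacle here; the only step requiring a modicum of care is keeping track of measurability of $r(\ell(k))$ (which is why the auxiliary sequence $\hat M^o_{k+1}$ was introduced in place of $\hat M_{k+1}$, since the latter depends on $r(n)$ via $\|x_k\|$ within an interval $[T_n, T_{n+1}]$ whose endpoints are themselves random). The real work in the alternative stability proof will come afterward, when using \eqref{eq-alt-prf3} to transfer this convergence of $\{\zeta_n^o\}$ back to control of the original martingale noise in \eqref{eq-alt-hx}; that step, rather than the present lemma, is where the stronger structure of Assum.~\ref{cond-alt-ns} will pay off most.
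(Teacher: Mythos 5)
Your proposal is correct and follows essentially the same route as the paper's proof: show $\{\zeta_n^o\}$ is a square-integrable martingale with $\E[\|\hat M^o_{k+1}\|^2 \mid \F_k] \leq K_F$ from \eqref{eq-alt-prf2} and $\|\tl\Lambda_k\|$ deterministically bounded, conclude $\sum_k \E[\|\zeta^o_{k+1}-\zeta^o_k\|^2 \mid \F_k] \lesssim \sum_k \tl\alpha_k^2 < \infty$ a.s.\ via \eqref{eq-stepsize}, and invoke \cite[Prop.\ VII-2-3(c)]{Nev75}. (The only nit: your display's first step should be $\leq$ rather than $=$, since $\|\tl\Lambda_k \hat M^o_{k+1}\| \leq \|\tl\Lambda_k\|\,\|\hat M^o_{k+1}\|$ is an inequality, but this is immaterial.)
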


\begin{proof}
Since, for all $k \geq 0$, the stepsizes $\tl \alpha_k$ and the entries of $\tl \Lambda_k$ are bounded by deterministic constants, it follows from (\ref{eq-alt-prf2}) that $(\xi_n^o, \F_n)$ is a square-integrable martingale and moreover, 
$$\sum_{n=0}^\infty \E \left[ \| \xi^o_{n+1} - \xi^o_n \|^2 \mid \F_n \right] \leq 
\sum_{n=0}^\infty \tl \alpha^2_n \| \tl \Lambda_n \|^2 \, \E \left[ \| \hat M^o_{n+1} \|^2 \mid \F_n \right] 
\leq K_F \sum_{n=0}^\infty \tl \alpha_n^2 \| \tl \Lambda_n \|^2 < \infty, \ \ \ a.s.$$
(since $\sum_n \tl \alpha_n^2 < \infty$ a.s.).
Then by \cite[Prop.\ VII-2-3(c)]{Nev75}, almost surely, $\xi_n^o$ converges in $\R^d$.
\end{proof} 
 
\begin{mylemma} \label{lem-alt2}
$\sup_{n \geq 0} \sup_{t \in [T_n, T_{n+1}]} \| \hat x^n(t) \|  < \infty$ a.s.
\end{mylemma}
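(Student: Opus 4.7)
The plan is to apply a discrete Gronwall argument on each interval $[T_n, T_{n+1}]$, using the martingale convergence from Lem.~\ref{lem-alt1} to absorb the noise contribution into a shifted sequence whose deterministic-part dynamics are then amenable to a comparison bound. The key observation driving the simplification (relative to the proof in Sec.~\ref{sec-stab-prf1}) is that the difference $\hat M_{k+1} - \hat M^o_{k+1}$ is Lipschitz in $\hat x^n(\tilde t(k))$ by \eqref{eq-alt-prf3}, so absorbing $\hat M^o_{k+1}$ via a substitution leaves only terms that are linear in the iterate, which is exactly what Gronwall requires.

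Concretely, I would set $S_k := \sum_{i=m(n)}^{k-1}\tilde\alpha_i\tilde\Lambda_i\hat M^o_{i+1}$ for $k \in [m(n),m(n+1)]$ (with $S_{m(n)} = 0$) and $y_k := \hat x^n(\tilde t(k)) - S_k$. From Lem.~\ref{lem-alt1}, $\{\zeta^o_k\}$ converges a.s., so $D := 2\sup_k \|\zeta^o_k\|$ is a.s.\ finite and $\sup_n\sup_{k\in[m(n),m(n+1)]}\|S_k\| \le D$. Rewriting \eqref{eq-alt-hx} as
$$y_{k+1} = y_k + \tilde\alpha_k\tilde\Lambda_k\bigl(h_{r(n)}(y_k+S_k) + (\hat M_{k+1}-\hat M^o_{k+1}) + \hat\epsilon_{k+1}\bigr)$$
and invoking Assum.~\ref{cond-h}(i) (which gives $\|h_{r(n)}(x)\| \le \|h(0)\| + L_h\|x\|$), the bound \eqref{eq-alt-prf3}, Assum.~\ref{cond-ns}(ii) on $\hat\epsilon_{k+1}$, and $\|\tilde\Lambda_k\| \le 1$, I would obtain a recursion of the form
$$\|y_{k+1}\| \le (1 + \tilde\alpha_k c_1)\|y_k\| + \tilde\alpha_k c_2,$$
where $c_1, c_2$ are a.s.-finite, sample-path-dependent constants depending only on $L_h$, $L_F$, $D$, $\|h(0)\|$, and $\sup_k \delta_k$ (the last being a.s.\ finite by Assum.~\ref{cond-ns}(ii) and $\delta_k \to 0$).

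The discrete Gronwall inequality then yields, for $k \in [m(n), m(n+1)]$,
$$\|y_k\| \le e^{c_1(T_{n+1}-T_n)}\bigl(\|y_{m(n)}\| + c_2(T_{n+1}-T_n)\bigr).$$
Uniformity in $n$ comes from two facts built into the construction: $\|y_{m(n)}\| = \|\hat x^n(T_n)\| = \|x_{m(n)}\|/r(n) \le 1$ by the definition of $r(n)$, and $T_{n+1} - T_n \le T + \sup_n \tilde\alpha_n < \infty$ a.s. Combining $\|\hat x^n(\tilde t(k))\| \le \|y_k\| + \|S_k\| \le \|y_k\| + D$ and extending to $t\in[T_n,T_{n+1}]$ via linear interpolation (which cannot increase the sup-norm beyond the endpoint values of the discrete trajectory) gives the claimed uniform bound.

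The main obstacle is the auxiliary construction itself: $\hat M^o_{k+1}$ must be designed so that $r(\ell(k))$ is $\F_k$-measurable (which is why $\ell(k)$ is defined via the strict inequality $T_\ell \le \tilde t(k)$), ensuring that $\zeta^o_n$ is a genuine martingale with conditional second moment bounded by the \emph{deterministic} constant $K_F$ of \eqref{eq-alt-prf0}---this is what makes Lem.~\ref{lem-alt1} available and sidesteps the stopping-time machinery of Sec.~\ref{sec-stab-prf1}. Once this decomposition is in place, the remainder is a routine Gronwall estimate.
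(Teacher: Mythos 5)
Your proposal is correct and follows essentially the same route as the paper: both decompose the martingale noise $\hat M_{k+1}$ into the bounded part $\hat M^o_{k+1}$ controlled by Lem.~\ref{lem-alt1}, treat the remainder via the Lipschitz bound \eqref{eq-alt-prf3} and the $\hat\epsilon$ bound from Assum.~\ref{cond-ns}(ii), and close with the discrete Gronwall inequality using $\|\hat x^n(T_n)\| \leq 1$ and $T_{n+1}-T_n$ uniformly bounded. The change of variables $y_k = \hat x^n(\tilde t(k)) - S_k$, yielding a one-step affine recursion, is a presentational variant of the paper's direct term-by-term bounding of the cumulative sum before applying Gronwall; the two forms of the estimate are equivalent.
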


\begin{proof}
As in \cite[Lem.\ 4.5]{Bor23}, we will show that $\sup_{t \in [T_n, T_{n+1}]} \| \hat x^n(t) \|$ can be bounded by a number independent of $n$. For each $n \geq 0$, using (\ref{eq-alt-hx}), we have that for $k$ with $m(n) \leq k < m(n+1)$,
$$ \hat x^n(\tl t(k+1))  =   \hat x^n(\tl t(m(n)))  + \sum_{i=m(n)}^{k} \tl \alpha_i \tl \Lambda_i h_{r(n)}(\hat x^n(\tl t(i))) + 
\sum_{i=m(n)}^{k} \tl \alpha_i \tl \Lambda_i \hat M_{i+1} + \sum_{i=m(n)}^{k} \tl \alpha_i \tl \Lambda_i \hat \epsilon_{i+1}.$$
Similarly to the proof of \cite[Lem.\ 4.5]{Bor23}, we proceed to bound $\| \hat x^n(\tl t(k+1))\|$ by bounding the norm of each term on the r.h.s.\ of the above equation. By the definition of $\hat x(\cdot)$, we have $\| \hat x^n(\tl t(m(n)) \| \leq 1$. Using the Lipschitz continuity of $h_c$ (Assum.~\ref{cond-h}) and the fact that $\sup_{i \geq 0} \| \tl \Lambda_i\|\leq \tl C$ for some deterministic constant $\tl C$, we can bound the norm of the second term by $\sum_{i=m(n)}^{k} \tl \alpha_i \tl C ( \| h(0)\| + L_h \| \hat x^n(\tl t(i))\| )$. For the forth term, by Assum.~\ref{cond-ns}(ii), we have $\|\hat \epsilon_{i+1} \| = \|  \epsilon_{i+1} \|/r(n) \leq \delta_{i+1} (1 + \| \hat x^n(\tl t(i))\| )$, so $\left\| \sum_{i=m(n)}^{k} \tl \alpha_i \tl \Lambda_i \hat \epsilon_{i+1} \right\| \leq \sum_{i=m(n)}^{k} \tl \alpha_i \tl C B_\delta (1 + \| \hat x^n(\tl t(i))\|)$, where $B_\delta \= \sup_{i \geq 1} \delta_i < \infty$ a.s..  
For the third term, we use (\ref{eq-alt-prf3}) and Lem.~\ref{lem-alt1} to obtain
\begin{align*}
    \left\| \sum_{i=m(n)}^{k} \tl \alpha_i \tl \Lambda_i \hat M_{i+1}  \right\| & \leq 
     \left\| \sum_{i=m(n)}^{k} \tl \alpha_i \tl \Lambda_i \hat M^o_{i+1} \right\| +  
       \sum_{i=m(n)}^{k} \tl \alpha_i  \| \tl \Lambda_i \|  \left\| \hat M_{i+1} - \hat M^o_{i+1} \right\|  \\
       & \leq \| \xi^o_{k+1} - \xi^o_{m(n)} \| +  \sum_{i=m(n)}^{k} \tl \alpha_i  \tl C  \cdot L_F\| \hat x^n(\tl t(i)) \| \\
       & \leq 2 B + L_F \tl C \!\sum_{i=m(n)}^{k} \tl \alpha_i  \| \hat x^n(\tl t(i)) \|,
 \end{align*}  
 where $B \= \sup_i \| \xi^o_i\| < \infty$ a.s.\ (Lem.~\ref{lem-alt1}). Observe also that by the definitions of $m(n)$, $m(n+1)$, and $\tl \alpha_i$, we have
$\sum_{i=m(n)}^{m(n+1) -1} \tl \alpha_i < T + \tl \alpha_{m(n+1) -1} \leq T + d \bar \alpha$,  where $\bar \alpha \= \sup_j \alpha_j < \infty$.
By combining the preceding derivations, we obtain
 $$
 \| \hat x^n(\tl t(k+1)) \| \leq 1 +  2 B +  \tl C (T+d \bar \alpha) (\| h(0) \| + B_\delta) + \tl C(L_h + L_F + B_\delta) \!\sum_{i=m(n)}^{k} \tl \alpha_i \| \hat x^n(\tl t(i))\|.
 $$
 Then by the discrete Gronwall inequality \cite[Lem.\ B.2]{Bor23}, for all $k$ with $m(n) \leq k < m(n+1)$,
 $$  \| \hat x^n(\tl t(k+1)) \| \leq \left(1 +  2 B +  \tl C (T+d \bar \alpha) (\| h(0) \| + B_\delta) \right) e^{\tl C (L_h+ L_F + B_\delta) (T+d \bar \alpha)}.$$
This shows that almost surely, $\sup_{t \in [T_n, T_{n+1}]} \| \hat x(t) \|$ can be bounded by a finite (random) number independent of $n$, and therefore, $\sup_{n \geq 0} \sup_{t \in [T_n, T_{n+1}]} \| \hat x(t) \| < \infty$ a.s.
\end{proof}
 
With Lem.~\ref{lem-alt2}, we have established the boundedness of the scaled trajectory $\hat x(\cdot)$. This has the following implication, which will be needed shortly in relating $\{\hat x^n(\cdot)\}$ to ODE solutions:

\begin{mylemma} \label{lem-alt3}
Almost surely, as $n \to \infty$, $\hat \epsilon_n \to 0$, and $\xi_n \= \sum_{k=0}^{n-1} \tl \alpha_k \tl \Lambda_k \hat M_{k+1}$ converges in $\R^d$.
\end{mylemma}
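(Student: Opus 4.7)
\textbf{Proof proposal for Lem.~\ref{lem-alt3}.} The plan is to handle the two claims separately, using Lem.~\ref{lem-alt2} (boundedness of $\hat x(\cdot)$) as the essential input in both cases.

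For $\hat \epsilon_n \to 0$, I would simply unscale the bound from Assum.~\ref{cond-ns}(ii). For $k$ with $m(n) \le k < m(n+1)$,
\[
\|\hat \epsilon_{k+1}\| \,=\, \|\epsilon_{k+1}\|/r(n) \,\le\, \delta_{k+1}\bigl(1/r(n) + \|\hat x^n(\tl t(k))\|\bigr).
\]
Since $r(n)\ge 1$ and $\sup_n \sup_{t\in[T_n,T_{n+1}]}\|\hat x^n(t)\| < \infty$ a.s.\ by Lem.~\ref{lem-alt2}, the bracket is a.s.\ bounded by a sample path-dependent constant $1+B$, while $\delta_{k+1}\to 0$ a.s.\ by Assum.~\ref{cond-ns}(ii). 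Hence $\hat\epsilon_n\to 0$ a.s.

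For convergence of $\zeta_n$, my plan is to write $\zeta_n = \zeta_n^o + (\zeta_n - \zeta_n^o)$ and use Lem.~\ref{lem-alt1} for the first summand. For the second, note that with $\ell(k)=\max\{\ell: m(\ell)\le k\}$, $r(\ell(k))$ is $\F_k$-measurable (since $m(\ell(k))\le k$ and $m(\cdot)$ are stopping times), and $\zeta_{k+1}$ is independent of $\F_k$, so both $\hat M_{k+1}$ and $\hat M^o_{k+1}$ are $\F_k$-martingale differences, as is their difference $\hat D_{k+1}\=\hat M_{k+1}-\hat M^o_{k+1}$. The bound \eqref{eq-alt-prf3} gives $\|\hat D_{k+1}\|\le L_F\|\hat x^{\ell(k)}(\tl t(k))\|$, which is only a.s.\ bounded, not bounded by a deterministic constant. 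This is the main obstacle, and I would resolve it with a stopping-time truncation in the spirit of the proof of Lem.~\ref{lem-cvg-3}.

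Concretely, for each integer $N\ge 1$ let $\tau_N\=\inf\{k\ge 0:\|\hat x^{\ell(k)}(\tl t(k))\|>N\}$ (a stopping time w.r.t.\ $\{\F_k\}$), and consider the truncated martingale
\[
\zeta_n^{(N)} \= \sum_{k=0}^{n-1} \ind\{k<\tau_N\}\,\tl \alpha_k\tl\Lambda_k\,\hat D_{k+1}.
\]
Because $\tl\alpha_k,\tl\Lambda_k$ are $\F_k$-measurable (as $Y_k$ is) with $\|\tl\Lambda_k\|\le \tl C$, and $\|\hat D_{k+1}\|\le L_F N$ on $\{k<\tau_N\}$, the conditional quadratic variation satisfies
\[
\sum_{k=0}^{\infty}\E\!\left[\|\zeta_{k+1}^{(N)}-\zeta_{k}^{(N)}\|^2\,\bigm|\,\F_k\right] \,\le\, (\tl C L_F N)^2\sum_{k=0}^{\infty}\tl\alpha_k^2 \,<\,\infty\ \ \text{a.s.},
\]
so Prop.\ VII-2-3(c) of Neveu gives a.s.\ convergence of $\zeta_n^{(N)}$ for every $N$. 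By Lem.~\ref{lem-alt2}, almost every sample path has $\sup_k\|\hat x^{\ell(k)}(\tl t(k))\|\le N(\omega)$ for some finite $N(\omega)$, so $\tau_{N(\omega)}=\infty$ and $\zeta_n-\zeta_n^o \equiv \zeta_n^{(N(\omega))}$; hence $\zeta_n-\zeta_n^o$ converges a.s. Combining with Lem.~\ref{lem-alt1} yields a.s.\ convergence of $\zeta_n$ in $\R^d$, completing the proof.
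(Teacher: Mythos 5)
Your proof of $\hat\epsilon_n \to 0$ is the same as the paper's (the bound $\|\hat\epsilon_{k+1}\| \le \delta_{k+1}(1+\|\hat x^n(\tl t(k))\|)$ plus Lem.~\ref{lem-alt2}).

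For the convergence of $\zeta_n$, your argument is correct but takes a modestly different route. You decompose $\zeta_n = \zeta_n^o + (\zeta_n - \zeta_n^o)$, invoke Lem.~\ref{lem-alt1} for $\zeta_n^o$, and then truncate the \emph{difference} $\hat D_{k+1} = \hat M_{k+1} - \hat M^o_{k+1}$, using the pointwise Lipschitz bound \eqref{eq-alt-prf3} so that on $\{k < \tau_N\}$ the martingale increments are bounded by the deterministic constant $\tl C L_F N$. The paper instead truncates $\hat M_{k+1}$ directly (defining $\hat M^{(N)}_{k+1} = \ind\{k < \tau_N\} \hat M_{k+1}$) and controls the quadratic variation via the conditional second-moment bound \eqref{eq-alt-prf0b}, i.e.\ $\E[\|\hat M^{(N)}_{k+1}\|^2 \mid \F_k] \le K_F(1 + N^2)$, without re-using Lem.~\ref{lem-alt1}. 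Both routes use the same stopping time $\tau_N$ (your $\ell(k)$ coincides with the paper's, since $T_\ell \le \tl t(k) \Leftrightarrow m(\ell) \le k$) and both conclude via Lem.~\ref{lem-alt2} and \cite[Prop.\ VII-2-3(c)]{Nev75}. Your version yields a bounded-increment martingale (arguably cleaner) at the cost of passing through $\zeta_n^o$; the paper's is slightly more self-contained. Neither has an edge in difficulty or generality.
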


\begin{proof}
By the definition of $\{\hat \epsilon_k\}$ and Assum.~\ref{cond-ns}(ii), we have that for $m(n) \leq k < m(n+1)$, $\|\hat \epsilon_{k+1} \| = \|  \epsilon_{k+1} \|/r(n) \leq \delta_{k+1} (1 + \| \hat x^n(\tl t(k))\|)$, where $\delta_{k} \to 0$ a.s., as $k \to \infty$. 
By Lem.~\ref{lem-alt2}, this implies $\hat \epsilon_k \to 0$ a.s., as $k \to \infty$.

The proof of the a.s.\ convergence of $\{\xi_n\}$ is similar to the proof of Lem.~\ref{lem-cvg-3} in Sec.~\ref{sec-cvg}. Specifically, for integers $N \geq 1$, we define stopping times $\tau_N$ and auxiliary variables ${\hat M}^{(N)}_k$ by
\begin{align*}
  \tau_N  & \= \, \min \left\{ k \geq 0 \, \big|\,  \| \hat x^n(\tl t(k))\| > N, \, m(n) \leq k < m(n+1), \, n \geq 0 \right\},  \\
 {\hat M}^{(N)}_{k+1} & \= \, \ind \{ k < \tau_N \} \hat M_{k+1},  \quad k \geq 0.
\end{align*} 
Using Assum.~\ref{cond-alt-ns}, \eqref{eq-alt-prf0b}, and the definition of $\hat M_{k+1}$, we have that for each $N$,  $\{{\hat M}^{(N)}_k \}_{k \geq 1}$ is a martingale difference sequence with 
$$\E[ \| {\hat M}^{(N)}_{k+1} \|^2 \!\mid\! \F_k ] \leq \ind \{ k < \tau_N \} \cdot  K_F  ( 1 +  \| \hat x^{n_k}(\tl t(k))\|^2) \leq  K_F (1 + N^2 ),$$
where $n_k$ is such that $m(n_k) \leq k < m(n_k+1)$ (more specifically, $n_k$ is given by $n_k \= \max \{ \ell \geq 0: T_\ell \leq \tl t(k) \}$ and thus $\F_k$-measurable).
As in the proof of Lem.~\ref{lem-cvg-3}, it then follows that the sequence $\{ \xi^{(N)}_n\}_{n \geq 0}$ given by $\xi^{(N)}_n \= \sum_{k=0}^{n-1} \tl \alpha_k \tl \Lambda_k {\hat M}^{(N)}_{k+1}$ with $\xi^{(N)}_0 \= 0$ is a square-integrable martingale and converges a.s.\ in $\R^{d}$ by \cite[Prop.\ VII-2-3(c)]{Nev75}. Since $\sup_{n \geq 0} \sup_{t \in [T_n, T_{n+1}]} \| \hat x^n(t) \|  < \infty$ a.s.\ by Lem.~\ref{lem-alt2}, the definitions of $\tau_N$ and $\{{\hat M}^{(N)}_k\}$ imply that almost surely, $\{\xi_n\}_{n \geq 1}$ coincides with $\{\xi^{(N)}_n\}_{n \geq 1}$ for some sample path-dependent value of $N$. This leads to the a.s.\ convergence of $\{\xi_n\}$ in $\R^{d}$.
\end{proof}

Using Lems.~\ref{lem-alt2} and \ref{lem-alt3}, we can follow the same proof steps of \cite[Lem.\ 2.1]{Bor23} to obtain
\begin{equation}
   \lim_{n \to \infty} \sup_{t \in [T_n, T_{n+1}]} \left\| \hat x^n(t) - x^n(t) \right\| = 0 \ \ \ a.s.,
\end{equation}   
where $x^n(\cdot)$ is redefined in this case to be the unique solution of the ODE associated with the scaled function $h_{r(n)}$ and the piecewise constant trajectory $\tl \lambda(\cdot) \in \tilde \Upsilon$ given by (\ref{eq-tlambda}) in Sec.~\ref{sec-prel-ana}:
$$ \dot{x}(t) = \tl \lambda(t) \, h_{r(n)} (x(t))  \ \ \  \text{with} \ x^n(T_n) = \hat x(T_n) = x_{m(n)}/r(n).$$
 
From this point forward, we can argue similarly to Sec.~\ref{sec-stab-prf2} to establish the a.s.\ boundedness of the iterates $\{x_n\}$ from algorithm (\ref{eq-alg0}). Since, in this case, as $t \to \infty$, $\tl \lambda(t + \cdot)$ converges in $\tl \Upsilon$ to the unique limit point $\bar \lambda(\cdot) \equiv \tfrac{1}{d} I$ (Lem.~\ref{lem-cvg-2}), there is no need to consider multiple limit points as in Sec.~\ref{sec-stab-prf2}. Consequently, the proof arguments involved are slightly simpler.
\vspace*{-0.1cm}

\section*{Acknowledgements}
\vspace*{-0.3cm}

We thank Prof.\ Eugene Feinberg for helpful discussion on average-reward SMDPs and Dr.\ Martha Steenstrup for critical feedback on parts of our earlier draft. 
We also thank an anonymous reviewer for valuable comments, particularly suggesting the potential for sharper convergence results, which motivated our study of shadowing properties and led us to strengthen the analysis.
In preparing this paper, we used OpenAI ChatGPT (GPT-4 and GPT-5) to refine the writing style.

\addcontentsline{toc}{section}{References} 
\bibliographystyle{apa} 
\let\oldbibliography\thebibliography
\renewcommand{\thebibliography}[1]{%
  \oldbibliography{#1}%
  \setlength{\itemsep}{0pt}%
}
{\fontsize{9}{11} \selectfont
\bibliography{asyn_sa_arXiv3a.bib}}

\end{document}